\setlist{parsep = -0em, itemsep = 0.25em}
\newcommand{\N}{\mathbb{N}}
\newcommand{\A}{\mathcal{A}}
\renewcommand{\AA}{\mathcal{A}_{0}}
\newcommand{\diff}{*}
\renewcommand{\P}{\mathcal{P}}
\newcommand{\D}{D}
\renewcommand{\P}{\mathcal{P}}
\newcommand{\syml}{\text{Sym}_L}
\newcommand{\symlt}{\text{Sym}_L^2}
\newcommand{\comp}{\eta}
\newcommand{\btheta}{\boldsymbol{\theta}}
\newcommand{\V}{V}
\renewcommand{\S}{{S^1}}
\newcommand{\ind}{\mathbbm{1}}
\newcommand{\tp}{p}
\newcommand{\bp}{\bold{p}}
\newcommand{\tbp}{\bold{p}}
\newcommand*\conj[1]{\overline{#1}}
\newtheorem{theorem}{Theorem}[section]
\newtheorem{lemma}[theorem]{Lemma}
\newtheorem{definition}[theorem]{Definition}
\newtheorem{assumption}[theorem]{Assumption}
\newtheorem{remark}{Remark}
\title{Exponential Separations in Symmetric Neural Networks}
\author[a]{Aaron Zweig}
\author[a,b]{Joan Bruna}
\affil[a]{Courant Institute of Mathematical Sciences, New York
  University, New York}
\affil[b]{Center for Data Science, New York University}
\begin{document}

\maketitle

\begin{abstract}
  In this work we demonstrate a novel separation between symmetric neural network architectures.  Specifically, we consider the Relational Network~\parencite{santoro2017simple} architecture as a natural generalization of the DeepSets~\parencite{zaheer2017deep} architecture, and study their representational gap. Under the restriction to analytic activation functions, we construct a symmetric function acting on sets of size $N$ with elements in dimension $D$, which can be efficiently approximated by the former architecture, but provably requires width exponential in $N$ and $D$ for the latter. 
\end{abstract}

\section{Introduction}

The modern success of deep learning can in part be attributed to architectures that enforce appropriate invariance.  Invariance to permutation of the input, i.e. treating the input as an unordered set, is a desirable property when learning \emph{symmetric} functions in such fields as particle physics and population statistics.  The simplest architectures that enforce permutation invariance treat each set element individually without allowing for interaction, as captured by the popular \emph{DeepSet} model ~\parencite{zaheer2017deep,qi2017pointnet}.

Several architectures explicitly enable interaction between set elements, the simplest being the Relational Networks~\parencite{santoro2017simple} that encode pairwise interaction.  This may be understood as an instance of \emph{self-attention}, the mechanism underlying Transformers~\parencite{vaswani2017attention}, which have emerged as powerful generic neural network architectures to process a wide variety of data, from image patches to text to physical data.  Specifically, Set Transformers \parencite{lee2019set} are special instantiations of Transformers, made permutation equivariant by omitting positional encoding of inputs, and using self-attention for pooling.  

Both the DeepSets and Relational Networks architectures are universal approximators for the class of symmetric functions.  But empirical evidence suggests an inherent advantage of symmetric networks using self-attention in synthetic settings ~\parencite{murphy2018janossy}, on point cloud data~\parencite{lee2019set} and in quantum chemistry~\parencite{pfau2020ab}.  In this work, we formalize this question in terms of approximation power, and explicitly construct symmetric functions which provably require exponentially-many neurons in the DeepSets model, yet are efficiently approximated with self-attention.

This exponential separation bears notable differences from typical separation results.  In particular, while the expressive power of a vanilla neural network is characterized by depth and width, expressiveness of symmetric networks is controlled particularly by \emph{symmetric width}.  In contrast to depth separations of vanilla neural networks~\parencite{eldan2016power}, in this work we observe width separations, where the weaker architectures (even with arbitrary depth) require exponential symmetric width to match the expressive power of stronger architectures.

\paragraph{Summary of Contributions}  In this work: 
\begin{itemize}
    \item We demonstrate a \emph{width separation} between the DeepSets and Relational Network architectures, where the former requires symmetric width $L \gg poly(N, D)$ to approximate a family of analytic symmetric functions, while the latter can approximate with polynomial efficiency.  This also answers an open question of high-dimensional DeepSets representation posed in~\textcite{wagstaff2022universal}
    \item We introduce an extension of the Hall inner product to high dimensions that preserves low-degree orthogonality of multisymmetric powersum polynomials, which may be of independent interest.
\end{itemize}

\section{Setup and Main Result}\label{sec:setup}

\subsection{Symmetric Architectures}

\begin{figure*}[ht]
\centering

\begin{subfigure}[t]{.5\textwidth}
  \centering
  \includegraphics[width=.95\linewidth]{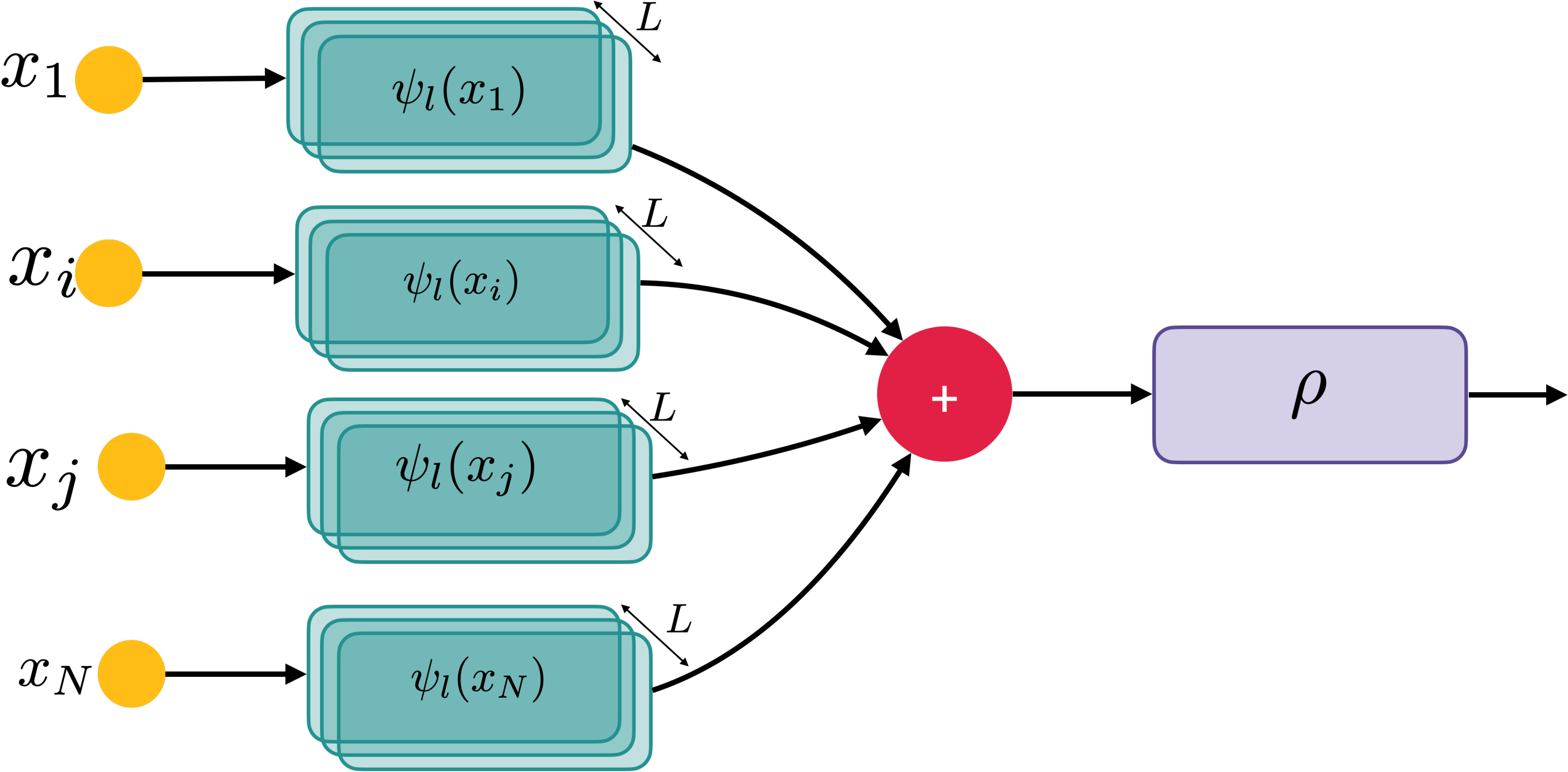}
  \caption{DeepSets with symmetric width $L$}
\end{subfigure}%
\begin{subfigure}[t]{.5\textwidth}
  \centering
  \includegraphics[width=.95\linewidth]{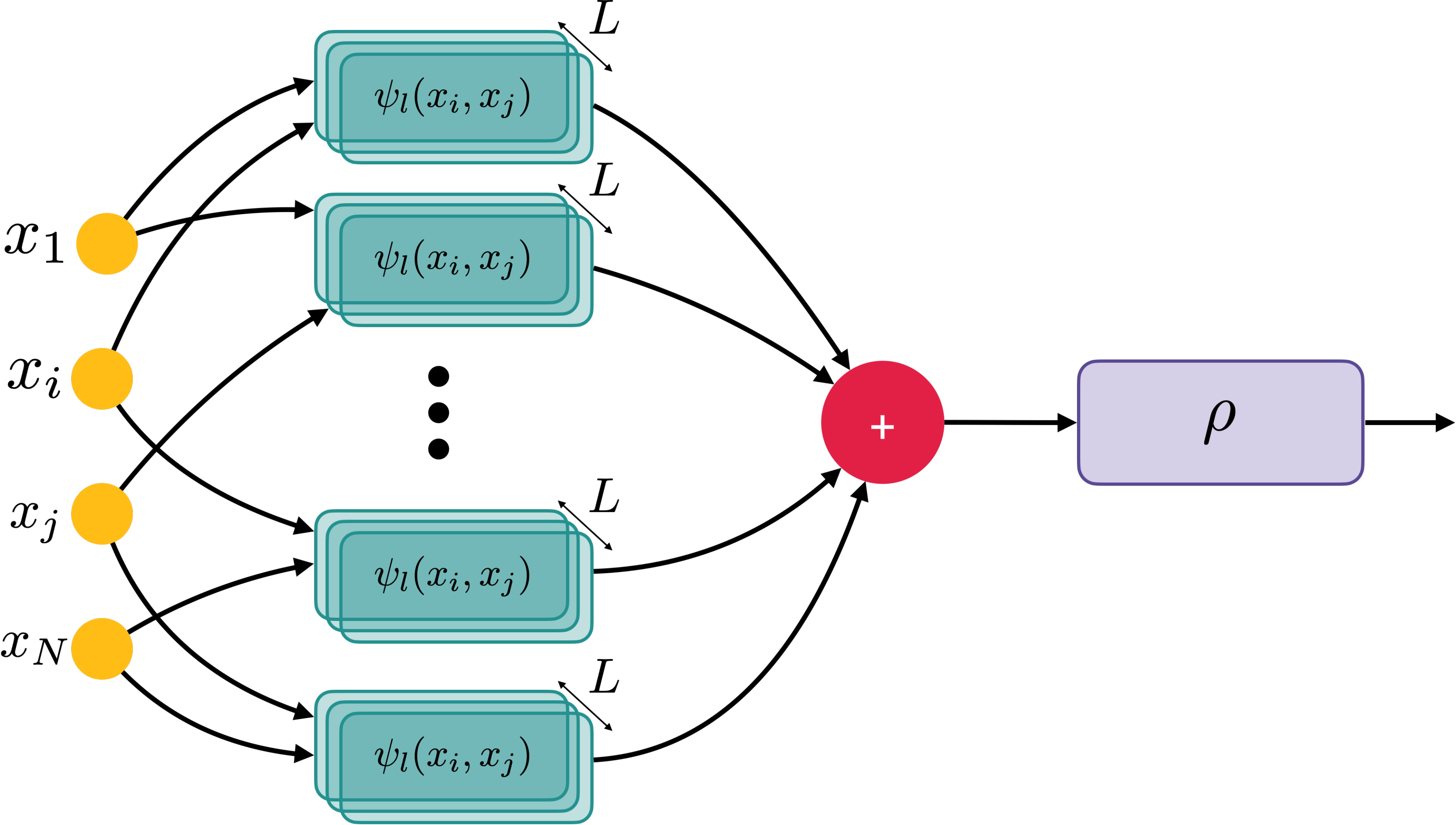}
  \caption{Relational Network with symmetric width $L$}
\end{subfigure}%

\caption{Architectural diagram for $\syml$ (left) and $\symlt$ (right)}
\label{fig:diagram}
\end{figure*}

To introduce the symmetric architectures, we must first characterize how to treat sets as inputs.  We will consider sets of size $N$, where each element of the set is a vector of dimension $D$.  In particular, we will represent a set as a matrix $X \in \mathbb{C}^{D \times N}$.  Thus, each column vector $x_n \in \mathbb{C}^D$ is an element of the  set.  Note that we consider complex-valued inputs because the natural inner product over symmetric polynomials integrates over the complex unit circle, see \textcite{macdonald1998symmetric} or Theorem~\ref{thm:hall-inner-product}.

A function $f: \mathbb{C}^{D \times N} \rightarrow \mathbb{C}$ is \emph{symmetric} if $f(X) = f(X\Pi)$ for any permutation matrix $\Pi \in  \mathbb{R}^{N\times N}$, i.e. if $f$ is invariant to permuting the columns of $X$. In other words, a symmetric function treats the input $X$ as an unordered set of column vectors.  Given the \emph{symmetric width} parameter $L$, we consider two primary symmetric architectures:
\begin{definition}
    Let $\syml$ denote the class of \emph{singleton symmetric networks} with symmetric width $L$, i.e. functions $f$ of the form:
\begin{align}\label{eq:symnn}
    f(X) & = \rho(\phi_1(X), \dots, \phi_L(X)) \\
    \phi_l(X) & = \sum_{n=1}^N \psi_l(x_n)
\end{align}
where $\{\psi_l: \mathbb{C}^D \rightarrow \mathbb{C}\}_{l=1}^L$ and $\rho: \mathbb{C}^L \rightarrow \mathbb{C}$ are arbitrary neural networks with analytic activations.

\end{definition}

The class $\syml$ is exactly the architecture of DeepSets~\parencite{zaheer2017deep} restricted to analytic activations.  However, we introduce this notation to differentiate this class from the more expressive architectures that allow for pairwise interaction among set elements.

From the theory of symmetric polynomials, if $L \geq L^* := \binom{N+D}{N} - 1$, then $f \in \syml$ is a universal approximator for any analytic symmetric function~\parencite{rydh2007minimal}.  Therefore we will primarily be interested in the expressive power of $\syml$ for $L < L^*$.

\begin{definition}
    Let $\symlt$ denote the class of \emph{pairwise symmetric networks} with symmetric width $L$, i.e. functions $f$ of the form:
\begin{align}\label{eq:psymnn}
    f(X) & = \rho(\phi_1(X), \dots, \phi_L(X)) \\
    \phi_l(X) & = \sum_{n,n'=1}^N\psi_l(x_n, x_{n'})
\end{align}
where $\{\psi_l: \mathbb{C}^{D \times D} \rightarrow \mathbb{C}\}_{l=1}^L$ and $\rho: \mathbb{C}^L \rightarrow \mathbb{C}$ are arbitrary neural networks with analytic activations.

\end{definition}

Similarly, the class $\symlt$ is exactly the architecture of Relational Pooling~\parencite{santoro2017simple} with analytic activations.  We note this architecture is also equivalent to the $2$-ary instantiation of Janossy Pooling~\parencite{murphy2018janossy}.

\subsection{Main Result}

Our main result demonstrates an exponential separation, where $\syml$ requires exponentially large symmetric width $L$ to match the expressive power of the class $\symlt$ for $L = 1$.  We choose norms to make this separation as prominent as possible: there is a hard function that can be approximated in $\symlt$ in the infinity norm, but cannot be approximated in $\syml$ even in an appropriately chosen $L_2$ norm with respect to some non-trivial data distribution.

We require one activation assumption to realize the $\symlt$ approximation:

\begin{assumption}\label{ass:act}
    The activation $\sigma : \mathbb{C} \rightarrow \mathbb{C}$ is analytic, and for a fixed $D, N$ there exist two-layer neural networks $f_1, f_2$ using $\sigma$, both with $O\left(D^2 + D \log \frac{D}{\epsilon}\right)$ width and $O(D \log D)$ bounded weights, such that:
        \begin{align}
        \sup_{|\xi| \leq 3} |f_1(\xi) - \xi^2| \leq \epsilon, \qquad
        \sup_{|\xi| \leq 3} \left|f_2(\xi) - \left(1 - (\xi/4)^{\min(D, \sqrt{N}/2)}\right) \frac{\xi - 1/4}{\xi/4 - 1} \right| \leq \epsilon
    \end{align}
\end{assumption}

Essentially this assumption guarantees that networks built with the analytic activation $\sigma$ are able to efficiently approximate the map $\xi \rightarrow \xi^2$, and, a truncated form of the finite Blaschke product\parencite{garnett2007bounded} with one zero at $\xi = 4$.  We show in Lemma~\ref{lem:expnet-epsilon} that the $\exp$ activation satisfies this assumption. 

\begin{theorem}[Exponential width-separation]\label{thm:main-result-body}
    Fix $N$ and $D > 1$, and a non-trivial data distribution $\mu$ on $D \times N$ copies of the unit complex circle $(\S)^{D \times N}$.
    
    Then there exists an analytic symmetric function $g: \mathbb{C}^{D \times N} \rightarrow \mathbb{C}$ such that $\|g\|_{L_2(\mu)} = 1$ and:
    \begin{itemize}
        \item For $L \leq N^{-2} \exp(O(\min(D, \sqrt{N}))$,
        \begin{align}
            \min_{f \in \syml} \|f - g\|_{L_2(\mu)}^2 \geq \frac{1}{12} ~.
        \end{align}
        \item There exists $f \in \symlt$ with $L = 1$, parameterized with an activation $\sigma$ that satisfies Assumption~\ref{ass:act}, with width $poly(N,D,1/\epsilon)$, depth $O(\log D)$, and max weight $O(D \log D)$ such that over $(\S)^{D\times N}$:
        \begin{align}
            \|f - g\|_\infty \leq \epsilon
        \end{align}

    \end{itemize}
\end{theorem}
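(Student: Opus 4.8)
The plan is to take $g = c_\mu\,(B_m\circ q)$, where $q\colon\mathbb{C}^{D\times N}\to\mathbb{C}$ is a fixed symmetric polynomial of degree $\Theta(m)$, $m:=\min(D,\sqrt N/2)$, realized as a \emph{single pairwise feature} $q(X)=\sum_{n,n'}\psi(x_n,x_{n'})$ with $\psi$ a low-degree polynomial, while having no compact description through a bounded number of singleton features; $B_m$ is the affinely reparametrized truncated finite Blaschke product of Assumption~\ref{ass:act} (its single zero placed so that the image of $q$ on $(\S)^{D\times N}$ lands in the disk $|\xi|\le 3$ where $f_2$ approximates it); and $c_\mu>0$ normalizes $\|g\|_{L_2(\mu)}=1$. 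The Blaschke factor plays two roles: it keeps $B_m\circ q$ uniformly bounded on the torus, so $\|f-g\|_\infty$ is meaningful, and its power series $B_m(\xi)=\sum_k b_k\xi^k$ has a slowly decaying tail of coefficients out to order $k\asymp m$, which is what spreads $g$ over exponentially many orthogonal invariants. To make this precise I would expand $q^1,\dots,q^m$ in the basis of multisymmetric power sums $\{p_\lambda\}$ and invoke the high-dimensional Hall inner product of Theorem~\ref{thm:hall-inner-product}, whose orthogonality of the $p_\lambda$ up to the degrees in play certifies that $g$ carries $\Omega(1)$ of its $L_2$-mass (over the torus) on an orthonormal family $\mathcal S$ with $|\mathcal S|\ge\exp(\Omega(\min(D,\sqrt N)))$.

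\textbf{Upper bound ($\symlt$, $L=1$).} Since $q$ is a fixed pairwise polynomial of degree $\Theta(m)\le\Theta(D)$, I would implement it to accuracy $\epsilon$ by one channel $\phi_1(X)=\sum_{n,n'}\psi_1(x_n,x_{n'})$: Assumption~\ref{ass:act} supplies a two-layer $\sigma$-network for $\xi\mapsto\xi^2$ of width $\mathrm{poly}(D,1/\epsilon)$ and weights $O(D\log D)$, and iterating squaring $O(\log D)$ times together with linear combinations realizes every monomial up to degree $\Theta(D)$, hence $\psi_1$, with depth $O(\log D)$. The outer network $\rho$ is then just $f_2$ from Assumption~\ref{ass:act} applied to $\phi_1(X)$, i.e.\ the truncated Blaschke map $B_m$. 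Composing and propagating the $\epsilon$-errors — all intermediate quantities stay inside $|\xi|\le 3$, where the relevant maps are $O(1)$-Lipschitz — yields $\|f-g\|_\infty\le\epsilon$ after a harmless rescaling of $\epsilon$, with the stated width $\mathrm{poly}(N,D,1/\epsilon)$, depth $O(\log D)$, and max weight $O(D\log D)$.

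\textbf{Lower bound ($\syml$).} Fix $f=\rho(\phi_1,\dots,\phi_L)\in\syml$ and expand the analytic $\psi_l$ and $\rho$ into power series: collecting terms writes $f$ as a (formally infinite) combination of products of multisymmetric power sums, and the width restriction forces every monomial of $\rho$ to use only the $L$ channels $\phi_l$, so the portion of $f$ of bounded $p$-degree is "effectively supported" on only $\mathrm{poly}(L,N)$ of the orthonormal directions used to describe $g$. Truncating both $f$ and $g$ to $p$-degree $\le\Theta(m)$ (the discarded tails being uniformly negligible) and using the Hall orthogonality of Theorem~\ref{thm:hall-inner-product} to decompose $g$ along $\mathcal S$, any $f$ with $L\le N^{-2}\exp(O(\min(D,\sqrt N)))$ recovers at most a $\tfrac13$-fraction of $\|g\|^2$ — the $N^{-2}$ absorbing both the $\mathrm{poly}(L,N)$ count and the constant relating the Hall/torus inner product to $L_2(\mu)$ on the relevant finite-dimensional polynomial space, after imposing $\|g\|_{L_2(\mu)}=1$. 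Hence $\|f-g\|_{L_2(\mu)}^2\ge\|g\|_{L_2(\mu)}^2-\tfrac23\cdot(\text{mass of }g\text{ on }\mathcal S)\ge\tfrac1{12}$, having chosen the truncation of $B_m$ so that at least half of $\|g\|^2$ lies on $\mathcal S$.

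\textbf{Main obstacle.} The delicate step is the structural lemma in the lower bound: proving quantitatively that a width-$L$ singleton network — after composing the \emph{arbitrary} analytic maps $\psi_l$ and $\rho$, which can mix the $L$ channels into arbitrarily high products — behaves, against the degree-$\le\Theta(m)$ invariants, like an object of complexity only $\mathrm{poly}(L,N)$, and that this survives the passage to $L_2(\mu)$ for an arbitrary non-trivial $\mu$. I expect to handle the first part by bookkeeping how the $L$ linear functionals $(c^{(l)}_\alpha)_\alpha$ can propagate through $\rho$'s monomials, bounding the number of distinct power-sum products carrying nonzero $f$-coefficient modulo the orthogonal complement of $\mathcal S$; and the second by an equivalence-of-norms argument on the fixed finite-dimensional space of bounded-degree symmetric polynomials, whose $\mu$-dependent constant is swallowed by the normalization of $g$.
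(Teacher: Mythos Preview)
Your proposal has a genuine gap in the lower bound, and the hard function is built in the wrong shape to close it.

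\textbf{The structural lemma is not what you think.} Your claim that a width-$L$ singleton network, restricted to bounded $p$-degree, is ``effectively supported on only $\mathrm{poly}(L,N)$ orthonormal directions'' is false. Even with $L=1$, if $\phi_1=\sum_\alpha c_\alpha \bp_\alpha$ then $\rho(\phi_1)=\sum_k a_k\phi_1^k$ already populates every product $\bp_{\alpha_1}\cdots\bp_{\alpha_k}$ with $c_{\alpha_1}\cdots c_{\alpha_k}\neq 0$; the number of such directions is not $\mathrm{poly}(L,N)$. The actual constraint imposed by width $L$ is a \emph{rank} constraint, not a dimension constraint: if you project onto the span of products of \emph{exactly two} powersums, then $\P_2 f=\sum_{i\le j}v_{ij}(\P_1\phi_i)(\P_1\phi_j)$, and arranging the orthogonal coefficients $\langle \P_2 f,\bp_\alpha\bp_\beta\rangle$ into a matrix $F$ gives $F=C^TVC$ with $C\in\mathbb{C}^{L\times T}$, so $\mathrm{rank}(F)\le L$. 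The lower bound then comes from low-rank approximation in Frobenius norm of a target matrix $G$ that is diagonal with $T=\exp(\Omega(\min(D,\sqrt N)))$ comparably-sized nonzero entries. Your ``bookkeeping how the $L$ linear functionals propagate through $\rho$'s monomials'' does not recover this rank structure, because after $\rho$ acts there is no bound on the number of nonzero coefficients---only on the rank of the quadratic form.

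\textbf{Your hard function does not fit this argument.} For the rank argument to bite, $g$ must live in the span of $\{\bp_\alpha\bp_\alpha\}$ (so that $G$ is diagonal and $\P_2 g=g$). Your construction $g=B_m\circ q$ applies the Blaschke product \emph{after} the pairwise aggregation, so $g=\sum_k b_k q^k$ and $q^k$ spills over products of $2k$ powersums rather than exactly two; there is no reason $\P_2 g$ captures a constant fraction of $\|g\|^2$, and even if it did, the resulting $G$ is not diagonal. The paper instead applies the (truncated) Blaschke factor \emph{coordinate-wise inside} the pairwise sum,
\[
g(X)\;=\;\mathrm{const}+\sum_{n,n'}\prod_{d=1}^{\hat D}\hat\mu_{\hat D}(x_{dn}x_{dn'}),
\]
which expands exactly as $\sum_\alpha g_\alpha\,\bp_\alpha\bp_\alpha$ because $\sum_{n,n'}\prod_d(x_{dn}x_{dn'})^{\alpha_d}=|\alpha|\,\bp_\alpha(X)^2$. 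The Blaschke product's role is only to keep $\|g\|_\infty$ polynomial while forcing all $|g_\alpha|$ to be comparable (so $G$ is well-conditioned); it is not composed with a scalar feature. Correspondingly, in the upper bound $\rho$ is essentially the identity and $f_2$ is used \emph{inside} $\psi_1$, not as $\rho$.

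\textbf{Minor point.} The distribution $\mu$ in the theorem is existentially quantified: the paper constructs a specific $\mu$ (a structured distribution on the torus that makes the $\bp_\alpha\bp_\beta$ nearly orthogonal, then corrects the off-diagonal via a second inner product). Your proposed equivalence-of-norms step for arbitrary non-trivial $\mu$ is unnecessary and, as stated, would not give a dimension-free constant.
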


\begin{remark}
    The lower bound is completely independent of the width and depth of the parameterized networks $\{\psi_l\}$ and $\rho$.  The only parameter that the theorem restricts is the symmetric width $L$.  This is in sharp contrast to the separations of vanilla networks~\parencite{eldan2016power}, where there is a natural trade-off between width and depth.
\end{remark}

\begin{remark}
    In the upper bound, we consider the network $f \in \symlt$ to have width and depth in the usual sense of vanilla neural networks, where the parameterized maps $\{\psi_l\}$ and $\rho$ obey the width, depth, and weight bounds given.
\end{remark}
\section{Related Work}

\subsection{Depth Separation}

Numerous works have studied the difference in expressive power between different neural network architectures.  Many of these works center on the representational gap between two-layer and three-layer networks~\parencite{eldan2016power,daniely2017depth}.  In particular, recent works have focused on generalizing the family of functions that realize these separations, to various radial functions~\parencite{safran2017depth} and non-radial functions~\parencite{venturi2021depth}.

A separate line of work considers separations between networks when the depth varies polynomially~\parencite{telgarsky2016benefits}.  Notably, \textcite{vardi2022width} demonstrates that depth has a greater impact on expressivity than width, in the case of vanilla neural networks.

\subsection{Symmetric Architectures}

We primarily consider the symmetric neural network parameterization as introduced in DeepSets\parencite{zaheer2017deep}, with PointNet\parencite{qi2017pointnet} a similar symmetric parameterization using a different pooling function.  Simple linear equivariant layers were also introduced in~\textcite{zaheer2017deep}.

In the context of relationships between objects in an image, the first symmetric architecture enabling explicit pairwise interaction was introduced in~\textcite{santoro2017simple}.  More complicated symmetric architectures, allowing for higher-order interaction and more substantial equivariant layers, were built on top of attention primitives~\parencite{ma2018attend,lee2019set}.  And the notion of explicit high-order interactions between set elements before symmetrizing is formalized in the architecture of Janossy pooling~\parencite{murphy2018janossy}.  

Symmetric architectures are generalized by graph neural networks~\parencite{kipf2016semi,scarselli2008graph}, under the restriction to the complete graph.  

\subsection{Symmetric Network Expressivity}

The dependence of representational power on the symmetric width parameter $L$ was first demonstrated in the $D=1$ case. Under the strong condition $L < N$, it was proven there are symmetric functions which cannot be exactly represented by a DeepSets network~\parencite{wagstaff2019limitations}, and this was later strengthened to functions which cannot be approximated in the infinity norm to arbitrary precision~\parencite{wagstaff2022universal}.

The work introducing Janossy pooling~\parencite{murphy2018janossy} also includes a theoretical result showing singleton networks cannot exactly represent some particular pairwise symmetric network.  Crucially however, this result is restricted to a simplified, non-universal symmetric architecture excluding the $\rho$ transformation, and therefore does not characterize the real-world architectures given above.

The question of expressiveness in symmetric networks may also be generalized to graph neural networks, with a focus on distinguishing non-isomorphic graphs as compared to the Weissfeler-Lehman test\parencite{xu2018powerful} and calculating invariants such as substructure counting\parencite{chen2020can}.  In particular, one may understand expressiveness in symmetric networks incorporating pairwise interaction as the ability to learn functions of the complete graph decorated with edge features.

\subsection{Symmetric Polynomial Theory}

Our proofs rely on the technical machinery of symmetric polynomial theory, thoroughly characterized in~\textcite{macdonald1998symmetric}.  In particular, we utilize the integral representation of the finite-variable Hall Inner product as introduced in Section~\ref{sec:prelim}.  Because this integral is defined over the complex unit circle, we consequently consider complex-valued neural networks~\parencite{bassey2021survey}.

The connection of symmetric networks to the powersum polynomials was first observed in~\textcite{zaheer2017deep}, and likewise the multisymmetric powersum polynomials have been applied in higher dimensional symmetric problems~\parencite{maron2019provably,segol2019universal}.  The algebraic properties of the multisymmetric powersum polynomials are well-studied, for example as a basis of higher dimensional symmetric polynomials~\parencite{rydh2007minimal} and through their algebraic dependencies~\parencite{domokos2007vector}.  However, to the best of our knowledge this is the first work to apply the Hall inner product to symmetric neural networks, and to extend this inner product to yield low-degree orthogonality over the multisymmetric polynomials.

\section{Warmup: One-dimensional set elements}

To begin, we consider the simpler case where $D = 1$, i.e. where we learn a symmetric function acting on a set of scalars.  It was already observed in~\textcite{zaheer2017deep} that the universality of DeepSets could be demonstrated by approximating the network with symmetric polynomials.  We first demonstrate that through this approximation, we can relate the symmetric width $L$ to expressive power.

\subsection{Symmetric Polynomials}

In order to approximate symmetric networks by symmetric polynomials, we choose a suitable basis.  The powersum polynomials serve as the natural choice, as their structure matches that of a singleton symmetric network, and they obey very nice orthogonality properties that we detail below.

\begin{definition}
    For $k \in \N$ and $x \in \mathbb{C}^N$, the \emph{normalized powersum polynomial} is defined as $$p_k(x) = \frac{1}{\sqrt{k}} \sum_{n=1}^N x_n^k$$ with $p_0(x) = 1$.
\end{definition}

A classical result in symmetric polynomial theory is the existence of an $L_2$ inner product that grants orthogonality for products of powersums.  To make this notion explicit and keep track of products, we index products with partitions.

\begin{definition}
    An \emph{integer partition} $\lambda$ is non-increasing, finite sequence of positive integers $\lambda_1 \geq \lambda_2 \geq \dots \geq \lambda_k$.  The weight of the partition is given by $|\lambda| = \sum_{i=1}^k \lambda_i$.  The length of a partition $l(\lambda)$ is the number of terms in the sequence.
\end{definition}

Then we characterize a product of powersums by:
\begin{align}
    p_\lambda(x) = \prod_i p_{\lambda_i}(x)
\end{align}
This notation intentionally also allows for the empty partition, such that if $\lambda = \varnothing$ then $p_\lambda = 1$.
All together, we can now state the following remarkable fact:

\begin{theorem}[{\cite[Chapter VI (9.10)]{macdonald1998symmetric}} ]\label{thm:hall-inner-product}
    There exists a $L_2(d\nu)$ inner product (for some probability measure $\nu$) such that, for partitions $\lambda, \mu$ with $|\lambda| \leq N$:
    \begin{align}
        \langle p_\lambda, p_\mu \rangle_\V = z_\lambda \ind_{\lambda = \mu}
    \end{align}
    where $z_\lambda$ is some combinatorial constant.
\end{theorem}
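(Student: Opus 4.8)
The plan is to exhibit $\nu$ explicitly as a Weyl-type spectral measure on the torus and then reduce the orthogonality relation to the second orthogonality relation for the character table of the symmetric group. Take $\nu$ to be the probability measure on $(\S)^N$ whose density with respect to normalized Haar measure is $\frac{1}{N!}\,|\Delta(x)|^2$, where $x = (e^{i\theta_1},\dots,e^{i\theta_N})$ and $\Delta(x) = \prod_{i<j}(x_i - x_j)$ is the Vandermonde determinant; expanding $|\Delta(x)|^2 = \sum_{\sigma,\tau\in S_N}\mathrm{sgn}(\sigma)\mathrm{sgn}(\tau)\,x^{\sigma\delta}\,\overline{x^{\tau\delta}}$ with $\delta = (N-1,\dots,1,0)$ and using orthonormality of distinct monomials under Haar measure shows the total mass is $1$, so $\nu$ is genuinely a probability measure and $\langle f,g\rangle_\V := \int f\,\overline{g}\,d\nu$ is a positive-definite Hermitian inner product on polynomials (positivity because $\langle f,f\rangle_\V=0$ forces $f\Delta\equiv0$, hence $f\equiv 0$). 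Since the $p_k$ have real coefficients, on $(\S)^N$ one has $\overline{p_\mu(x)} = p_\mu(\bar x)$, so this is the relevant pairing.

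The first ingredient is the classical orthonormality of Schur polynomials under $\nu$: writing $s_\rho = a_{\rho+\delta}/\Delta$ with $a_{\rho+\delta}(x) = \det\big(x_j^{\rho_i + N - i}\big)_{i,j}$, the Vandermonde weight cancels, so $\langle s_\rho, s_{\rho'}\rangle_\V = \frac{1}{N!}\int a_{\rho+\delta}\,\overline{a_{\rho'+\delta}}$ against Haar measure, and since $\rho+\delta$ has strictly decreasing nonnegative entries, $a_{\rho+\delta}$ is a signed sum of $N!$ distinct monomials; monomial orthonormality then yields $\langle s_\rho,s_{\rho'}\rangle_\V = \ind_{\rho=\rho'}$ whenever $l(\rho),l(\rho')\le N$ (and $s_\rho\equiv0$ in $N$ variables otherwise). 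A parallel, easier monomial count shows that homogeneous polynomials of distinct total degree are $\V$-orthogonal, which settles the case $|\lambda|\ne|\mu|$ at once (both sides vanish).

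It remains to handle $|\lambda| = |\mu| = n \le N$. Here I would invoke the Schur expansion of the (unnormalized) power sums, $\widetilde p_\lambda = \sum_{\rho\,\vdash\, n}\chi^\rho_\lambda\,s_\rho$, an identity in the ring of symmetric functions hence valid in $N$ variables, where $\chi^\rho_\lambda$ is the value of the irreducible $S_n$-character indexed by $\rho$ on the conjugacy class of cycle type $\lambda$. The crucial observation --- and the only place the hypothesis is used --- is that $n\le N$ forces $l(\rho)\le n\le N$ for every $\rho\vdash n$, so no term of this expansion is killed in $N$ variables; combining with Schur orthonormality and the reality of $S_n$-characters gives $\langle \widetilde p_\lambda,\widetilde p_\mu\rangle_\V = \sum_{\rho\vdash n}\chi^\rho_\lambda\chi^\rho_\mu = \widetilde z_\lambda\,\ind_{\lambda=\mu}$ by the second orthogonality relation, with $\widetilde z_\lambda = \prod_{i\ge1} i^{m_i(\lambda)}\,m_i(\lambda)!$. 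Dividing out the $\prod_i 1/\sqrt{\lambda_i}$ factors from the definition of $p_k$ turns $\widetilde z_\lambda$ into the combinatorial constant $z_\lambda = \prod_i m_i(\lambda)!$ and produces $\langle p_\lambda,p_\mu\rangle_\V = z_\lambda\,\ind_{\lambda=\mu}$. The only genuine subtlety is precisely this non-vanishing step: for $n > N$ the truncated column sum $\sum_{\rho\vdash n,\,l(\rho)\le N}\chi^\rho_\lambda\chi^\rho_\mu$ is no longer $\widetilde z_\lambda\,\ind_{\lambda=\mu}$, which is exactly why passing to finitely many variables forces the restriction $|\lambda|\le N$; everything else is routine symmetric-function bookkeeping. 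Conceptually, this is the statement that $\langle\cdot,\cdot\rangle_\V$ agrees with the infinite-variable Hall inner product on the subring spanned by Schur functions of length $\le N$, which is the $q=t$ specialization of Macdonald VI.(9.10).
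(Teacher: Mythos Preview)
Your argument is correct and follows the standard route to this identity: exhibit $\nu$ as the Weyl measure on $(\S)^N$, verify Schur orthonormality via the alternant formula and monomial orthogonality on the torus, expand $\widetilde p_\lambda$ in Schur polynomials through the Frobenius character formula, and finish with the column orthogonality relations for $S_n$-characters. You also correctly isolate where the hypothesis $|\lambda|\le N$ enters --- namely, to ensure that no $s_\rho$ with $\rho\vdash n$ vanishes upon restriction to $N$ variables --- and correctly track the normalization so that $\widetilde z_\lambda=\prod_i i^{m_i}m_i!$ becomes the paper's $z_\lambda=\prod_i m_i!$ after dividing out the $1/\sqrt{k}$ factors in the definition of $p_k$.

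There is nothing to compare against: the paper does not prove this statement but cites it directly from Macdonald \cite[Chapter VI (9.10)]{macdonald1998symmetric}, merely recording the explicit form of $\nu$ (the eigenvalue density of a Haar unitary) and the value of $z_\lambda$ in the appendix. Your proof is a self-contained derivation of the cited fact and would serve perfectly well as a supplement; the one stylistic remark is that the paragraph on positivity (``$\langle f,f\rangle_\V=0$ forces $f\Delta\equiv 0$'') is slightly loose --- vanishing of the integral of $|f\Delta|^2$ against Haar only gives $f\Delta=0$ almost everywhere, but since $f\Delta$ is a Laurent polynomial on the torus this suffices --- and could be tightened or simply omitted, as positive-semidefiniteness is all that is actually used downstream.
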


We index this inner product with $V$ because it is written as an expectation with respect to a density proportional to the squared Vandermonde polynomial (see Section~\ref{sec:prelim} for the precise definition).  This inner product may also be considered the finite-variable specialization of the Hall inner product, defined on symmetric polynomials over infinitely many variables~\cite[Chapter I (4.5)]{macdonald1998symmetric}. 

It's easy to check that the degree of $p_\lambda$ is equal to $|\lambda|$.  So this theorem states that the powersum terms $p_\lambda$ are "almost" an orthogonal basis, except for correlation between two high-degree terms.

Let us remark that we assume analytic activations for the sake of this theorem, as the orthogonality property does not hold for symmetric polynomials with negative exponents.  However, in exchange for that assumption we can apply this very powerful inner product, that ultimately results in the irrelevance of network depth.

\subsection{Projection Lemma}

Before we can proceed to prove a representational lower bound, we need one tool to better understand $f\in\syml$.  Utilizing the orthogonality properties of the inner product $\langle \cdot, \cdot \rangle_\V$ allows us to project any $f \in \syml$ to a simplified form, while keeping a straightforward dependence on $L$.

For example, consider some uniformly convergent power series (with no constant term) $\phi(x) = \sum_{i=1}^\infty c_{ik} p_k(x)$.  We claim $\langle p_2 p_1, \phi^3 \rangle_V = 0$.  Indeed, expanding $\phi^3$, one exclusively gets terms of the form $p_{k_1} p_{k_2} p_{k_3}$, and because the partition $\{k_1, k_2, k_3\}$ is of a different length than $\{2, 1\}$, they are clearly distinct partitions so by orthogonality $\langle p_2p_1, p_{k_1} p_{k_2} p_{k_3}\rangle_\V = 0$.

Motivated by this observation, we can project $f$ to only contain products of two terms.  Let us introduce $\P_1$ to be the orthogonal projection onto $span(\{p_t : 1 \leq t \leq N/2\})$, and $\P_2$ to be the orthogonal projection onto $span(\{p_tp_{t'} : 1 \leq t,t' \leq N/2\})$.

\begin{lemma}\label{lem:proj-one-dim}
    Given any $f \in \syml$, we may choose coefficients $v_{ij}$ over $i \leq j \leq L$, and symmetric polynomials $\phi_i$ over $i \leq L$, such that:
    \begin{align}
        \P_2 f = \sum_{i\leq j}^L v_{ij} (\P_1 \phi_i) (\P_1\phi_j)
    \end{align}
\end{lemma}

\subsection{Rank Lemma}

Given the reduced form of $f$ above, we may now go about lower bounding its approximation error to a given function $g$.

By the properties of orthogonal projection, we have $\|f-g\|_V^2 \geq \|\P_2(f - g)\|_V^2$.  And by Parseval's theorem, the function approximation error $\|\P_2 f-\P_2 g\|_V^2$ equals 
$$\sum_{t\leq t'} \left(\left\langle \P_2 f, \frac{p_{t}p_{t'}}{\|p_{t}p_{t'}\|_V} \right\rangle_V - \left\langle \P_2 g, \frac{p_{t}p_{t'}}{\|p_{t}p_{t'}\|_V} \right\rangle_V\right)^2~.$$ Rearranging the orthogonal coefficients in the form of matrices, we have the following fact:


\begin{lemma}\label{lem:rank-one-dim}
    Given any $f \in \syml$, and $g$ such that $P_2 g = g$, we have the bound
        \begin{align}
        \|\P_2f - \P_2g\|_V^2 \geq \frac{1}{2} \|F - G\|_F^2
    \end{align}
        where $F, G \in \mathbb{C}^{N/2 \times N/2}$ are matrices with entries $F_{tt'} = \left\langle \P_2f, p_t p_{t'} \right\rangle_V$, $G_{tt'} = \left\langle \P_2g, p_t p_{t'} \right\rangle_V$.  Furthermore, $F$ has maximum rank $L$.
\end{lemma}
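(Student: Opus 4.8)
I would treat this as two logically independent claims. The inequality $\|\P_2 f-\P_2 g\|_V^2\ge\tfrac12\|F-G\|_F^2$ is pure Hilbert space bookkeeping, using only Theorem~\ref{thm:hall-inner-product} (not even the hypothesis $\P_2 g=g$); the rank bound $\mathrm{rank}(F)\le L$ is where Lemma~\ref{lem:proj-one-dim}, and hence $f\in\syml$, enters.

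\textbf{The norm inequality.} Every multiset $\{t,t'\}$ with $1\le t,t'\le N/2$ has weight $t+t'\le N$, so by Theorem~\ref{thm:hall-inner-product} the polynomials $\{p_tp_{t'}:1\le t,t'\le N/2\}$ are pairwise orthogonal, span $\mathrm{range}(\P_2)$, and satisfy $\|p_tp_{t'}\|_V^2=z_{\{t,t'\}}$, which with the chosen normalization of $p_k$ equals $1$ when $t\ne t'$ and $2$ when $t=t'$. I would apply Parseval to $h:=\P_2 f-\P_2 g=\P_2(f-g)$ in the associated orthonormal basis; since $\langle h,p_tp_{t'}\rangle_V=F_{tt'}-G_{tt'}$, this yields
\begin{align}
\|h\|_V^2 &= \sum_{t<t'}|F_{tt'}-G_{tt'}|^2 + \tfrac12\sum_t|F_{tt}-G_{tt}|^2 .
\end{align}
Since $F$ and $G$ are symmetric ($p_tp_{t'}=p_{t'}p_t$), one has $\|F-G\|_F^2 = 2\sum_{t<t'}|F_{tt'}-G_{tt'}|^2 + \sum_t|F_{tt}-G_{tt}|^2 = 2\|h\|_V^2$, so in fact $\|h\|_V^2=\tfrac12\|F-G\|_F^2$, which is stronger than claimed.

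\textbf{The rank bound.} By Lemma~\ref{lem:proj-one-dim}, $\P_2 f=\sum_{i\le j}^L v_{ij}\,(\P_1\phi_i)(\P_1\phi_j)$; writing $\P_1\phi_i=\sum_{s=1}^{N/2}a_{is}p_s$ and $a_i:=(a_{i1},\dots,a_{i,N/2})^\top\in\mathbb{C}^{N/2}$, each product is $\sum_{s,s'}a_{is}a_{js'}p_sp_{s'}$, so $\P_2 f=\sum_{s,s'}B_{ss'}p_sp_{s'}$ with $B:=\sum_{i\le j}v_{ij}\,a_ia_j^\top$. The key observation is that for \emph{any} such polynomial, Theorem~\ref{thm:hall-inner-product} together with the constants $\|p_tp_{t'}\|_V^2=z_{\{t,t'\}}$ gives $\langle\sum_{s,s'}B_{ss'}p_sp_{s'},\,p_tp_{t'}\rangle_V=(B+B^\top)_{tt'}$ for all $t,t'\le N/2$ (the ordered pairs $(t,t')$ and $(t',t)$ both contribute when $t\ne t'$, while $z_{\{t,t\}}=2$ absorbs the single contribution when $t=t'$). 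Hence $F=B+B^\top=\sum_{i\le j}v_{ij}(a_ia_j^\top+a_ja_i^\top)$, whose column space lies in $\mathrm{span}\{a_1,\dots,a_L\}$; therefore $\mathrm{rank}(F)\le L$.

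I do not expect a genuine obstacle: orthogonality is supplied by Theorem~\ref{thm:hall-inner-product}, and the quadratic structure of $\P_2 f$ by Lemma~\ref{lem:proj-one-dim}. The one place I would be careful is keeping the combinatorial constants $z_\lambda$ (equivalently $\|p_tp_{t'}\|_V^2\in\{1,2\}$) and the symmetry of $F$, $G$, and $B+B^\top$ consistent throughout, since a misplaced factor of $2$ would corrupt both the constant $\tfrac12$ and the identity $F=B+B^\top$.
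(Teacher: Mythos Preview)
Your argument is correct and matches the paper's approach: the paper proves this via its general Rank Lemma (Lemma~\ref{lem:rank}), which carries out exactly the same Parseval computation with the constants $\|p_tp_{t'}\|_V^2\in\{1,2\}$ and then extracts the rank bound from the factorization $F=C^\top V C$ (equivalent to your $F=B+B^\top$, both having column space in $\mathrm{span}\{a_1,\dots,a_L\}$). Your observation that equality $\|\P_2 f-\P_2 g\|_V^2=\tfrac12\|F-G\|_F^2$ holds here is right; the paper states it as an inequality only because the general lemma allows $\|p_tp_{t'}\|^2\ge 1,2$ to accommodate the high-dimensional inner product.
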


The significance of this lemma is the rank constraint: it implies that choosing symmetric width $L$ corresponds to a maximum rank $L$ on the matrix $F$.  From here, we can use standard arguments about low-rank approximation in the Frobenius norm to yield a lower bound.

\subsection{Separation in one-dimensional case}

Our main goal in this section is to construct a hard symmetric function $g$ that cannot be efficiently approximated by $\syml$ for $L \leq N/4$.  It is not particularly expensive for the symmetric width $L$ to scale linearly with the set size $N$: however, we will use the same proof structure to prove Theorem~\ref{thm:main-result-body}, which will require $L$ to scale exponentially.

\begin{theorem}

    For $D = 1$:
        \begin{align}
        \max_{\|g\|_\V = 1} \min_{f\in\syml} \|f - g\|_\V^2 \geq 1 - \frac{2L}{N}
    \end{align}
        In particular, for $L = \frac{N}{4}$ we recover a constant lower bound of $\frac{1}{2}$.
\end{theorem}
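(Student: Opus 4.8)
The plan is to instantiate the Rank Lemma (Lemma~\ref{lem:rank-one-dim}) against a hard target $g$ whose coefficient matrix $G$ is a scalar multiple of the identity, and then invoke the Eckart--Young theorem: a matrix all of whose singular values are equal is the worst possible target for rank-$L$ approximation, so it forces an error proportional to $N/2 - L$.

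Concretely, I would take $g = \frac{1}{\sqrt N}\sum_{t=1}^{N/2} p_t^2$. Each $p_t^2$ lies in $\mathrm{span}\{p_s p_{s'} : 1 \le s, s' \le N/2\}$, so $\P_2 g = g$, and Theorem~\ref{thm:hall-inner-product} gives $\|p_t^2\|_\V^2 = 2$ together with $\langle p_t^2, p_s p_{s'}\rangle_\V = 0$ unless $s = s' = t$; hence $\|g\|_\V^2 = \frac1N \sum_{t=1}^{N/2} 2 = 1$, and the matrix $G$ of Lemma~\ref{lem:rank-one-dim} is $G = \tfrac{2}{\sqrt N} I_{N/2}$. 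Now fix any $f \in \syml$. Since $\P_2$ is an orthogonal projection and $\P_2 g = g$,
\[
 \|f - g\|_\V^2 \ \ge\ \|\P_2 f - \P_2 g\|_\V^2 \ \ge\ \tfrac12 \|F - G\|_F^2,
\]
the last step by Lemma~\ref{lem:rank-one-dim}, where $\mathrm{rank}(F) \le L$. As $G$ has all $N/2$ singular values equal to $\tfrac{2}{\sqrt N}$, Eckart--Young gives $\|F - G\|_F^2 \ge \sum_{i = L+1}^{N/2} \sigma_i(G)^2 = (\tfrac N2 - L)\tfrac4N$ when $L < N/2$ (the claimed bound being vacuous otherwise). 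Therefore $\|f - g\|_\V^2 \ge \tfrac12(\tfrac N2 - L)\tfrac4N = 1 - \tfrac{2L}{N}$; minimizing over $f \in \syml$ and recalling $\|g\|_\V = 1$ yields the stated inequality, and $L = N/4$ recovers the constant $\tfrac12$.

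The genuinely substantive content is packaged into Lemmas~\ref{lem:proj-one-dim}--\ref{lem:rank-one-dim} (projecting $f \in \syml$ onto degree-$\le$-two powersum products, and identifying the resulting coefficient matrix as having rank $\le L$), so the only real choices in this argument are: (i) recognizing that equal singular values make rank-$L$ approximation maximally lossy, which pins down $g$ up to choosing its coefficient matrix to be a scalar multiple of an isometry --- cleanly met by the diagonal choice $g \propto \sum_t p_t^2$; and (ii) tracking the Hall-norm constants ($\|p_t^2\|_\V^2 = 2$ versus $\|p_t p_{t'}\|_\V^2 = 1$) so that the bound comes out as exactly $1 - 2L/N$. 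The one place to be careful is the interplay between those norm constants and the factor $\tfrac12$ appearing in Lemma~\ref{lem:rank-one-dim}. The same template --- with $p_t$ replaced by multisymmetric powersums and $N/2$ replaced by a combinatorial count growing exponentially in $\min(D, \sqrt N)$ --- is what will drive the exponential separation of Theorem~\ref{thm:main-result-body}.
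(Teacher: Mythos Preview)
Your proposal is correct and matches the paper's own proof essentially line for line: the same hard function $g = \frac{1}{\sqrt N}\sum_{t=1}^{N/2} p_t^2$, the same reduction via $\P_2$ and Lemma~\ref{lem:rank-one-dim} to a rank-$L$ Frobenius approximation of $G = \tfrac{2}{\sqrt N}I_{N/2}$, and the same Eckart--Young computation yielding $1 - 2L/N$. Your closing remarks about tracking the constants and the template for the high-dimensional case are also accurate.
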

\begin{proof}[Proof (sketch).]
    Choose $g$ such that $\P_2 g = g$.  Then because $\P_2$ is an orthogonal projection and applying Lemma~\ref{lem:rank-one-dim}:
    \begin{align}
        \min_{f\in\syml} \|f - g\|_\V^2 & \geq \min_{f\in\syml} \|\P_2 f - \P_2 g \|_\V^2 \\
        & \geq \frac{1}{2} \min_{\text{rank}(F) \leq L} \|F - G\|_F^2
    \end{align}
    
    We note that $\|p_tp_t\|_V^2 = z_{\{t,t\}} = 2$, so the choice of $g = \frac{1}{\sqrt{N}} \sum_{t=1}^{N/2}  \tp_t \tp_t$ can be seen to obey $\|g\|_V = 1$, and implies that $G$ is the scaled identity matrix $\frac{2}{\sqrt{N}} I \in \mathbb{C}^{N/2 \times N/2}$.  Then by standard properties of the SVD:
        \begin{align}
        \min_{f\in\syml} \|f - g\|_\V^2 &
        \geq \frac{1}{2}  \min_{\text{rank}(F) \leq L} \|F - \frac{2}{\sqrt{N}}I\|_F^2 \\
        & = \frac{1}{N/2} \min_{\text{rank}(F) \leq L} \|F - I\|_F^2 \\
        & = \frac{1}{N/2} (N/2 - L) \\
        & = 1 - \frac{2L}{N}
    \end{align}
    \end{proof}
\section{Proof Sketch of Main Result}\label{sec:sketch}

\subsection{Challenges for High-dimensional Set Elements}

We'd like to strengthen this separation in several ways:
\begin{itemize}
    \item Generalize to the $D > 1$ case,
    \item Realize a separation where the symmetric width $L$ must scale exponentially in $N$ and $D$, showing that $\syml$ is infeasible,
    \item Show the hard function $g$ can nevertheless be efficiently approximated in $\symlt$ for $L$ polynomial in $N$ and $D$
\end{itemize}

First, in order to approximate via polynomials in the high-dimenionsal case, we will require the high-dimensional analogue of powersum polynomials:

\begin{definition}
For a multi-index $\alpha \in \N^{D}$, the \emph{normalized multisymmetric powersum polynomial} is defined as:
\begin{align}
    \bp_{\alpha}(X) & = \frac{1}{\sqrt{|\alpha|}} \sum_n \prod_d x_{dn}^{\alpha_d}~.
\end{align}
\end{definition}

So the plan is to find a high-dimensional analogue of Lemma~\ref{lem:proj-one-dim} and Lemma~\ref{lem:rank-one-dim}, now using multisymmetric powersum polynomials, mimic the proof of the $D = 1$ case, and then additionally show the hard function $g$ is efficiently computable in the pairwise symmetric architecture.  Note that because the algebraic basis of multisymmetric powersum polynomials is of size $L^* = \binom{N + D}{N} - 1$, we can expect an exponential separation when we apply a similar rank argument.\footnote{We subtract one in order to discount the constant polynomial.}

\subsection{Sketch of Main result (lower bound)}\label{sec:lower-bound}

Because we are in high dimensions, we cannot simply apply the restricted Hall inner product introduced in Theorem~\ref{thm:hall-inner-product}.  To the best of our knowledge, there is no standard generalization of the Hall inner product to multi-symmetric polynomials that preserves the orthogonality property.  For the main technical ingredient in the high-dimensional case we introduce a novel generalization, which builds on two inner products.

First, we introduce a new input distribution $\nu$ over set inputs $X \in \mathbb{C}^{D \times N}$, and induce an $L_2$ inner product:
\begin{align}
    \langle f, g \rangle_\A = \mathbb{E}_{X\sim \nu}\left[f(X) \overline{g(X)}\right]~.
\end{align}
We use this inner product to measure the approximation error of $\syml$.  That is, we seek a lower bound to $\min_{f \in \syml} \|f - g\|_\A$, for a suitable choice of hard function $g$.

We can now apply an analogue of Lemma~\ref{lem:proj-one-dim} to project $f$ to a simplified form.  But we cannot immediately apply an analogue of Lemma~\ref{lem:rank-one-dim}, as it relied on Parseval's theorem and the low-degree multisymmetric powersum polynomials are not orthogonal in this inner product.  Put another way, if we represent $\langle \cdot, \cdot \rangle_\A$ as a matrix in the basis of low-degree multisymmetric powersums, it will be positive-definite but include some off-diagonal terms.

The idea is to now introduce a new inner product with a different input distribution $\nu_0$
\begin{align}
    \langle f, g \rangle_{\AA} = \mathbb{E}_{X\sim \nu_0}\left[f(X) \overline{g(X)}\right]~,
\end{align}
and define the bilinear form
\begin{align}
    \langle f, g \rangle_{\diff} = \langle f, g \rangle_{\A}  - 2 \langle f, g \rangle_{\AA} ~.
\end{align}

Typically positive-definiteness is lost when subtracting two inner products, but we prove that $\langle \cdot, \cdot \rangle_\diff$ is an inner product when restricted to a particular subspace of symmetric polynomials (see Theorem~\ref{thm:bilinear-form}).  Furthermore, the careful choice of $\nu$ and $\nu_0$ cancels the off-diagonal correlation of different multisymmetric powersums, so they are orthogonal under this new inner product $\langle \cdot, \cdot \rangle_\diff$.

By the norm domination $\|\cdot\|_\A \geq \|\cdot\|_\diff$, we are able to pass from the former $L_2$ norm to the latter norm that obeys orthogonality, and apply an analogue of the Rank Lemma~\ref{lem:rank-one-dim}.  Thus we derive a lower bound using any hard function $g$ whose corresponding matrix $G$ (built from orthogonal coefficients) is diagonal and high-rank.  And because the total number of polynomials is $L^*$, the rank argument now yields an exponential separation.

Based on this proof, we have much freedom in our choice of $g$.  By choosing its coefficients in the basis of multisymmetric powersum polynomials, it's easy to enforce the conditions that $G$ is diagonal and high-rank for variety of possible functions.  However, ensuring that $g$ is not pathological (i.e. that it is bounded and Lipschitz), and can be efficiently approximated in $\symlt$, requires a more careful choice.

\subsection{Sketch of Main Result (upper bound)}\label{sec:upper-bound-sketch}

It remains to approximate the hard function $g$ with a network from $\symlt$.  First we must make a choice of $g$ in particular.

Based on the lower bound proof, the desiderata for $g$ is that it is supported exclusively on terms of the form $\bp_{\alpha} \bp_{\alpha}$ over many values of $\alpha$, as this induces a diagonal and high-rank matrix $G$ in an analogue of Lemma~\ref{lem:rank-one-dim}.  Furthermore, by simple algebra one can confirm that $\bp_{\alpha}(X)\bp_{\alpha}(X) = \frac{1}{|\alpha|} \sum_{n,n'} \prod_{d=1}^D (x_{dn} x_{dn'})^{\alpha_d}$, so $g$ supported on these polynomials can clearly be written in the form of a network in $\symlt$.  This structure of $g$ guarantees difficult approximation, and is akin to the radial structure of the hard functions introduced in works on depth separation~\parencite{eldan2016power}.

We must however be careful in our choice of $g$: for the matrix $G$ to be high-rank, $g$ must be supported on exponentially many powersum polynomials.  But this could make $\|g\|_\infty$ exponentially large, and therefore challenging to approximate efficiently with a network from $\symlt$.

We handle this difficulty by defining $g$ in a different way.  We introduce a finite Blaschke product $\mu(\xi) = \frac{\xi - 1/4}{\xi/4 - 1}$, a function that analytically maps the unit complex circle to itself.  Then the choice
\begin{align}
        g(X) & = \sum_{n,n'=1}^N \prod_{d=1}^D \mu(x_{dn} x_{dn'}) 
\end{align}
ensures that $\|g\|_\infty$, $\|g\|_\A$, and $\mathrm{Lip}(g)$ are all polynomial in $N,D,\frac{1}{\epsilon}$ for $\epsilon$ approximation error (see Lemma~\ref{lem:g-prop}).  Furthermore, again from simple algebra it is clear that $g$ is only supported on terms of the form $\bp_{\alpha} \bp_{\alpha}$.  So it remains to show that the induced diagonal matrix $G$ is effectively high rank, which follows from expanding the Blaschke products.

Satisfied that this choice of $g$ will meet the desiderata for the lower bound, and has no pathological behavior, it remains to construct $f\in\symlt$ for $L=1$ that approximates $g$.  That is, choose $\psi_1$ and $\rho$ so that $g(X) \approx \rho\left(\sum_{n,n'=1}^N \psi_1(x_n, x_{n'})\right)$.  Clearly we may take $\rho$ to be the identity, and $\psi_1(x_n, x_{n'})$ to approximate $\prod_{d=1}^D \mu(x_{dn} x_{dn'})$, which is straightforwardly calculated in depth $O(\log D)$ by performing successive multiplications in a binary-tree like structure (see Theorem~\ref{thm:upper-bound-explicit}).

Ultimately, we use a slight variant of this function for the formal proof.  Because the orthogonality of our newly introduced inner product $\langle \cdot, \cdot \rangle_*$ only holds for low-degree polynomials, we must truncate high-degree terms of $g$; we confirm in Appendix \ref{sec:upper-bound} that this truncation nevertheless preserves the properties we care about.
\section{Discussion}

In this work, we've demonstrated how symmetric width captures more of the expressive power of symmetric networks than depth when restricted to analytic activations, by evincing an exponential separation between two of the most common architectures that enforce permutation invariance.

The most unusual property of this result is the complete independence of depth, owing to the unique orthogonality properties of the restricted Hall inner product when paired with the assumption of analyticity.  This stands in contrast to the case of vanilla neural networks, for which separations beyond small depth would resolve open questions in circuit complexity suspected to be quite hard~\parencite{vardi2021size}.  Furthermore, the greater dependence on width than depth is a unique property to symmetric networks, whereas the opposite is true for vanilla networks~\parencite{vardi2022width}.

A natural extension would be to consider the simple equivariant layers introduced in~\textcite{zaheer2017deep}, which we suspect will not substantially improve approximation power of $\syml$.  Furthermore, allowing for multiple such equivariant layers, this network becomes exactly akin to a Graph Convolutional Network ~\parencite{kipf2016semi} on a complete graph, whereas $\symlt$ corresponds to a message passing network~\parencite{gilmer2017neural} as it is capable of interpreting edge features.

\subsection{Limitations}\label{sec:limitations}

The major limitation of this result is the restriction to analytic functions.  Although analytic symmetric functions nevertheless appear crucially in the study of exactly solvable quantum systems~\parencite{langmann2005method, beau2021parent}, this assumption may be be overly strict for general problems of learning symmetric functions.  We nevertheless conjecture that these bounds will still hold even allowing for non-analytic activations, and consider this an exciting question for future work.  Additionally, whether the hard function $g$ can be efficiently learned with gradient descent remains unclear, and future work could touch on the learnability.
\paragraph{Acknowledgements:}
This work has been partially supported by the Alfred P. Sloan Foundation, NSF RI-1816753, NSF CAREER CIF-1845360, and NSF CCF-1814524.

\printbibliography


\newpage
\appendix
\appendixpage
\startcontents[sections]
\printcontents[sections]{l}{1}{\setcounter{tocdepth}{2}}
\section{Preliminaries}\label{sec:prelim}

\subsection{Notation}

We'll use $\N$ to denote the naturals including $0$.
The indicator function for the condition $x = y$ is written as $\ind_{x = y}$.
Given an integer $\emph{weak composition}$ $\alpha \in \N^D$, we will often consider the multidimensional polynomial $z^\alpha = \prod_{d=1}^D z_{d}^{\alpha_d}$.
For two vectors $x, x' \in \mathbb{C}^D$, we denote their elementwise product by $x \circ x'$.

\subsection{Inner Products}

We introduce two $L_2$ inner products (defined with respect to probability measures) we'll use throughout the work.
For symmetric functions $f, g: \mathbb{C}^{N} \rightarrow \mathbb{C}$, define:
\begin{align}
    \langle f, g \rangle_\V = \frac{1}{(2\pi)^NN!} \int_{[0,2\pi]^N}f(e^{i\btheta}) \conj{g(e^{i\btheta})} |V(e^{i\btheta})|^2 d\btheta~,
\end{align}
where for $z \in \mathbb{C}^N$, we have the Vandermonde determinant
\begin{align}
    V(z) = \prod_{1 \leq i < j \leq N} (z_j - z_i)~.
\end{align}

This inner product is well-known in the theory of symmetric polynomials, as a finite-variable analogue of the Hall inner product~\parencite{macdonald1998symmetric}.  Equivalently, if we let $V$ denote the joint density of eigenvalues of a Haar-distributed unitary matrix in $\mathbb{C}^{N \times N}$, it is known~\parencite{diaconis1994eigenvalues} that this inner product may be written as
\begin{align}
    \langle f, g \rangle_V = \mathbb{E}_{y \sim V}\left[f(y) \overline{g(y)} \right]~.
\end{align}
For arbitrary functions $f, g: \mathbb{C}^D \rightarrow \mathbb{C}$, we also consider the $L_2$ inner product given as an expectation over $D$ random variables
\begin{align}
    \langle f, g \rangle_\S &= \frac{1}{(2\pi)^D} \int_{[0,2\pi]^D} f(e^{i\btheta}) \conj{g(e^{i\btheta})} d\btheta \\
    & = \mathbb{E}_{q \sim (\S)^D}\left[f(q)\overline{g(q)} \right]~,
\end{align}
with the notation $q \sim (\S)^D$ meaning each entry of $q$ is i.i.d. uniform on $\S$.

For this inner product, we will introduce the following notation.  For a multi-index $\alpha \in \N^D$ and a dummy variable $q$ of dimension $D$, we let $q^\alpha$ denote the polynomial function $z \mapsto z^\alpha$.  Then it's clear that
\begin{align}
    \langle q^\alpha, q^\beta \rangle_\S = \ind_{\alpha = \beta}~.
\end{align}
Note that we will consider this inner product over varying dimensions throughout the paper, but it will be clear from context the dimension, i.e. how many i.i.d. random variables uniform on $S^1$ we are sampling over.

\subsection{Symmetric Polynomials}

We remind the notation from the main body: $p_0(x) = 1$, and for $k \in \N \setminus\{0\}$ and any partition $\lambda$:
\begin{align}
    p_k(x) & = \frac{1}{\sqrt{k}} \sum_{n=1}^N x_n^k \\
    p_\lambda(x) &= \prod_i p_{\lambda_i}(x)~.
\end{align}

We will also sometimes use set notation to index products of powersums.  For example, $p_{\{2,1\}} = p_2 p_1 = p_1 p_2$.

Finally, we need the notation that if $n_t$ denotes the number of times $t$ appears in $\lambda$, then $z_\lambda = \prod_t n_t!$.  Note that this definition of $z_\lambda$ is slightly different that most texts, as we're considering the normalized powersums.

Then we can state Theorem~\ref{thm:hall-inner-product} explicitly:
\begin{theorem}\label{thm:hall-inner-product-explicit}[{\cite[Chapter VI (9.10)]{macdonald1998symmetric}} ]
    For partitions $\lambda, \mu$ with $|\lambda| \leq N$:
    \begin{align}
        \langle p_\lambda, p_\mu \rangle_\V = z_\lambda \ind_{\lambda = \mu}~.
    \end{align}
\end{theorem}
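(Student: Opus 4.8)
The plan is to deduce this finite-variable identity from the representation theory of $U(N)$ and of the symmetric group, the hypothesis $|\lambda|\le N$ being exactly what makes the reduction lossless.

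I would first use a homogeneity argument to reduce to the case $|\lambda| = |\mu|$. The weight $|V(e^{i\btheta})|^2$ is invariant under the simultaneous rotation $\theta_j\mapsto\theta_j+\phi$ for all $j$ (each factor of $V$ scales by $e^{i\phi}$, so $V$ itself scales by a unimodular constant), whereas $p_\lambda(e^{i(\btheta+\phi)}) = e^{i|\lambda|\phi}p_\lambda(e^{i\btheta})$. Hence $\langle p_\lambda, p_\mu\rangle_\V$ gets multiplied by $e^{i(|\lambda|-|\mu|)\phi}$ for every $\phi$, so it vanishes unless $|\lambda| = |\mu|$, and it remains to treat $|\lambda| = |\mu|\le N$.

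For that case, write $\langle\cdot,\cdot\rangle_\V$ as an expectation over the eigenvalues of a Haar-distributed unitary (as recalled in the preliminaries): by the Weyl integration formula and orthonormality of irreducible characters, the Schur polynomials $s_\nu$ with $\ell(\nu)\le N$ satisfy $\langle s_\nu, s_{\nu'}\rangle_\V = \ind_{\nu = \nu'}$ for all such $\nu,\nu'$. In the ring of symmetric functions one has $p_\lambda = \sum_{\nu\,\vdash\,|\lambda|} \chi^\nu_\lambda\, s_\nu$ with $\chi^\nu_\lambda$ the (integer) value of the $S_{|\lambda|}$-character $\chi^\nu$ at cycle type $\lambda$. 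Since $|\lambda|\le N$, every $\nu\vdash|\lambda|$ has $\ell(\nu)\le |\lambda| \le N$, so this expansion specializes to $N$ variables with the same coefficients and no Schur polynomial collapsing to $0$. Therefore
\[
\langle p_\lambda, p_\mu\rangle_\V \;=\; \sum_{\nu,\nu'} \chi^\nu_\lambda\,\overline{\chi^{\nu'}_\mu}\,\langle s_\nu, s_{\nu'}\rangle_\V \;=\; \sum_{\nu\,\vdash\,|\lambda|} \chi^\nu_\lambda\,\chi^\nu_\mu \;=\; \Big(\prod_t t^{\,n_t} n_t!\Big)\,\ind_{\lambda = \mu},
\]
the last equality being column orthogonality of the character table of $S_{|\lambda|}$ (equivalently, the classical Hall pairing evaluation, Macdonald~I.(4.7)). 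Here $n_t$ is the multiplicity of $t$ in $\lambda$; our normalization divides $p_\lambda$ by $\prod_t t^{\,n_t/2}$, which turns the prefactor into $\prod_t n_t! = z_\lambda$, completing the proof.

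The main obstacle -- and the only genuinely nontrivial input -- is the stability invoked above: that the span of $\{p_\lambda : |\lambda|\le N\}$ embeds into symmetric polynomials in $N$ variables with the pairing $\langle\cdot,\cdot\rangle_\V$ intact. This is precisely where $|\lambda|\le N$ is used; once $|\lambda| > N$ the power sums acquire algebraic relations in $N$ variables and orthogonality fails. An alternative that isolates this point probabilistically is the Diaconis--Shahshahani moment theorem~\parencite{diaconis1994eigenvalues}: for $|\lambda|, |\mu|\le N$ the mixed moments $\mathbb{E}\big[\prod_i \operatorname{Tr}(U^{\lambda_i})\,\overline{\prod_j \operatorname{Tr}(U^{\mu_j})}\big]$ of a Haar-random $U \in U(N)$ agree with those of independent complex Gaussians whose variances are the corresponding indices; Wick's theorem then gives $\ind_{\lambda=\mu}\prod_t t^{\,n_t}n_t!$, and the same rescaling yields $z_\lambda$.
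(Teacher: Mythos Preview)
The paper does not actually prove this theorem; it is stated with a direct citation to Macdonald~\cite[Chapter~VI~(9.10)]{macdonald1998symmetric} and used as a black box throughout. Your proposal, by contrast, supplies a genuine proof, and a correct one: the homogeneity reduction to $|\lambda|=|\mu|$ is clean, the passage through Schur orthonormality via Weyl integration together with the Frobenius expansion $p_\lambda=\sum_\nu\chi^\nu_\lambda s_\nu$ is standard and sound, and you correctly identify that $|\lambda|\le N$ is exactly what guarantees $\ell(\nu)\le N$ for every $\nu\vdash|\lambda|$ so that no Schur polynomial degenerates. The normalization bookkeeping at the end (dividing out $\prod_t t^{n_t/2}$ to convert Macdonald's $z_\lambda=\prod_t t^{n_t}n_t!$ into the paper's $z_\lambda=\prod_t n_t!$) is also right. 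The alternative Diaconis--Shahshahani route you sketch is likewise valid and arguably closer in spirit to how the paper invokes the result, since the preliminaries already frame $\langle\cdot,\cdot\rangle_\V$ as an expectation over Haar eigenvalues. In short: nothing to compare against here, and your argument stands on its own.
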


\subsection{Multisymmetric Polynomials}

When $D > 1$, in order to approximate our network with polynomials, we introduce the multivariate analog of symmetric polynomials.  For example, suppose $D = 2$, and we write our set elements the following way:
$$ X = \left\{\begin{bmatrix}
            y_1 \\
            z_1
         \end{bmatrix}, \begin{bmatrix}
            y_2 \\
            z_2
         \end{bmatrix}, \dots \begin{bmatrix}
            y_N \\
            z_N
         \end{bmatrix}\right\} $$
Then a basis of symmetric functions is given by the multisymmetric power sum  polynomials, some examples:
\begin{align}
    \bp_{(2,3)}(X) & = \frac{1}{\sqrt{2 + 3}}\sum_n y_n^2 z_n^3 \\
    \bp_{(4,1)}(X) & = \frac{1}{\sqrt{4 + 1}}\sum_n y_n^4 z_n^1 ~.
\end{align}
For general $N$ and $D$, our input is $X \in \mathbb{C}^{D \times N}$ where we want functions that are invariant to permuting the columns $x_n$ of this matrix.  Note that we write scalar entries of this matrix as $x_{dn}$.

\begin{definition}
For a multi-index $\alpha \in \N^{D}$, the 
\emph{normalized multisymmetric powersum polynomial} is defined as:
\begin{align}
    \bp_{\alpha}(X) & = \frac{1}{\sqrt{|\alpha|}} \sum_n x_n^\alpha \\
    &= \frac{1}{\sqrt{|\alpha|}} \sum_n \prod_d x_{dn}^{\alpha_d}
\end{align}
with $\bp_{0} = 1$.

\end{definition}

An algebraic basis of symmetric functions in this setting is given by all $\bp_{\alpha}$ for all $|\alpha| \leq N$, where $|\alpha| = \sum_{d} \alpha_d$ (for a proof see~\textcite{rydh2007minimal}).

We remind the notation from the introduction, where $L^*(N, D) = |\{\alpha \in \mathbb{N}^D: |\alpha| \leq N\}| = \binom{N + D}{N} - 1$ is the size of this algebraic basis (discouting the constant polynomial).
Intuitively then it's clear why $L \geq L^*$ will make $\syml$ a universal approximator, as each of the $L$ symmetric features $\{\phi_l\}_{l=1}^L$ will calculate one of these basis elements.
\section{One Dimensional Set Elements}

We will first consider the setting where $D = 1$, i.e. each set element is a scalar.  In this setting, we will amend notation slightly so that we consider symmetric functions $f$ acting on $x \in \mathbb{C}^N$, where each $x_n$ is a scalar set element.

\subsection{Projection Lemma}

Let us remind $\P_1$ to be the orthogonal projection onto $span(\{p_t : 1 \leq t \leq N/2\})$, and $\P_2$ to be the orthogonal projection onto $span(\{p_tp_{t'} : 1 \leq t,t' \leq N/2\})$.

\begin{lemma}\label{lem:proj}
    Given any $f \in \syml$, we may choose coefficients $v_{ij}$ over $i \leq j \leq L$, and symmetric polynomials $\phi_i$ over $i \leq L$, such that:
    \begin{align}
        \P_2 f = \sum_{i\leq j}^L v_{ij} (\P_1 \phi_i) (\P_1\phi_j)~.
    \end{align}
\end{lemma}
\begin{proof}

Consider the general parameterization of $f$ given in Equation~\ref{eq:symnn}.  Because all network activations are analytic, we can write all maps parameterizing $f$ by power series.

Note that the inner product $\langle \cdot, \cdot \rangle_\V$ integrates over a compact domain, therefore the projection $\P_2 f$ will be determined by the value of $f$ restricted to that domain.  Thus, all power series in the sequel will converge uniformly and we may freely interchange infinite sums with each other as well as with inner products.

Explicitly, to parameterize $f$ we write $\psi_l(x_n) = c_{l0} + \sum_{k=1}^\infty \frac{c_{lk}}{\sqrt{k}} x_n^k$ so that $\phi_l(x) = \sum_{n=1}^N \psi_l(x_n) = Nc_{l0} + \sum_{k=1}^\infty c_{lk} \tp_k(x)$.

Because $\rho$ is also given as a power series, it can be equivalently written as a power series with all variables having constant offsets.  So we can subtract the constant terms from every $\phi_l$ and write:
\begin{align}
    \rho(y) &= \sum_{\comp \in \mathbb{N}^L} v_\comp y^\comp ~,\\
    \phi_l & = \sum_{k=1}^\infty c_{lk} \tp_k~,
\end{align}
where $y^\comp = \prod_{n=1}^N y_n^{\comp_n}$.  Hence
\begin{align}
    f & = \rho(\phi_1, \dots, \phi_L) = \sum_{\comp} v_\comp \phi^\comp ~.
\end{align}
We proceed to calculate $\P_2 f$.  To begin, consider $\langle p_t p_{t'}, \phi^\comp \rangle$ for any choice of indices $1 \leq t, t' \leq N/2$.
To illustrate, suppose $\comp_i = \comp_j = \comp_k = 1$ and $\comp$ is $0$ everywhere else.  Then we may write
\begin{align}
    \langle p_t p_{t'}, \phi^\comp \rangle_\V = \langle p_t p_{t'}, \phi_i \phi_j \phi_k \rangle_\V & = \sum_{i'=1}^\infty \sum_{j'=1}^\infty \sum_{k'=1}^\infty c_{ii'} c_{jj'} c_{kk'} \langle p_t p_{t'}, p_{i'} p_{j'} p_{k'} \rangle_\V = 0~.
\end{align}
In other words, after distributing the product $\phi_i \phi_j \phi_k$, we are left with a sum of terms of the form $p_{i'} p_{j'} p_{k'}$.  So treated as partitions, we clearly have $\{i', j', k'\} \neq \{t, t'\}$, where all these indices are positive.  Thus, because $t + t' \leq N$, we can apply the orthogonality property of the inner product to conclude $\langle p_t p_{t'}, p_{i'} p_{j'} p_{k'} \rangle_\V = 0$.

By similar logic, $\langle p_t p_{t'}, \phi^\comp \rangle = 0$ whenever $|\comp| \neq 2$, so we may cancel all such terms in the expansion of $f$ to get
\begin{align*}
    \P_2 f = \P_2 \left(\sum_{\comp \in \N^L} v_\comp \phi^\comp\right)  = \sum_{|\comp| = 2} v_\comp \P_2 \phi^\comp~.
\end{align*}

Here we can simplify notation.  Let $\{e_i\}_{i=1}^L$ denote the standard basis vectors in dimension $L$. Every $\comp \in \N^L$ with $|\comp| = 2$ can be written as $\comp = e_i + e_j$, so let $v_{ij} := v_{e_i + e_j}$.  Then we can rewrite:
$$
    \P_2 f = \sum_{i \leq j}^L v_{ij} \P_2 \phi_i \phi_j~.
$$
Finally, note again by orthogonality we have that $\P_2 (p_{i'} p_{j'}) = 0$ if it is not the case that $1 \leq i', j' \leq N/2$.  So observe that we may pass from $\P_2$ to $\P_1$:
\begin{align}
    \P_2 \phi_i \phi_j &= \P_2 \left(\sum_{i'=1}^\infty c_{ii'} p_{i'} \right) \left(\sum_{j'=1}^\infty c_{jj'} p_{j'} \right) \\
    & = \P_2 \sum_{i'=1}^\infty \sum_{j'=1}^\infty c_{ii'} c_{jj'} p_{i'} p_{j'} \\
    & = \sum_{i'=1}^{N/2} \sum_{j'=1}^{N/2} c_{ii'} c_{jj'} p_{i'} p_{j'} \\
    & = \left(\sum_{i'=1}^{N/2} c_{ii'} p_{i'} \right) \left(\sum_{j'=1}^{N/2} c_{jj'} p_{j'} \right) \\
    & = (\P_1 \phi_i) (\P_1 \phi_j)~.
\end{align}
So ultimately we get
\begin{align}
    \P_2 f & = \sum_{i\leq j}^L v_{ij} (\P_1 \phi_i) (\P_1 \phi_j)~.
\end{align}
\end{proof}

\subsection{Rank Lemma}

The following lemma is a generalization of the the Rank Lemma~\ref{lem:rank-one-dim}, which we will use for both the one- and high-dimensional cases.  Ultimately, for an inner product $\langle \cdot, \cdot \rangle$ with certain orthogonality properties, it allows us to pass from function error $\|f-g\|^2$ to Frobenius norm error $\|F - G\|_F^2$ for some induced matrices $F, G$.

\begin{lemma}\label{lem:rank}
    Consider a commutative algebra equipped with an inner product, and a set of elements $\{p_t\}_{t=1}^T$.  Suppose the terms $p_{\{t,t'\}} = p_t p_{t'}$, indexed by sets $\{t,t'\}$, are pairwise orthogonal, and normalized such that
    \begin{align*}
        \|p_{t} p_{t'}\|^2 \geq \begin{cases} 
           1 & t \neq t' \\
           2 & t = t'
       \end{cases}
    \end{align*}
    
    Consider the terms:
    \begin{align*}
        \phi_l & = \sum_{t=1}^T c_{lt} p_t ~,\\
        f & = \sum_{l \leq l'}^L \frac{v_{ll'}}{1 + \ind_{l=l'}} \phi_l \phi_{l'} ~,\\
        g & = \sum_{t \leq t'}^T \frac{g_{tt'}}{1 + \ind_{t=t'}} p_t p_{t'}~.
    \end{align*}
    
    Then we have the bound
    \begin{align}
        \|f - g\|^2 \geq \frac{1}{2} \|C^T V C - G\|_F^2~,
    \end{align}
    where $C_{lt} = c_{lt}, V_{ll'} = v_{ll'}, G_{tt'} = g_{tt'}$, where we define $V$ and $G$ to be symmetric.
\end{lemma}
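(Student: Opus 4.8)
The plan is to rewrite both $f$ and $g$ in the orthogonal system $\{p_tp_{t'}\}$ (indexed by unordered pairs $\{t,t'\}$), apply Pythagoras, and then simply read off the Frobenius-norm bound; there is no deep step, only careful bookkeeping of the weights $\frac{1}{1+\ind_{l=l'}}$.

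First I would use the symmetry of $V$ to absorb the weights, writing $f = \tfrac12\sum_{l,l'} v_{ll'}\phi_l\phi_{l'}$ (the diagonal $l=l'$ appears once on each side, the off-diagonal twice). Substituting $\phi_l = \sum_t c_{lt}p_t$ and using commutativity of the algebra to collect the coefficient of $p_tp_{t'}$ gives $f = \tfrac12\sum_{t,t'} M_{tt'}\, p_tp_{t'}$ with $M := C^TVC$, which is symmetric since $V$ is. Re-indexing this double sum over unordered pairs turns it into $f = \sum_{t\le t'}\frac{M_{tt'}}{1+\ind_{t=t'}} p_tp_{t'}$, i.e. exactly the shape of the given expression for $g$ but with $M$ in place of $G$. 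Hence $f-g = \sum_{t\le t'}\frac{M_{tt'}-G_{tt'}}{1+\ind_{t=t'}}\, p_tp_{t'}$, a combination of the pairwise-orthogonal elements $p_tp_{t'}$.

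Next, by orthogonality of the $\{p_tp_{t'}\}$ and the fact that $\langle\cdot,\cdot\rangle$ is Hermitian, Pythagoras yields $\|f-g\|^2 = \sum_{t\le t'}\frac{|M_{tt'}-G_{tt'}|^2}{(1+\ind_{t=t'})^2}\,\|p_tp_{t'}\|^2$. Now I would invoke the normalization hypothesis: for $t\ne t'$ the factor $(1+\ind_{t=t'})^{-2}\|p_tp_{t'}\|^2 = \|p_tp_{t'}\|^2 \ge 1$, while for $t=t'$ it is $\tfrac14\|p_t^2\|^2 \ge \tfrac12$. This gives $\|f-g\|^2 \ge \sum_{t<t'}|M_{tt'}-G_{tt'}|^2 + \tfrac12\sum_t |M_{tt}-G_{tt}|^2$. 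Finally, expanding $\|M-G\|_F^2 = \sum_t|M_{tt}-G_{tt}|^2 + 2\sum_{t<t'}|M_{tt'}-G_{tt'}|^2$ (valid since $M-G$ is symmetric) shows that the right-hand side is precisely $\tfrac12\|M-G\|_F^2 = \tfrac12\|C^TVC - G\|_F^2$, which is the claim.

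I do not anticipate a genuine obstacle — the argument is essentially an accounting exercise — but two points need care, and they are exactly where the constants are pinned down: (i) the weights must be tracked so that the diagonal entries pick up the factor $\tfrac12$ rather than $1$, which is the sole place the strengthened normalization $\|p_t^2\|^2\ge 2$ is used; and (ii) one should note the inner product is Hermitian, so the orthogonal expansion produces $|M_{tt'}-G_{tt'}|^2$ (and the Frobenius norm on the right is the one appropriate for complex matrices), rather than a bare square that could be negative.
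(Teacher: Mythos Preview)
Your proposal is correct and follows essentially the same route as the paper: expand $f$ and $g$ in the orthogonal family $\{p_tp_{t'}\}$, identify the coefficients with the entries of $C^TVC$ and $G$, apply Parseval/Pythagoras, and use the normalization lower bounds to collapse to $\tfrac12\|C^TVC-G\|_F^2$. Your presentation is in fact a bit cleaner than the paper's, since symmetrizing the $l,l'$- and $t,t'$-sums at the outset makes $M=C^TVC$ appear immediately and avoids the paper's case-by-case computation of $\langle f, p_{\{t,t'\}}/\|p_{\{t,t'\}}\|\rangle$.
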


\begin{proof}

    To begin, we calculate inner products for $t \neq t'$:
    \begin{align}
       \left\langle f, \frac{p_{\{t,t'\}}}{\|p_{\{t,t'\}}\|} \right\rangle & = \frac{1}{\|p_{\{t,t'\}}\|}\left\langle \sum_{l \leq l'}^L \sum_{t,t'=1}^T \frac{v_{ll'}}{1 + \ind_{l=l'}}  c_{lt} c_{l't'} p_t p_{t'}, p_t p_{t'} \right\rangle \\
        & = \|p_{t} p_{t'}\| \sum_{l \leq l'}^L \frac{v_{ll'}}{1 + \ind_{l=l'}}  (c_{lt} c_{l't'} + c_{lt'} c_{l't})\\
        & = \|p_{t} p_{t'}\| \left( \sum_{l = l'}^L \frac{v_{ll}}{2}  (c_{lt} c_{lt'} + c_{lt'} c_{lt}) + \sum_{l < l'}^L v_{ll'}  (c_{lt} c_{l't'} + c_{lt'} c_{l't}) \right)\\
        & = \|p_{t} p_{t'}\| \left( \sum_{l = l'}^L v_{ll} c_{lt}c_{lt'} + \sum_{l < l'}^L v_{ll'}  (c_{lt} c_{l't'} + c_{lt'} c_{l't}) \right)~.
    \end{align}
    Defining $v_{ll'} = v_{l'l}$, we may reindex and write the second sum as:
    \begin{align}
        \sum_{l < l'}^L v_{ll'}  (c_{lt} c_{l't'} + c_{lt'} c_{l't}) & = \sum_{l < l'}^L v_{ll'}c_{lt} c_{l't'} + \sum_{l < l'}^L v_{ll'} c_{lt'} c_{l't} \\
        & = \sum_{l < l'}^L v_{ll'}c_{lt} c_{l't'} + \sum_{l > l'}^L v_{ll'} c_{lt} c_{l't'}~.
    \end{align}
    
    So putting this together we get
    \begin{align*}
        \left\langle f, \frac{p_{\{t,t'\}}}{\|p_{\{t,t'\}}\|} \right\rangle = \|p_{t} p_{t'}\| \left( \sum_{l, l'}^L v_{ll'} c_{lt}c_{l't'} \right) =  \|p_{t} p_{t'}\| [C^TVC]_{t,t'}~.
    \end{align*}
    By a similar calculation we conclude:
    \begin{align*}
        \left\langle f, \frac{p_{\{t,t\}}}{\|p_{\{t,t\}}\|} \right\rangle = \frac{\|p_{t} p_{t}\|}{2} [C^TVC]_{t,t}~.
    \end{align*}
    For $g$, we can directly calculate:
    \begin{align}
        \left\langle g, \frac{p_{\{t,t'\}}}{\|p_{\{t,t'\}}\|} \right\rangle & = \|p_{t} p_{t'}\|[G]_{t,t'} \\
        \left\langle g, \frac{p_{\{t,t\}}}{\|p_{\{t,t\}}\|} \right\rangle & = \frac{\|p_{t} p_{t}\|}{2} [G]_{t,t}~.
    \end{align}
    Finally, by Parseval's Theorem we calculate:
    \begin{align}
        \|f - g\|^2 & = \sum_{t} \left( \left\langle f, \frac{p_{\{t,t\}}}{\|p_{\{t,t\}}\|} \right\rangle - \left\langle g, \frac{p_{\{t,t\}}}{\|p_{\{t,t\}}\|} \right\rangle \right)^2 + \sum_{t < t'}^T \left( \left\langle f, \frac{p_{\{t,t'\}}}{\|p_{\{t,t'\}}\|} \right\rangle - \left\langle g, \frac{p_{\{t,t'\}}}{\|p_{\{t,t'\}}\|} \right\rangle \right)^2 \\
        & = \sum_{t} \left( \left\langle f, \frac{p_{\{t,t\}}}{\|p_{\{t,t\}}\|} \right\rangle - \left\langle g, \frac{p_{\{t,t\}}}{\|p_{\{t,t\}}\|} \right\rangle \right)^2 + \frac{1}{2} \sum_{t \neq t'}^T \left( \left\langle f, \frac{p_{\{t,t'\}}}{\|p_{\{t,t'\}}\|} \right\rangle - \left\langle g, \frac{p_{\{t,t'\}}}{\|p_{\{t,t'\}}\|} \right\rangle \right)^2 \\
        & = \sum_{t}^T \frac{\|p_{\{t,t\}}\|^2}{4}  [C^TVC - G]_{t,t}^2 + \frac{1}{2} \sum_{t \neq t'}^T \|p_{\{t,t'\}}\|^2 \cdot [C^TVC - G]_{t,t'}^2 \\
        & \geq \frac{1}{2} \sum_{t}^T [C^TVC - G]_{t,t}^2 + \frac{1}{2} \sum_{t \neq t'}^T [C^TVC - G]_{t,t'}^2~,
    \end{align}
    where in the last line we use our assumption on the lower bound of $\|p_{\{t,t'\}}\|^2$ and $\|p_{\{t,t\}}\|^2$.
    Hence:
    \begin{align}
        \|f - g\|^2 \geq \frac{1}{2} \|C^TVC - G\|_F^2~.
    \end{align}
\end{proof}

\subsection{Proof of one-dimensional Lower Bound}

\begin{theorem}
    Let $D = 1$.  Then using the Vandermonde $L_2$ inner product over symmetric polynomials
    \begin{align}
        \max_{\|g\|_\V = 1} \min_{f\in\syml} \|f - g\|_\V^2 \geq 1 - \frac{2L}{N}~.
    \end{align}
    In particular, for $L = \frac{N}{4}$ we recover a constant lower bound.
\end{theorem}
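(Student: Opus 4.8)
The plan is to pick a hard target $g$ supported only on the diagonal products $p_tp_t$, reduce the approximation error to a low-rank matrix approximation via Lemmas~\ref{lem:proj} and~\ref{lem:rank}, and finish with an Eckart--Young computation. First I would take $g = \frac{1}{\sqrt N}\sum_{t=1}^{N/2} p_t p_t$. Since this is a combination of products of two powersums of total degree at most $N$, it satisfies $\P_2 g = g$; and by Theorem~\ref{thm:hall-inner-product-explicit} the $p_tp_t$ are pairwise orthogonal with $\|p_tp_t\|_\V^2 = z_{\{t,t\}} = 2$, so $\|g\|_\V^2 = \frac1N\cdot\frac N2\cdot 2 = 1$, making $g$ a legitimate competitor in the outer maximum.

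Next, for an arbitrary $f\in\syml$, I would use that $\P_2$ is an orthogonal projection with $\P_2 g = g$ to get $\|f-g\|_\V^2 \ge \|\P_2 f - \P_2 g\|_\V^2 = \|\P_2 f - g\|_\V^2$. Then I would invoke Lemma~\ref{lem:proj} to write $\P_2 f = \sum_{i\le j}^L v_{ij}(\P_1\phi_i)(\P_1\phi_j)$, where each $\P_1\phi_i = \sum_{t=1}^{N/2} c_{it}p_t$ is a finite combination of $p_1,\dots,p_{N/2}$. This is exactly the setting of Lemma~\ref{lem:rank} with $T = N/2$ and the elements $\{p_t\}_{t=1}^{N/2}$ (after absorbing the $1+\ind_{i=j}$ factor into the coefficients): the normalization hypothesis holds with equality, $\|p_tp_{t'}\|_\V^2 = z_{\{t,t'\}}$ being $1$ for $t\ne t'$ and $2$ for $t=t'$, which is valid since $t+t'\le N$. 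The lemma then yields $\|\P_2 f - g\|_\V^2 \ge \tfrac12\|C^TVC - G\|_F^2$, where $G = \frac{2}{\sqrt N}I\in\mathbb{C}^{N/2\times N/2}$ (from the diagonal form of $g$) and $F := C^TVC$ has $\text{rank}(F)\le\text{rank}(V)\le L$.

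It then remains to lower bound $\min_{\text{rank}(F)\le L}\|F - \frac{2}{\sqrt N}I\|_F^2$ over $F\in\mathbb{C}^{N/2\times N/2}$. By Eckart--Young the minimum equals the sum of the $\frac N2 - L$ smallest squared singular values of $\frac{2}{\sqrt N}I$, namely $(\frac N2 - L)\cdot\frac4N$. Chaining the inequalities gives $\min_{f\in\syml}\|f-g\|_\V^2 \ge \tfrac12(\tfrac N2 - L)\tfrac4N = 1 - \frac{2L}{N}$, and since $\|g\|_\V = 1$ this bounds the outer maximum; the specialization $L = N/4$ gives the constant $\tfrac12$.

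The genuine content here is just the assembly of the two preceding lemmas — this one-dimensional statement is a warmup, and the real difficulty (constructing an inner product that restores orthogonality of the multisymmetric powersums) is deferred to the high-dimensional case. The one place to be careful is the appeal to Theorem~\ref{thm:hall-inner-product-explicit}: its orthogonality and the identity $\|p_tp_{t'}\|_\V^2 = z_{\{t,t'\}}$ only hold for partitions of weight at most $N$, which is precisely why $\P_1$ and $\P_2$ are defined to truncate at $t,t'\le N/2$; everything else reduces to a one-line SVD computation.
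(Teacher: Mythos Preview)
Your proposal is correct and essentially identical to the paper's proof: you choose the same hard function $g=\frac{1}{\sqrt N}\sum_{t=1}^{N/2}p_tp_t$, verify $\|g\|_\V=1$ and $\P_2 g=g$, apply Lemma~\ref{lem:proj} followed by Lemma~\ref{lem:rank}, and finish with the same rank-$L$ approximation bound on $\tfrac{2}{\sqrt N}I$. The only cosmetic difference is that you name the last step Eckart--Young while the paper just says ``standard properties of the SVD.''
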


\begin{proof}

We first build our counterexample $g$ by choosing its coefficients in the powersum basis, say:
\begin{align}
    g & = \frac{1}{\sqrt{N}} \sum_{t=1}^{N/2}  \tp_t \tp_t~.
\end{align}
From orthogonality and the fact that $\|p_tp_t\|_\V^2 = 2$ it's clear that $\|g\|_\V = 1$, and note that $\P_2 g = g$.
Applying Lemma~\ref{lem:proj}, for any $f \in \syml$ we can write $\P_2 f$ in the form 
\begin{align}
    \P_2 f & = \sum_{i\leq j}^L v_{ij} (\P_1 \phi_i) (\P_1 \phi_j)~.
\end{align}
One may also confirm that the Vandermonde inner product satisfies the requirements of Lemma~\ref{lem:rank} when restricted to the range of $\P_2$, owing to the orthogonality property and the fact that for $1 \leq t, t' \leq N/2$:
    \begin{align*}
        \langle p_{t} p_{t'}, p_{t} p_{t'} \rangle_\V = \begin{cases} 
           1 & t \neq t' \\
           2 & t = t'
       \end{cases}
    \end{align*}
So we've met all the necessary requirements to apply Lemma~\ref{lem:rank} to $\P_2 f$ and $\P_2 g$, thus we have:
\begin{align}
    \min_{f\in\syml} \|f - g\|_\V^2 & \geq \min_{f\in\syml} \|\P_2 f - \P_2 g \|_\V^2 \\
    & \geq \min_{C,V} \frac{1}{2} \|C^TVC - 2 * \frac{1}{\sqrt{N}} I\|_F^2 \\
    & = \min_{C,V} \frac{1}{N/2} \|C^TVC - I\|_F^2~,
\end{align}
where the factor of $2$ appears based on the definition of the matrix $G$ in Lemma~\ref{lem:rank} 

Note that $CVC^T \in \mathbb{C}^{N/2 \times N/2}$, but $V \in \mathbb{C}^{L \times L}$.  So if $N/2 > L$, then $CVC^T$ is a rank-deficient approximation of the identity, and clearly we have
\begin{align}
    \min_{f\in\syml} \|f - g\|_\V^2 
    & \geq \frac{N/2 - L}{N/2} = 1 - \frac{2L}{N}~.
\end{align}
\end{proof}
\section{Exact statement of Main Result}

\subsection{Theorem Statement}

We begin by restating the main result, where for convenience we will change from $N$ set elements to $2N$.  

We introduce the notation $\hat{D} := \min\left(D, \lfloor\sqrt{N/2}\rfloor\right)$.
We also introduce the $L_2$ inner product
\begin{align}\label{eq:inner_prod_a}
    \langle f, g \rangle_\A = \mathbb{E}_{y \sim V; q,r \sim (S^1)^D} \left[f(X(y,q,r)) \conj{g(X(y,q,r))}\right]~,
\end{align}
where the set input $X(y,q,r) \in \mathbb{C}^{D \times 2N}$ with matrix entries $x_{dn}(y,q,r)$ is defined by:
\begin{align}
    x_{dn}(y,q,r) = \begin{cases} 
          q_d y_{n} & 1 \leq n \leq N ~,\\
          r_d y_{n-N} & N+1 \leq n \leq 2N~.
       \end{cases}
\end{align}

And we restate the activation assumption in this new notation:

\begin{assumption}\label{ass:act-real}
    The activation $\sigma : \mathbb{C} \rightarrow \mathbb{C}$ is analytic, and for a fixed $D, N$ there exist two-layer neural networks $f_1, f_2$ using $\sigma$, both with $O\left(D^2 + D \log \frac{D}{\epsilon}\right)$ width and $O(D \log D)$ bounded weights, such that:
        \begin{align}
        \sup_{|\xi| \leq 3} |f_1(\xi) - \xi^2| \leq \epsilon, \qquad
        \sup_{|\xi| \leq 3} \left|f_2(\xi) - \left(1 - (\xi/4)^{\min(D, \sqrt{N/2})}\right) \frac{\xi - 1/4}{\xi/4 - 1} \right| \leq \epsilon
    \end{align}
\end{assumption}

Then our main theorem is thusly:

\begin{theorem}[Exponential width-separation]\label{thm:main-result}
    Fix $2N$ and $D$ such that $\hat{D} > 1$, and consider set elements $X \in \mathbb{C}^{D \times 2N}$.  Define
    \begin{align}
        g(X) = -\frac{4N^2}{4^{\hat{D}}} + \sum_{n,n'=1}^{2N}\prod_{d=1}^{\hat{D}} \left(1 - (x_{dn}x_{dn'}/4)^{\hat{D}}\right)\frac{x_{dn}x_{dn'} - 1/4}{x_{dn}x_{dn'}/4 - 1} \\
    \end{align}
    and $g' = \frac{g}{\|g\|_\A}$.
    Then the following is true:
    \begin{itemize}
        \item For $L \leq N^{-2}\exp(O(\hat{D}))$,
        \begin{align}
            \min_{f \in \syml} \|f - g'\|_\A^2 \geq \frac{1}{12}~.
        \end{align}
        \item For $L = 1$, there exists $f \in \symlt$, parameterized with an activation $\sigma$ that satisfies Assumption~\ref{ass:act-real}, with width $poly(N,D,1/\epsilon)$, depth $O(\log D)$, and maximum weight magnitude $O(D \log D)$ such that over the unit torus:
        \begin{align}
            \|f - g'\|_\infty \leq \epsilon~.
        \end{align}
    \end{itemize}
\end{theorem}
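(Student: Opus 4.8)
I prove the two bullets separately. The lower bound re-runs the one-dimensional argument — project any $f\in\syml$ to a low-rank bilinear form (Lemma~\ref{lem:proj}), pass to a Frobenius distance (Lemma~\ref{lem:rank}), then use low-rank approximation — but with the multisymmetric powersums $\bp_\alpha$ replacing the $p_k$; the only genuinely new ingredient is an inner product under which the low-degree $\bp_\alpha$ and their pairwise products are orthogonal. The basic reduction is the identity $\bp_\alpha(X(y,q,r))=(q^\alpha+r^\alpha)\,p_{|\alpha|}(y)$ for $\alpha\ne 0$, immediate from the block form of $X(y,q,r)$: it writes every multisymmetric powersum on $\mathrm{supp}(\nu)$ as a torus monomial times a one-variable powersum. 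The upper bound is a direct construction of a width-$1$ pairwise network mimicking the closed form of $g$. The crux of the whole argument is the inner-product construction (Theorem~\ref{thm:bilinear-form}), discussed at the end.

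\textbf{Lower bound.} Combining the identity above with $\langle q^\alpha,q^\beta\rangle_\S=\ind_{\alpha=\beta}$ and Theorem~\ref{thm:hall-inner-product-explicit} applied to the $y$-variables — valid because every multi-index that occurs has $|\alpha|\le\hat D^2\le N/2$, which is exactly the constraint forcing the cap $\hat D=\min(D,\lfloor\sqrt{N/2}\rfloor)$ — I compute the Gram matrix of $\{\bp_\alpha\bp_\beta:|\alpha|,|\beta|\le N/2\}$ under $\langle\cdot,\cdot\rangle_\A$. Its $y$-part alone kills $\langle\bp_\alpha\bp_\beta,\bp_{\gamma_1}\cdots\bp_{\gamma_m}\rangle_\A$ for every $m\ne 2$ (partitions of different lengths are distinct under $\langle\cdot,\cdot\rangle_V$), which yields the multisymmetric Projection Lemma exactly as in Lemma~\ref{lem:proj}: expanding $f=\sum_\eta v_\eta\phi^\eta$ (all power series converging uniformly on the compact $\mathrm{supp}(\nu)$) and projecting onto $\mathrm{span}\{\bp_\alpha\bp_\beta:|\alpha|,|\beta|\le N/2\}$ kills every $|\eta|\ne 2$ term and gives $\P_2 f=\sum_{i\le j}^L v_{ij}(\P_1\phi_i)(\P_1\phi_j)$. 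On $2$-factor products the Gram matrix is positive-definite but carries off-diagonal mass from the ``both powersums land in the same half of the set'' contributions, namely when $\alpha+\beta=\gamma+\delta$ and $\{|\alpha|,|\beta|\}=\{|\gamma|,|\delta|\}$; I cancel exactly this by taking $\nu_0$ to be the law of $X$ with columns $x_n=s_n q$, where $q\sim(S^1)^D$ and $s$ are the eigenvalues of a $2N$-dimensional Haar unitary, so that $\bp_\alpha\mapsto q^\alpha\,\tilde p_{|\alpha|}(s)$ and $2\langle\bp_\alpha\bp_\beta,\bp_\gamma\bp_\delta\rangle_{\AA}$ reproduces precisely that same-half mass. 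Then $\langle\cdot,\cdot\rangle_\diff=\langle\cdot,\cdot\rangle_\A-2\langle\cdot,\cdot\rangle_{\AA}$ is diagonal on the (linearly independent) products $\bp_\alpha\bp_\beta$ with positive diagonal entries $2z_{\{|\alpha|,|\beta|\}}\ge 2$ (and $8$ when $\alpha=\beta$), hence a genuine inner product on their span satisfying the hypotheses of Lemma~\ref{lem:rank}; this is Theorem~\ref{thm:bilinear-form}. Now from $\langle f,f\rangle_\A-\langle f,f\rangle_\diff=2\|f\|_{\AA}^2\ge 0$ we get $\|\cdot\|_\A\ge\|\cdot\|_\diff$, so with $\P_2$ the $\langle\cdot,\cdot\rangle_\A$-orthogonal projection and $g'$ (constant term cancelled by $-4N^2/4^{\hat D}$) satisfying $\P_2 g'=g'$, Lemma~\ref{lem:rank} gives $\min_{f\in\syml}\|f-g'\|_\A^2\ge\min_{f\in\syml}\|\P_2 f-g'\|_\diff^2\ge\tfrac12\min_{\mathrm{rank}(F)\le L}\|F-G\|_F^2$, where $G=\mathrm{diag}(2c_\alpha/\|g\|_\A)_\alpha$ with $c_\alpha=|\alpha|\prod_d c_{\alpha_d}$ and the $c_k$ the coefficients of the truncated Blaschke polynomial $(1-(\xi/4)^{\hat D})\tfrac{\xi-1/4}{\xi/4-1}=-(\xi-1/4)\sum_{k=0}^{\hat D-1}(\xi/4)^k$. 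Orthogonality of the $\bp_\alpha^2$ under $\langle\cdot,\cdot\rangle_\A$ gives $\|g\|_\A^2=12\sum_\alpha c_\alpha^2$, hence $\|G\|_F^2=1/3$; meanwhile $\max_\alpha|\alpha|\prod_d|c_{\alpha_d}|$ is attained at the all-ones multi-index and equals $\hat D\,(15/16)^{\hat D}$, so $\|G\|_{\mathrm{op}}^2=\exp(-\Omega(\hat D))$ (Lemma~\ref{lem:g-prop} supplying the $\Theta(1)$ bounds on $\|g\|_\A$). Thus $\min_{\mathrm{rank}(F)\le L}\|F-G\|_F^2\ge\|G\|_F^2-L\|G\|_{\mathrm{op}}^2\ge 1/6$ once $L\le N^{-2}\exp(O(\hat D))$ (weakening $\hat D^{-2}$ to $N^{-2}$), and the factor $\tfrac12$ delivers $\|f-g'\|_\A^2\ge 1/12$.

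\textbf{Upper bound.} Take $\rho$ the identity and $\psi_1(x_n,x_{n'})=\tfrac{1}{\|g\|_\A}\big(-4^{-\hat D}+\prod_{d=1}^{\hat D}f_2(x_{dn}x_{dn'})\big)$ with $f_2$ the two-layer net of Assumption~\ref{ass:act-real}, so $\phi_1(X)=\sum_{n,n'}\psi_1(x_n,x_{n'})$ is $g'$ with each true truncated-Blaschke factor replaced by $f_2$. The network is built in three stages (Theorem~\ref{thm:upper-bound-explicit}): (i) for each $d\le\hat D$ form $x_{dn}x_{dn'}=\tfrac14[(x_{dn}+x_{dn'})^2-(x_{dn}-x_{dn'})^2]$ via two copies of $f_1\approx(\cdot)^2$ on arguments of modulus $\le 2\le 3$; (ii) apply $f_2$ to the $\hat D$ outputs, whose arguments have modulus $\approx 1\le 3$ on the torus; (iii) multiply the $\hat D$ results — each of modulus within $1\pm 4^{-\hat D}$ of $1$ since the Blaschke factor is unimodular on $S^1$ — in a balanced binary tree of depth $O(\log\hat D)=O(\log D)$, every node realising $uv=\tfrac14[(u+v)^2-(u-v)^2]$ via $f_1$ on arguments of modulus $\le 3$. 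Since all intermediate magnitudes are $O(1)$, the tree error is $O(\hat D)$ times the accuracy of $f_1,f_2$, and the outer sum over $(2N)^2$ pairs together with the $\|g\|_\A^{-1}$ scaling inflate this by $\mathrm{poly}(N,D)$; running $f_1,f_2$ at accuracy $\epsilon/\mathrm{poly}(N,D)$ thus gives $\|f-g'\|_\infty\le\epsilon$ on the torus, which by Assumption~\ref{ass:act-real} costs width $\mathrm{poly}(N,D,1/\epsilon)$, depth $O(\log D)$, and weights $O(D\log D)$ (the constants $4^{-\hat D}$ and $\|g\|_\A^{-1}$ being $O(1)$ by Lemma~\ref{lem:g-prop}).

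\textbf{Main obstacle.} The hard part is Theorem~\ref{thm:bilinear-form}: producing a companion distribution $\nu_0$ for which $\langle\cdot,\cdot\rangle_\A-2\langle\cdot,\cdot\rangle_{\AA}$ simultaneously annihilates every off-diagonal Gram entry of the low-degree multisymmetric powersum products and remains positive-definite on their span — positive-definiteness being generically destroyed by subtracting inner products, and recoverable here only because of the factorized structure $\bp_\alpha=(q^\alpha+r^\alpha)p_{|\alpha|}(y)$ on this finite-dimensional subspace. Pinning down the right $\nu_0$ and verifying that the constant $2$ makes exactly the same-half contributions cancel is the delicate step; everything else — the projection and rank lemmas, the $\|G\|_F$-versus-$\|G\|_{\mathrm{op}}$ count, and the binary-tree construction — is routine given this ingredient and the one-dimensional warm-up.
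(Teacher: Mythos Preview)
Your proposal is correct and tracks the paper's argument step for step: the projection lemma via the $\langle\cdot,\cdot\rangle_\A$ inner product, the passage to the diagonal form $\langle\cdot,\cdot\rangle_\diff$ and Lemma~\ref{lem:rank}, the $\|G\|_F^2=1/3$ versus $\|G\|_{\mathrm{op}}^2\le O(N^2)(15/16)^{2\hat D}$ count, and the binary-tree $\symlt$ construction are all as in the paper.

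The one genuine variation is your choice of $\nu_0$. The paper takes the much simpler route of keeping the same two-block structure $X(y,q,r)$ and just setting $r=0$, so that $\bp_\alpha\mapsto q^\alpha p_{|\alpha|}(y)$ under $\nu_0$; this immediately gives $\langle\bp_\alpha\bp_\beta,\bp_\gamma\bp_\delta\rangle_{\AA}=(1+\ind_{|\alpha|=|\beta|})\,\ind_{\{|\alpha|,|\beta|\}=\{|\gamma|,|\delta|\}}\,\ind_{\alpha+\beta=\gamma+\delta}$, exactly the off-diagonal piece of Lemma~\ref{lem:partialortho}. Your single-block construction $x_n=s_nq$ with $s$ coming from a $2N$-dimensional Haar unitary reproduces the identical Gram entries on the span of pairwise products (since $|\alpha|+|\beta|\le N\le 2N$ keeps Theorem~\ref{thm:hall-inner-product-explicit} valid), so it works equally well; the paper's version is just a smaller perturbation of the original measure and avoids introducing a second Vandermonde dimension. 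Either way the delicate step you flag---that the subtraction stays positive-definite on $\mathrm{im}(\P_2)$---reduces to the same diagonal computation, and your identification of this as the crux is exactly right.
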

\begin{remark}
Let us remark about one aspect that will ease exposition. In the sequel, we will assume $D \leq \sqrt{N/2}$ so that $\hat{D} = D$.  This is not a necessary assumption; in the case that $D > \sqrt{N/2}$, we can simply replace all instances of $D$ with $\hat{D}$ in the definition of $g$ and the subsequent proof.
Because the data distribution has each row of $X \in \mathbb{C}^{D \times 2N}$ is i.i.d., the proof goes through exactly.  Indeed, it would be equivalent to truncating each set vector to the first $\hat{D}$ elements.  This will only impact the bounds by replacing $D$ with $\hat{D}$, in which circumstances we will clearly state.
\end{remark}
\subsection{Proof Roadmap}

Let us roadmap the general proof.  

In Section~\ref{sec:l2-inner-product}, we justify the inner product $\langle \cdot, \cdot \rangle_\A$ and show it can be used to prove a high-dimensional analogue of the Projection Lemma (see Lemma~\ref{lem:projhigh}).  In Section~\ref{sec:diagonal-inner-product} we further introduce a second inner product, whose orthogonality properties (see Theorem~\ref{thm:bilinear-form}) allow us to apply the Rank Lemma~\ref{lem:rank}.  In Section~\ref{sec:actual-proof-of-lower-bound}, we combine these results to first prove a lower bound for a simple choice of hard function (see Theorem~\ref{thm:high-separation-easy}).   Because this simple choice is not suitable for demonstrating the upper bound, we then conclude by showing the hard function $g'$ also evinces a lower bound via a similar argument (see Theorem~\ref{thm:high-separation-hard}).

In Section~\ref{sec:construct-g}, we demonstrate the properties of the hard function $g$, by constructing the pieces of $g$ one by one and controlling their behavior, leading to Lemma~\ref{lem:g-prop} which yields all the properties we need about $g$ for the rest of the proof.

In Section~\ref{sec:upper-bound} we complete the proof of the upper bound.  Specifically, in Section~\ref{sec:exact-rep} we show how to write $g'$ exactly in an analogous form to $\symlt$, but using very specific activations.  In Section~\ref{sec:approx-rep} we write an approximation of this network in $\symlt$ using a given activation, and in Section~\ref{sec:proof-upper-bound} we control the error between these two networks.
\section{Lower Bound of Main Result}\label{sec:lower-bound-app}

\subsection{An $L_2$ inner product}\label{sec:l2-inner-product}

As discussed in Section~\ref{sec:lower-bound}, we must first define an appropriate $L_2$ inner product, before we can prove a lower bound on function approximation.  To that end, we will define an input distribution for the set inputs $X$.

Let us introduce several random variables: let $y \sim V$ as in the definition of the inner product $\langle \cdot, \cdot \rangle_\V$ over $N$ variables.  Let $q$ and $r$ be two random vectors of dimension $D$, with each entry $i.i.d.$ uniform on $S^1$.

Then we can define an input distribution for $X \in \mathbb{C}^{D \times 2N}$ with matrix entries $x_{dn}$:
\begin{align}
    x_{dn} = \begin{cases} 
          q_d y_{n} & 1 \leq n \leq N \\
          r_d y_{n-N} & N+1 \leq n \leq 2N~.
       \end{cases}
\end{align}

The point of this assignment is how it transforms multisymmetric power sums:
\begin{align}
    \bp_\alpha(X) & = \frac{1}{\sqrt{|\alpha|}}\sum_{n=1}^{2N} \prod_d x_{dn}^{\alpha_d} \\
    & = \frac{1}{\sqrt{|\alpha|}} \sum_{n=1}^{N} \prod_d y_{n}^{\alpha_d} q_d^{\alpha_d} + \frac{1}{\sqrt{|\alpha|}} \sum_{n=1}^{N} \prod_d y_{n}^{\alpha_d} r_d^{\alpha_d} \\
    & = p_{|\alpha|}(y) \cdot (q^\alpha + r^\alpha)~.
\end{align}
Then as stated before we have the inner product:
\begin{align}\label{eq:inner_prod_a}
    \langle f, g \rangle_\A = \mathbb{E}_{y \sim V, q \sim (S^1)^D, r \sim (\S)^D} \left[f(X) \conj{g(X)}\right]~.
\end{align}
From our choices above we may use separability to write $\langle \cdot, \cdot \rangle_\A$ in terms of previously introduced inner products.  For example:
\begin{align}
    \langle \bp_\alpha, \bp_\beta \rangle_\A & = \mathbb{E}_{y,q,r}\left[p_{|\alpha|}(y) (q^\alpha + r^\alpha) \conj{p_{\beta|}(y) (q^\beta + r^\beta)} \right] \\
    & = \mathbb{E}_y\left[p_{|\alpha|}(y)  \conj{p_{\beta|}(y)} \right] 
    \mathbb{E}_{q,r}\left[(q^\alpha + r^\alpha) \conj{(q^\beta + r^\beta)} \right] \\
    & = \langle p_{|\alpha|}, p_{|\beta|} \rangle_\V \cdot \langle q^\alpha + r^\alpha, q^\beta + r^\beta \rangle_\S~.
\end{align}
We can now observe this inner product grants a ``partial" orthogonality:
\begin{lemma}\label{lem:partialortho}
    Consider $\alpha, \beta \in \N^D$ with $1 \leq |\alpha|, |\beta| \leq N/2$.  Then for $\gamma_k \in \N^{D} \setminus \{0\}$, if $K \neq 2$
    \begin{align}
        \left\langle \tbp_\alpha \tbp_\beta, \prod_{k=1}^K \tbp_{\gamma_k} \right\rangle_{\A} = 0~.
    \end{align}
    Otherwise, for $K = 2$, we have:
    \begin{align}\label{eq:twoprod}
        \langle \tbp_\alpha \tbp_\beta, \tbp_\gamma \tbp_\delta \rangle_{\A} =
          2 \cdot (1 + \ind_{|\alpha| = |\beta|}) \cdot \ind_{\{|\alpha|, |\beta|\} = \{|\gamma|, |\delta|\}} \cdot (\ind_{\alpha + \beta = \gamma + \delta} + \ind_{(\alpha, \beta) = (\gamma, \delta)} + \ind_{(\alpha, \beta) = (\delta, \gamma)})~.
    \end{align}
\end{lemma}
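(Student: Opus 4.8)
The plan is to build directly on the two identities just established before the statement: for the randomized input $X = X(y,q,r)$ we have $\bp_\alpha(X) = p_{|\alpha|}(y)\,(q^\alpha + r^\alpha)$, and since $y$, $q$, $r$ are independent the inner product $\langle\cdot,\cdot\rangle_\A$ factors through $\langle\cdot,\cdot\rangle_\V$ on $y$ and $\langle\cdot,\cdot\rangle_\S$ on $(q,r)$. Substituting the first identity into $\bp_\alpha\bp_\beta$ and into $\prod_{k=1}^K \bp_{\gamma_k}$ and using independence, the quantity to evaluate becomes the product of a Vandermonde factor $\langle p_{|\alpha|}p_{|\beta|},\ \prod_{k=1}^K p_{|\gamma_k|}\rangle_\V$ and a torus factor $\mathbb{E}_{q,r}\big[(q^\alpha+r^\alpha)(q^\beta+r^\beta)\,\overline{\prod_{k=1}^K (q^{\gamma_k}+r^{\gamma_k})}\big]$. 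So it suffices to evaluate these two factors and multiply.

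For $K \neq 2$ the Vandermonde factor already vanishes. View $\lambda := \{|\alpha|,|\beta|\}$ and $\mu := \{|\gamma_1|,\dots,|\gamma_K|\}$ as integer partitions — legitimate because $|\alpha|,|\beta|\ge 1$ and each $|\gamma_k|\ge 1$ — so that $l(\lambda) = 2$ and $l(\mu) = K$. Since $|\alpha|+|\beta|\le N$, Theorem~\ref{thm:hall-inner-product-explicit} gives $\langle p_\lambda,p_\mu\rangle_\V = z_\lambda\,\ind_{\lambda=\mu}$, which is $0$ as partitions of different length cannot coincide. (This also covers $K=0$ and $K=1$: $\lambda$ is a nonempty length-$2$ partition.)

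For $K=2$, writing $\gamma=\gamma_1,\ \delta=\gamma_2$, the Vandermonde factor is $z_{\{|\alpha|,|\beta|\}}\,\ind_{\{|\alpha|,|\beta|\}=\{|\gamma|,|\delta|\}}$, with $z_{\{|\alpha|,|\beta|\}} = 1 + \ind_{|\alpha|=|\beta|}$ straight from the definition of $z_\lambda$ as a product of factorials of part-multiplicities. For the torus factor I would expand $(q^\alpha+r^\alpha)(q^\beta+r^\beta) = q^{\alpha+\beta} + q^\alpha r^\beta + r^\alpha q^\beta + r^{\alpha+\beta}$ and likewise for $(q^\gamma+r^\gamma)(q^\delta+r^\delta)$, then pair the resulting monomials using $\langle q^a,q^b\rangle_\S = \ind_{a=b}$, the same identity for $r$, independence of $q$ and $r$, and $\mathbb{E}[q^a]=\ind_{a=0}$. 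Because none of $\alpha,\beta,\gamma,\delta$ is zero, the only surviving pairings are $q^{\alpha+\beta}$ with $q^{\gamma+\delta}$ and $r^{\alpha+\beta}$ with $r^{\gamma+\delta}$ (each contributing $\ind_{\alpha+\beta=\gamma+\delta}$), together with the four pairings of a mixed monomial from the first expansion against a mixed monomial from the second (contributing $\ind_{(\alpha,\beta)=(\gamma,\delta)}$ twice and $\ind_{(\alpha,\beta)=(\delta,\gamma)}$ twice). Hence the torus factor equals $2\big(\ind_{\alpha+\beta=\gamma+\delta} + \ind_{(\alpha,\beta)=(\gamma,\delta)} + \ind_{(\alpha,\beta)=(\delta,\gamma)}\big)$; multiplying by the Vandermonde factor gives exactly the claimed identity.

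The calculation is essentially routine. The two places that need care are checking $|\alpha|+|\beta|\le N$ so that Theorem~\ref{thm:hall-inner-product-explicit} applies to the Vandermonde factor, and, in the $K=2$ case, keeping the bookkeeping of the sixteen monomial pairings in the torus factor straight while noting that the constraint $\ind_{\{|\alpha|,|\beta|\}=\{|\gamma|,|\delta|\}}$ produced by the Vandermonde factor is a genuine extra condition (not implied by $\alpha+\beta=\gamma+\delta$), so that the final answer is precisely the product of the two factors. I do not anticipate a real obstacle.
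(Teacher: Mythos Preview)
Your proposal is correct and essentially identical to the paper's proof: both factor the $\langle\cdot,\cdot\rangle_\A$ inner product via separability into a Vandermonde factor $\langle p_{|\alpha|}p_{|\beta|}, \prod_k p_{|\gamma_k|}\rangle_\V$ and a torus factor over $(q,r)$, kill the $K\neq 2$ case by the partition-length mismatch in Theorem~\ref{thm:hall-inner-product-explicit}, and for $K=2$ compute the Vandermonde factor as $(1+\ind_{|\alpha|=|\beta|})\ind_{\{|\alpha|,|\beta|\}=\{|\gamma|,|\delta|\}}$ and the torus factor by expanding the sixteen monomial pairings and retaining only those not annihilated by $\mathbb{E}[r^\delta]=0$ (or the analogous vanishing), yielding $2(\ind_{\alpha+\beta=\gamma+\delta}+\ind_{(\alpha,\beta)=(\gamma,\delta)}+\ind_{(\alpha,\beta)=(\delta,\gamma)})$. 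The care points you flag --- checking $|\alpha|+|\beta|\le N$ and the monomial bookkeeping --- are exactly the ones the paper attends to.
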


\begin{proof}

By separability, we can confirm that 
\begin{align}
    \langle \tbp_\alpha \tbp_\beta, \prod_{k=1}^K \tbp_{\gamma_k} \rangle_{\A} = \langle p_{|\alpha|} p_{|\beta|}, \prod_{k=1}^K p_{|\gamma_k|} \rangle_\V \cdot C~,
\end{align}
    where $C$ is the value of the expectation on the random variables $q$ and $r$.  Thus if $K \neq 2$, because $|\alpha| + |\beta| \leq N$, this term is 0 by orthogonality of the Vandermonde inner product.
    
For the $K=2$ case, we begin again by using separability:
\begin{align}
    \langle \tbp_\alpha \tbp_\beta, \tbp_\gamma \tbp_\delta \rangle_{\A} & = \left\langle \tp_{|\alpha|} \tp_{|\beta|} , \tp_{|\gamma|} \tp_{|\delta|} \right\rangle_\V \cdot \left\langle (q^\alpha + r^\alpha) (q^\beta + r^\beta), (q^\gamma + r^\gamma) (q^\delta + r^\delta)  \right\rangle_\S~. 
\end{align}
Let's consider first the inner product of power sums.  Plugging in the definition of the normalizing constant $z_\lambda$ gives:
\begin{align*}
    \left\langle \tp_{|\alpha|} \tp_{|\beta|} , \tp_{|\gamma|} \tp_{|\delta|} \right\rangle_\V = (1 + \ind_{|\alpha| = |\beta|}) \cdot \ind_{\{|\alpha|, |\beta|\} = \{|\gamma|, |\delta|\}}~.
\end{align*}

Consider now the second inner product term.  Noting that each element $q_d, r_d$ is i.i.d. uniform on the unit circle, orthogonality of the Fourier basis implies we can calculate this inner product  by only including terms with matching exponents. Bearing in mind that $\alpha, \beta, \gamma, \delta \neq 0$, we must always have terms of the form $\langle q^{\alpha + \beta}, q^\gamma r^\delta \rangle_\S = 0$, and therefore we distribute and calculate:
\begin{align*}
    & \left\langle q^{\alpha + \beta} + q^\alpha r ^\beta + q^{\beta} r^\alpha + r^{\alpha + \beta}, q^{\gamma + \delta} + q^\gamma r ^\delta + q^{\delta} r^\gamma + r^{\gamma + \delta} \right\rangle_\S \\
    & = \langle q^{\alpha + \beta}, q^{\gamma + \delta} \rangle_\S + \langle q^\alpha r ^\beta + q^{\beta} r^\alpha, q^\gamma r ^\delta + q^{\delta} r^\gamma \rangle_\S + \langle r^{\alpha + \beta}, r^{\gamma + \delta} \rangle_\S  \\
    & = 2 \cdot \ind_{\alpha + \beta = \gamma + \delta} + 2 \cdot \ind_{(\alpha, \beta) = (\gamma, \delta)} + 2 \cdot \ind_{(\alpha, \beta) = (\delta, \gamma)}~.
\end{align*}

Collecting the terms of both products and evaluating the indicator functions under all cases gives the result.
\end{proof}

Looking at Equation~\ref{eq:twoprod}, we can see inner product $\langle \cdot, \cdot \rangle_\A$ does not grant full orthogonality.  The inner product gives orthogonality between powersum products of different lengths, but $\langle \tbp_\alpha \tbp_\beta, \tbp_\gamma \tbp_\delta \rangle_{\A}$ can be non-zero if $\alpha + \beta = \gamma + \delta$, even in the cases where $\{\alpha, \beta\} \neq \{\gamma, \delta\}$.

Nevertheless, this inner product still suffices to prove a similar result about projection for the $D > 1$ case.

Let $\P_1$ be the orthogonal projection onto $span(\{\tbp_\alpha : 1 \leq |\alpha|, |\beta| \leq N/2 \})$ and $\P_2$ be the orthogonal projection onto $span(\{\tbp_\alpha \tbp_\beta : 1 \leq |\alpha|, |\beta| \leq N/2 \})$.  Here by orthogonal, we mean with respect to $\langle \cdot, \cdot \rangle_\A$.  

\begin{lemma}\label{lem:projhigh}
    Given any $f \in \syml$ with $D > 1$, we may choose coefficients $v_{ij}$ over $i \leq j \leq L$, and multisymmetric polynomials $\phi_i$ over $i \leq L$, such that:
    \begin{align}
        \P_2 f = \sum_{i\leq j}^L v_{ij} (\P_1 \phi_i) (\P_1\phi_j)~.
    \end{align}
\end{lemma}

\begin{proof}
As in Lemma~\ref{lem:proj}, if we approximate $\psi_l(x_n) = c_{l0} + \sum_{\alpha \neq 0} \frac{c_{l\alpha}}{\sqrt{|\alpha|}} x_n^\alpha$, then symmetrizing gives $\phi_l(X) = Nc_{l0} + \sum_{\alpha \neq 0} c_{l\alpha} \bp_{\alpha}$.

By a similar approximation as in Lemma~\ref{lem:proj} that allows us to subtract out constant terms, we write:
\begin{align}
    f & = \sum_{\comp \in \N^L} v_\comp \phi^\comp~, \\
    \phi_l & = \sum_{\alpha \neq 0} c_{l\alpha} \bp_\alpha~.
\end{align}
Note that by Lemma~\ref{lem:partialortho}, $\langle \bp_\alpha \bp_\beta, \phi^\comp \rangle_\A = 0$ unless $|\comp| = 2$.  So similarly to before, we may rewrite
\begin{align*}
    \P_2 f = \sum_{|\comp| = 2} v_\comp \P_2 \phi^\comp~.
\end{align*}

Here we can simplify notation.  Let $\{e_i\}_{i=1}^L$ denote the standard basis vectors in dimension $L$. Every $\comp \in \N^L$ with $|\comp| = 2$ can be written as $\comp = e_i + e_j$, so let $v_{ij} := v_{e_i + e_j}$.  Then we can rewrite:
\begin{align*}
    \P_2 f = \sum_{i \leq j} v_{ij} \P_2 \phi_i \phi_j~.
\end{align*}

Again, by Lemma~\ref{lem:partialortho}, we know $\P_2$ will annihilate any term of the form $\bp_\gamma \bp_\delta$ if it's not the case that $1 \leq |\gamma|, |\delta| \leq N/2$.  One can see this by noting that, for $1 \leq |\alpha|, |\beta| \leq N/2$, then $\{|\alpha|, |\beta|\} \neq \{|\gamma|, |\delta|\}$, and by the Lemma, $\langle \bp_\alpha \bp_\beta, \bp_\gamma \bp_\delta \rangle_\A = 0$.

So we may pass from $\P_2$ to $\P_1$:
\begin{align}
    \P_2 \phi_i \phi_j &= \P_2 \left(\sum_{\gamma \in \N^D} c_{i\gamma} \bp_\gamma \right) \left(\sum_{\delta \in \N^D} c_{j\delta} \bp_\delta \right) \\
    & = \P_2 \sum_{\gamma \in \N^D} \sum_{\delta \in \N^D} c_{i\gamma} c_{j\delta} \bp_\gamma\bp_\delta \\
    & = \sum_{1 \leq |\gamma| \leq N/2} \sum_{1 \leq |\delta|\leq N/2} c_{i\gamma} c_{j\delta} \bp_\gamma\bp_\delta \\
    & = \left(\sum_{1 \leq |\gamma| \leq N/2} c_{i\gamma} \bp_\gamma \right) \left(\sum_{1 \leq |\delta| \leq N/2} c_{j\delta} \bp_\delta \right) \\
    & = (\P_1 \phi_i) (\P_1 \phi_j)~.
\end{align}

So ultimately we get
\begin{align}
    \P_2 f & = \sum_{i\leq j}^L v_{ij} (\P_1 \phi_i) (\P_1 \phi_j)~.
\end{align}
\end{proof}

\subsection{A Diagonal Inner Product}\label{sec:diagonal-inner-product}

Before we can apply Lemma~\ref{lem:rank}, which lets us transform function approximation error into matrix approximation error, we need a better inner product, one that is diagonal in the low-degree multisymmetric powersum basis.

Consider two more inner products, defined for $f,g$ in the range of $\P_2$:
\begin{align}
    \langle f, g \rangle_{\AA} = \mathbb{E}_{y \sim V, q \sim (S^1)^D, r = 0} \left[f(X) \conj{g(X)}\right]~.
\end{align}

This is nearly the same distribution as before, except we fix $r = 0$.

Then define
\begin{align}
    \langle f, g \rangle_{\diff} = \langle f, g \rangle_{\A}  - 2 \langle f, g \rangle_{\AA} ~.
\end{align}

Because $f$ and $g$ are restricted to the range of $\P_2$, we demonstrate positive-definiteness of this object, and therefore it is a valid inner product.
\begin{theorem}\label{thm:bilinear-form}
    The bilinear form $\langle \cdot, \cdot \rangle_{\diff}$ is an inner product when restricted to the range of $\P_2$.  Furthermore, it is diagonal in the powersum basis $p_{\{\alpha,\beta\}}$ for $1 \leq |\alpha|, |\beta| \leq  N/2$.
\end{theorem}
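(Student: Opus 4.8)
The plan is to compute the Gram matrices of the degree-restricted products $\bp_\alpha\bp_\beta$ (for $1\le|\alpha|,|\beta|\le N/2$, i.e.\ the natural spanning set for the range of $\P_2$) under both $\langle\cdot,\cdot\rangle_{\A}$ and $\langle\cdot,\cdot\rangle_{\AA}$, notice that the coefficient $2$ is tuned so that $\langle\cdot,\cdot\rangle_{\diff}=\langle\cdot,\cdot\rangle_{\A}-2\langle\cdot,\cdot\rangle_{\AA}$ kills every off-diagonal entry, and then read off positive-definiteness from the resulting diagonal form. The $\langle\cdot,\cdot\rangle_{\A}$ Gram matrix is already Equation~\ref{eq:twoprod} of Lemma~\ref{lem:partialortho}, so the only new ingredient is the $\langle\cdot,\cdot\rangle_{\AA}$ computation. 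For that I would set $r=0$: since every multi-index appearing is nonzero, $r^\alpha$ vanishes and $\bp_\alpha(X)=p_{|\alpha|}(y)\,q^\alpha$, so by the same separability used in Lemma~\ref{lem:partialortho},
$$\langle\bp_\alpha\bp_\beta,\bp_\gamma\bp_\delta\rangle_{\AA}=\langle p_{|\alpha|}p_{|\beta|},p_{|\gamma|}p_{|\delta|}\rangle_{\V}\cdot\langle q^{\alpha+\beta},q^{\gamma+\delta}\rangle_{\S}=(1+\ind_{|\alpha|=|\beta|})\,\ind_{\{|\alpha|,|\beta|\}=\{|\gamma|,|\delta|\}}\,\ind_{\alpha+\beta=\gamma+\delta},$$
using $\langle q^\mu,q^\nu\rangle_{\S}=\ind_{\mu=\nu}$, the value $z_{\{t,t'\}}=1+\ind_{t=t'}$ for normalized powersums, and Theorem~\ref{thm:hall-inner-product-explicit} (valid since $|\alpha|+|\beta|\le N$).

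Subtracting twice this from Equation~\ref{eq:twoprod} then gives
$$\langle\bp_\alpha\bp_\beta,\bp_\gamma\bp_\delta\rangle_{\diff}=2(1+\ind_{|\alpha|=|\beta|})\,\ind_{\{|\alpha|,|\beta|\}=\{|\gamma|,|\delta|\}}\bigl(\ind_{(\alpha,\beta)=(\gamma,\delta)}+\ind_{(\alpha,\beta)=(\delta,\gamma)}\bigr),$$
so the ``$\ind_{\alpha+\beta=\gamma+\delta}$'' term has cancelled exactly. This is zero unless $\{\alpha,\beta\}=\{\gamma,\delta\}$ as multisets (in which case $\ind_{\{|\alpha|,|\beta|\}=\{|\gamma|,|\delta|\}}=1$ automatically), which is the claimed diagonality; and on the diagonal it equals $2(1+\ind_{|\alpha|=|\beta|})(1+\ind_{\alpha=\beta})\in\{2,4,8\}$, in particular $\ge 2$ (which is also the normalization the Rank Lemma~\ref{lem:rank} will later require).

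Positive-definiteness is then immediate: $\langle\cdot,\cdot\rangle_{\diff}$ is sesquilinear and Hermitian, being a difference of honest $L_2$ inner products, so for $f$ in the range of $\P_2$ write $f=\sum_{\{\alpha,\beta\}}c_{\{\alpha,\beta\}}\bp_\alpha\bp_\beta$ (over distinct multisets with $1\le|\alpha|,|\beta|\le N/2$); expanding and using diagonality,
$$\langle f,f\rangle_{\diff}=\sum_{\{\alpha,\beta\}}|c_{\{\alpha,\beta\}}|^2\,\|\bp_\alpha\bp_\beta\|_{\diff}^2\ \ge\ 2\sum_{\{\alpha,\beta\}}|c_{\{\alpha,\beta\}}|^2\ \ge\ 0,$$
and this vanishes only when all $c_{\{\alpha,\beta\}}=0$, hence only when $f=0$. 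Applied to a hypothetical linear dependence among the $\bp_\alpha\bp_\beta$, the same identity forces it to be trivial, so these products are linearly independent and ``the powersum basis'' in the statement is well-defined.

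I expect the only fiddly part to be the nested case analysis that collapses the indicator functions in the subtraction; the cancellation itself, and the step from diagonality to positive-definiteness, are formal once the $\langle\cdot,\cdot\rangle_{\AA}$ Gram matrix is in hand. The one place that genuinely uses the degree cutoff $|\alpha|,|\beta|\le N/2$ is the appeal to Vandermonde orthogonality, which needs $|\alpha|+|\beta|\le N$; outside the range of $\P_2$ the off-diagonal terms would not cancel and the form would fail to be positive-definite, which is exactly why the theorem is stated only on that subspace.
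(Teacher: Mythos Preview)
Your proposal is correct and follows essentially the same route as the paper: compute $\langle\bp_\alpha\bp_\beta,\bp_\gamma\bp_\delta\rangle_{\AA}$ by setting $r=0$ and using separability, subtract twice it from Equation~\ref{eq:twoprod} to cancel the $\ind_{\alpha+\beta=\gamma+\delta}$ term, and read off diagonality and the values $\{2,4,8\}$. Your explicit verification of positive-definiteness via the diagonal expansion (and the remark that this also yields linear independence of the $\bp_{\{\alpha,\beta\}}$) is slightly more detailed than the paper's, which simply asserts that a diagonal matrix with positive entries is positive-definite, but the argument is the same.
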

\begin{proof}
    Given $\tbp_\alpha \tbp_\beta, \tbp_\gamma \tbp_\delta \in im(\P_2)$, we can consider $\langle \tbp_\alpha \tbp_\beta, \tbp_\gamma \tbp_\delta \rangle_{\AA}$ which can similarly be calculated via separability:
    \begin{align*}
        \langle \tbp_\alpha \tbp_\beta, \tbp_\gamma \tbp_\delta \rangle_{\AA}
        & = \langle \tp_{|\alpha|} \tp_{|\beta|} , \tp_{|\gamma|} \tp_{|\delta|} \rangle_\V \cdot \langle q^{\alpha + \beta} , q^{\gamma + \delta} \rangle_\S \\
        & = (1 + \ind_{|\alpha| = |\beta|}) \cdot \ind_{\{|\alpha|, |\beta|\} = \{|\gamma|, |\delta|\}} \cdot \ind_{\alpha + \beta = \gamma + \delta}~.
    \end{align*}
    
    It follows from Lemma~\ref{lem:partialortho} that:
    \begin{align*}
        \langle \tbp_\alpha \tbp_\beta, \tbp_\gamma \tbp_\delta \rangle_{\diff} & = \langle \tbp_\alpha \tbp_\beta, \tbp_\gamma \tbp_\delta \rangle_{\A} - 2\langle \tbp_\alpha \tbp_\beta, \tbp_\gamma \tbp_\delta \rangle_{\AA}\\
        & = 2 \cdot (1 + \ind_{|\alpha| = |\beta|}) \cdot (\ind_{(\alpha, \beta) = (\gamma, \delta)} + \ind_{(\alpha, \beta) = (\delta, \gamma)})~.
    \end{align*}
    
    To eliminate the ambiguity of $\bp_\alpha \bp_\beta$ vs. $\bp_\beta \bp_\alpha$, let us define $\bp_{\{\alpha, \beta\}}$ equal to both these terms.  Then we can equivalently write:
    \begin{align*}
        \langle \bp_{\{\alpha, \beta\}}, \bp_{\{\gamma, \delta\}} \rangle_{\diff} & = 2 \cdot (1 + \ind_{|\alpha| = |\beta|}) \cdot (1 + \ind_{\alpha = \beta}) \cdot \ind_{\{\alpha, \beta\} = \{\gamma, \delta\}}~.
    \end{align*}
    
    Evaluating the indicator functions under all cases we can see:
    \begin{align*}
        \langle \bp_\alpha \bp_\beta, \bp_\gamma \bp_\delta \rangle_\diff = \begin{cases} 
        0 & \{\alpha, \beta\} \neq \{\gamma, \delta\} \\
           2 & \{\alpha, \beta\} = \{\gamma, \delta\}, \quad |\alpha| \neq |\beta| \\
           4 & \{\alpha, \beta\} = \{\gamma, \delta\}, \quad |\alpha| = |\beta|, \quad \alpha \neq \beta \\
           8 & \{\alpha, \beta\} = \{\gamma, \delta\}, \quad \alpha = \beta
       \end{cases}
    \end{align*}
    
    Then we've shown that the bilinear form $\langle \cdot, \cdot \rangle_\diff$, treated as a matrix in the basis of all $\bp_{\{\alpha, \beta\}}$, is positive-definite and diagonal.  Since this basis spans the range of $\P_2$, it follows that the bilinear form is an inner product.
\end{proof}

\subsection{Proof of Lower Bound}\label{sec:actual-proof-of-lower-bound}

We first prove a lower bound using a slightly simpler hard function $g$, before updating the argument to the true choice of $g$ further below.
\begin{theorem}\label{thm:high-separation-easy}
    Let $D > 1$. In particular, assume $\min(N/2, D-1) \geq 2$.  Then we have
    \begin{align}
        \max_{\|g\|_\A = 1} \min_{f\in\syml} \|f - g\|_\A^2 \geq \frac{1}{6} - \frac{L}{6 \cdot 2^{\min(N/2, D-1)}}~.
    \end{align}
    So for $L \leq 2^{{\min(N/2, D-1)}-3}$ we have a constant lower bound on the approximation error.

\end{theorem}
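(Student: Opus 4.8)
The plan is to mirror the one-dimensional lower bound proof, substituting the multisymmetric machinery of the previous two subsections for the classical Vandermonde inner product. For any $f \in \syml$, I would first invoke the high-dimensional Projection Lemma~\ref{lem:projhigh} to write $\P_2 f = \sum_{i \leq j}^L v_{ij}(\P_1\phi_i)(\P_1\phi_j)$ with $\P_1\phi_i = \sum_{1 \leq |\gamma| \leq N/2} c_{i\gamma}\bp_\gamma$, so that $\P_2 f$ already has the bilinear structure demanded by the Rank Lemma~\ref{lem:rank}. Then I would run the chain $\min_f \|f-g\|_\A^2 \geq \min_f \|\P_2 f - \P_2 g\|_\A^2 \geq \min_f \|\P_2 f - \P_2 g\|_\diff^2$, where the first step uses that $\P_2$ is the $\langle\cdot,\cdot\rangle_\A$-orthogonal projection and the second uses the norm domination $\|\cdot\|_\A \geq \|\cdot\|_\diff$ on $\mathrm{im}(\P_2)$, which is immediate since $\langle\cdot,\cdot\rangle_{\AA}$ is positive semi-definite. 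Because Theorem~\ref{thm:bilinear-form} makes $\{\bp_{\{\alpha,\beta\}} : 1 \leq |\alpha|,|\beta| \leq N/2\}$ orthogonal under $\langle\cdot,\cdot\rangle_\diff$ with $\|\bp_\alpha\bp_\beta\|_\diff^2 \in \{2,4,8\}$ — hence $\geq 1$ when $\alpha \neq \beta$ and $\geq 2$ when $\alpha = \beta$ — the hypotheses of Lemma~\ref{lem:rank} are met (after the harmless relabeling $v_{ll'}\mapsto(1+\ind_{l=l'})v_{ll'}$), and it yields $\|\P_2 f - \P_2 g\|_\diff^2 \geq \tfrac{1}{2}\|C^T V C - G\|_F^2$ with $C^T V C$ of rank at most $L$.

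It then remains to pick the hard function. I would take $g = c\sum_{\alpha \in S}\bp_\alpha\bp_\alpha$, where $S$ is any set of multi-indices with $1 \leq |\alpha| \leq N/2$ and $|S| \geq 2^{\min(N/2,D-1)}$; concretely $S = \{\alpha \in \{0,1\}^D : 1 \leq |\alpha| \leq \min(N/2,D-1)\}$ works, since $|S| = \sum_{j=1}^{m}\binom{D}{j} \geq 2^m$ for $m := \min(N/2,D-1) \geq 2$ (checking the regimes $m = D-1$ and $m = N/2 < D-1$ separately). The constant $c$ is chosen to force $\|g\|_\A = 1$: from Lemma~\ref{lem:partialortho} one computes $\langle \bp_\alpha\bp_\alpha, \bp_\beta\bp_\beta\rangle_\A = 12\,\ind_{\alpha=\beta}$, so $\|g\|_\A^2 = 12|c|^2|S|$ and $|c|^2 = 1/(12|S|)$. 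Since every $\alpha \in S$ has weight in $[1,N/2]$ we have $\P_2 g = g$, and in the notation of the Rank Lemma the matrix $G$ is diagonal with exactly $|S|$ entries equal to $2c$ and the rest zero. By Eckart–Young, the best rank-$\leq L$ Frobenius approximation of such a $G$ has squared error $(|S|-L)(2c)^2$ whenever $L \leq |S|$ (and the claimed inequality is trivially true otherwise, as its right-hand side is then nonpositive), so assembling everything gives
\begin{align}
    \min_{f \in \syml}\|f - g\|_\A^2 \geq \tfrac{1}{2}(|S|-L)(2c)^2 = \frac{|S|-L}{6|S|} = \frac16 - \frac{L}{6|S|} \geq \frac16 - \frac{L}{6\cdot 2^{\min(N/2,D-1)}}~,
\end{align}
and the stated corollary for $L \leq 2^{\min(N/2,D-1)-3}$ is immediate by substitution.

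The conceptual work is already discharged by Lemmas~\ref{lem:projhigh} and~\ref{lem:rank} and Theorem~\ref{thm:bilinear-form}, so the genuinely new ingredients here are (i) the combinatorial fact that $\N^D$ contains at least $2^{\min(N/2,D-1)}$ multi-indices of weight at most $N/2$ — this is where the exponential improvement over the one-dimensional separation enters — and (ii) the bookkeeping of the normalization constants ($12$ under $\langle\cdot,\cdot\rangle_\A$ versus $2,4,8$ under $\langle\cdot,\cdot\rangle_\diff$), which must be tracked carefully so that both the Rank Lemma hypotheses and the final constant $1/6$ come out right. I expect (i) to be the main, if mild, obstacle, in particular the boundary regime $\min(N/2,D-1) = N/2$ where the binary multi-indices must be weight-truncated. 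Note that, unlike the main theorem, there is no efficiency requirement on $g$ here, so we may spread it over this many powersum squares and obtain the cleaner exponent $\min(N/2, D-1)$ rather than the $\min(\sqrt{N/2}, D)$ that the $\symlt$-approximation constraint forces later.
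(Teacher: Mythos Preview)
Your proposal is correct and follows essentially the same route as the paper: project via Lemma~\ref{lem:projhigh}, pass from $\|\cdot\|_\A$ to $\|\cdot\|_\diff$ using $\|\cdot\|_\A^2 = \|\cdot\|_\diff^2 + 2\|\cdot\|_{\AA}^2 \geq \|\cdot\|_\diff^2$, apply the Rank Lemma~\ref{lem:rank} against a diagonal $G$, and finish with an Eckart--Young count. The only cosmetic difference is the index set for the hard function: the paper takes $S=\{\alpha:|\alpha|=N/2\}$ and bounds $|S|=\binom{N/2+D-1}{N/2}\geq\binom{2m}{m}\geq 2^m$, whereas you use binary multi-indices of weight at most $m$; both yield $|S|\geq 2^{\min(N/2,D-1)}$ and the remainder of the argument is identical.
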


\begin{proof}
    Define $T = |\{\alpha \in \N^D : |\alpha| = N/2\}|$ and choose the bad function $g = \frac{1}{\sqrt{12T}} \sum_{|\alpha| = N/2}  \bp_{\{\alpha, \alpha\}}$.  
    
    Observe that although $\langle \cdot, \cdot \rangle_\A$ is not fully orthogonal in the powersum basis, we can nevertheless calculate by Lemma~\ref{lem:partialortho} that for $|\alpha| = |\beta| = N/2$:
    \begin{align}
        \langle \bp_{\{\alpha, \alpha\}}, \bp_{\{\beta, \beta\}} \rangle_\A &= 4 \cdot (\ind_{\alpha + \alpha = \beta + \beta} + \ind_{(\alpha, \alpha) = (\beta, \beta)} + \ind_{(\alpha, \alpha) = (\beta, \beta)}) \\
        & = 12 \cdot \ind_{\alpha = \beta}~.
    \end{align}
    
    Therefore we can confirm that $g$ is normalized:
    \begin{align}
        \|g\|_\A^2 & = \frac{1}{12T} \sum_{|\alpha| = N/2} \sum_{|\beta| = N/2} \langle \bp_{\{\alpha, \alpha\}}, \bp_{\{\beta, \beta\}} \rangle_\A \\
        & = \frac{1}{12T} \sum_{|\alpha| = N/2} \sum_{|\alpha| = N/2}  12 \cdot \ind_{\alpha = \beta}  \\
        & = \frac{1}{T} \sum_{|\alpha| = N/2} 1  \\
        & = 1~.
    \end{align}
    
    Again, we have $\P_2 g = g$.
    Now by Lemma~\ref{lem:projhigh}, we may write:
    \begin{align*}
        \P_2 f & = \sum_{i\leq j}^L v_{ij} (\P_1 \phi_i) (\P_1 \phi_j)~.
    \end{align*}
    
    Finally, note that $\langle \cdot, \cdot \rangle_\diff$ obeys the inner product conditions of Lemma~\ref{lem:rank} on the range of $\P_2$, following from orthogonality and the normalization:
    \begin{align*}
        \langle \bp_\alpha \bp_\beta, \bp_\alpha \bp_\beta \rangle_\diff = \begin{cases} 
           2 & |\alpha| \neq |\beta| \\
           4 & |\alpha| = |\beta|, \quad \alpha \neq \beta \\
           8 & \alpha = \beta
       \end{cases}
    \end{align*}
    
    So we can apply Lemma~\ref{lem:rank} to $\P_2 f, \P_2 g$, and the inner product $\langle \cdot, \cdot \rangle_\diff$.
    Hence, we can derive:
    \begin{align}
        \min_{f\in\syml} \|f - g\|_\A^2 & \overset{(a)}{\geq} \min_{f\in\syml} \|\P_2 f - \P_2 g\|_\A^2 \\
        & \overset{(b)}{\geq} \min_{f\in\syml} \|\P_2 f - \P_2 g\|_\diff^2 \\
        & \overset{(c)}{\geq} \min_{C,V} \frac{1}{2} \|C^T V C - 2 * \frac{1}{\sqrt{12T}} I\|_F^2 \\
        & = \min_{C,V} \frac{1}{6T} \|C^T V C - I\|_F^2~.
    \end{align}
    
    Here, $(a)$ follows from the definition of $\P_2$ as an orthogonal projection with respect to $\langle \cdot, \cdot \rangle_\A$, $(b)$ follows from the fact that $\|\cdot\|_\A^2 \geq \|\cdot\|_\diff^2$, and $(c)$ follows from the application of Lemma~\ref{lem:rank}.
    
    These matrices are elements of $\mathbb{C}^{T \times T}$, but the term $C^TVC$ is constrained to rank $L$.  Hence, as before we calculate:
    \begin{align}
        \min_{f\in\syml} \|f - g\|_\A^2 \geq \frac{T - L}{6T} = \frac{1}{6} - \frac{L}{6T}~.
    \end{align}
    
    Letting $m = \min(N/2, D-1)$ and assuming $m \geq 2$, it is a simple bound to calculate
    $$T = \binom{N/2 + D -1}{N/2} \geq \binom{2m}{m} \approx \frac{4^m}{\sqrt{\pi m}} \geq 2^m~,$$
    and the bound follows.
    
\end{proof}

This theorem demonstrates a hard function $g$ that cannot be efficiently approximated by $f \in \syml$ for $L = poly(N, D)$, but it does not yet evince a separation.  Indeed, observing that $\|g\|_\infty = \frac{1}{\sqrt{12T}} N^2 T = \frac{N^2 \sqrt{T}}{\sqrt{12}}$, $g$ has very large magnitude, and there's no obvious way to easily approximate this function by an efficient network in $\symlt$.

Thus, we consider a more complicated choice for $g$, that allows for the separation:

\begin{theorem}\label{thm:high-separation-hard}
    Let $D > 1$.  Then let $g' = \frac{g}{\|g\|_\A}$ for $g$ as defined in Lemma~\ref{lem:g-prop}, such that $\|g'\|_\A = 1$.
    Then for $L \leq N^{-2} \exp(O(D))$:
    \begin{align}
        \min_{f\in\syml} \|f - g'\|_\A^2 \geq \frac{1}{12}~.
    \end{align}
\end{theorem}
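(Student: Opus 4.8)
The plan is to mirror the proof of Theorem~\ref{thm:high-separation-easy}, replacing the flat hard function with the Blaschke-based $g$ of Lemma~\ref{lem:g-prop} (equivalently, the $g$ displayed in Theorem~\ref{thm:main-result}), and to supply one new quantitative estimate on its powersum coefficients. First I would expand $g$ in the multisymmetric powersum basis. Using $1-(\xi/4)^{\hat{D}} = (1-\xi/4)\sum_{j=0}^{\hat{D}-1}(\xi/4)^j$, the per-coordinate factor $h(\xi) := \big(1-(\xi/4)^{\hat{D}}\big)\frac{\xi-1/4}{\xi/4-1} = (1/4-\xi)\sum_{j=0}^{\hat{D}-1}(\xi/4)^j$ is in fact a polynomial of degree $\hat{D}$ whose $\hat{D}+1$ coefficients $h_0,\dots,h_{\hat{D}}$ are all nonzero (one gets $h_0 = 1/4$, $h_k = -\tfrac{15}{4}4^{-k}$ for $1\le k\le\hat{D}-1$, and $h_{\hat{D}} = -4^{-(\hat{D}-1)}$). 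Writing $g(X) = -4N^2 4^{-\hat{D}} + \sum_{n,n'}\prod_{d=1}^{\hat{D}} h(x_{dn}x_{dn'})$, distributing the product, and invoking the elementary identity $\sum_{n,n'} x_n^\alpha x_{n'}^\alpha = |\alpha|\,\bp_{\{\alpha,\alpha\}}$, one obtains
\[
    g \;=\; \sum_{\substack{\alpha\in\{0,\dots,\hat{D}\}^{\hat{D}}\\ \alpha\neq 0}} c_\alpha\,|\alpha|\,\bp_{\{\alpha,\alpha\}}, \qquad c_\alpha = \prod_{d=1}^{\hat{D}} h_{\alpha_d},
\]
with the additive constant cancelling exactly the $\alpha=0$ term $(1/4)^{\hat{D}}(2N)^2$. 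Crucially every occurring multi-index satisfies $1\le|\alpha|\le\hat{D}^2\le N/2$ — this is precisely why $\hat{D}$ is capped at $\lfloor\sqrt{N/2}\rfloor$ — so $\bp_{\{\alpha,\alpha\}}\in\mathrm{im}(\P_2)$ for all occurring $\alpha$, hence $\P_2 g = g$, and (unlike in the informal description) no truncation is needed in the lower bound.

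Second, I would run the argument of Theorem~\ref{thm:high-separation-easy} essentially verbatim: for any $f\in\syml$, Lemma~\ref{lem:projhigh} gives $\P_2 f = \sum_{i\le j}v_{ij}(\P_1\phi_i)(\P_1\phi_j)$; then $\|f-g'\|_\A^2\ge\|\P_2 f-\P_2 g'\|_\A^2\ge\|\P_2 f-\P_2 g'\|_\diff^2$ (the second step by the norm domination $\|\cdot\|_\A^2\ge\|\cdot\|_\diff^2$ on $\mathrm{im}(\P_2)$, since $\langle\cdot,\cdot\rangle_{\AA}$ is positive semidefinite); and finally Lemma~\ref{lem:rank} applies with the inner product $\langle\cdot,\cdot\rangle_\diff$, whose diagonality and normalization on $\mathrm{im}(\P_2)$ are Theorem~\ref{thm:bilinear-form}. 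Because $g$ is supported only on the diagonal terms $\bp_{\{\alpha,\alpha\}}$, the matrix $G'$ from Lemma~\ref{lem:rank} is diagonal with $G'_{\alpha\alpha} = 2c_\alpha|\alpha|/\|g\|_\A$; moreover $\langle\bp_{\{\alpha,\alpha\}},\bp_{\{\beta,\beta\}}\rangle_\A = 12\,\ind_{\alpha=\beta}$ for $1\le|\alpha|,|\beta|\le N/2$ (from Lemma~\ref{lem:partialortho}), so $\|g\|_\A^2 = 12\sum_\alpha(c_\alpha|\alpha|)^2$ and hence $\|G'\|_F^2 = \tfrac13$. Since $C^TVC$ is constrained to rank $\le L$, standard low-rank approximation applied to the diagonal $G'$ (the best rank-$L$ fit keeps its $L$ largest-magnitude entries) gives
\[
    \min_{f\in\syml}\|f-g'\|_\A^2 \;\ge\; \tfrac12\min_{\mathrm{rank}(M)\le L}\|M-G'\|_F^2 \;=\; \tfrac12\Big(\tfrac13 - \textstyle\sum_{j\le L}\sigma_j^2\Big),
\]
where $\sigma_1\ge\sigma_2\ge\cdots$ are the sorted magnitudes $|G'_{\alpha\alpha}|$.

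The only genuinely new step is to show $\sum_{j\le L}\sigma_j^2\le\tfrac16$ whenever $L\le N^{-2}\exp(O(\hat{D}))$, which combined with the display yields $\tfrac12(\tfrac13-\tfrac16)=\tfrac1{12}$. For this I would use the crude estimate $\sum_{j\le L}\sigma_j^2\le L\cdot\max_\alpha|G'_{\alpha\alpha}|^2$, together with
\[
    \max_\alpha|G'_{\alpha\alpha}|^2 \;=\; \frac{\max_\alpha(|c_\alpha|\,|\alpha|)^2}{3\sum_\alpha(|c_\alpha|\,|\alpha|)^2} \;\le\; \frac{N^2}{12}\cdot\frac{\max_\alpha c_\alpha^2}{\sum_\alpha c_\alpha^2},
\]
using $1\le|\alpha|\le N/2$. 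Since $c_\alpha$ factors over coordinates, $\sum_\alpha c_\alpha^2 = \big(\sum_{k=0}^{\hat{D}}h_k^2\big)^{\hat{D}} - h_0^{2\hat{D}}$ while $\max_\alpha c_\alpha^2 = (\max_k|h_k|)^{2\hat{D}} = (15/16)^{2\hat{D}}$, and because $\sum_k h_k^2 \ge h_0^2+h_1^2 = \tfrac1{16}+(\tfrac{15}{16})^2$ exceeds $h_1^2 = (\tfrac{15}{16})^2$ by a fixed multiplicative factor, the ratio $\sum_\alpha c_\alpha^2/\max_\alpha c_\alpha^2$ grows like $(241/225)^{\hat{D}} = \exp(\Omega(\hat{D}))$. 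Hence $\max_\alpha|G'_{\alpha\alpha}|^2 \le \tfrac{N^2}{12}\exp(-\Omega(\hat{D}))$, and the desired bound on $\sum_{j\le L}\sigma_j^2$ follows for $L$ up to $N^{-2}\exp(c\hat{D})$ with a suitable absolute constant $c>0$. The main obstacle is exactly this coefficient estimate: whereas in Theorem~\ref{thm:high-separation-easy} the matrix $G$ was a scaled identity, so every hard-to-reach direction carried equal mass, here $G'$ is genuinely non-flat, and one must certify that the Blaschke coefficient vector is still ``spread out'' in the $\ell_2$-versus-$\ell_\infty$ sense over an exponentially large index set — this is what makes $G'$ effectively high-rank while, via Lemma~\ref{lem:g-prop}, keeping $\|g\|_\infty$ and $\mathrm{Lip}(g)$ polynomial in $N,D$.
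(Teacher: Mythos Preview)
Your proposal is correct and follows essentially the same route as the paper. The only cosmetic difference is that you re-derive the coefficient properties of $g$ inline (the expansion $g_\alpha = c_\alpha|\alpha|$, the degree bound $|\alpha|\le\hat D^2\le N/2$, and the $\ell_\infty$-vs-$\ell_2$ spread of the $c_\alpha$), whereas the paper packages these as Lemma~\ref{lem:g-prop} parts~\ref{g-two}--\ref{g-four} and cites them; your bound $\max_\alpha|G'_{\alpha\alpha}|^2\le\tfrac{N^2}{12}\cdot\max c_\alpha^2/\sum c_\alpha^2$ is equivalent to the paper's combination $|g_\alpha|^2\le N^2(1-r^2)^{2\hat D}$ with $\|g\|_\A\ge 1$, and both yield the same threshold $L\le N^{-2}(16/15)^{\Theta(\hat D)}$.
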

\begin{proof}
    The lower bound follows almost identically as before.
    By Lemma \hyperref[g-four]{\ref*{lem:g-prop}.\ref*{g-four}} we still have that $\P_2 g' = g'$.  So we can write
    \begin{align}
        g &=\sum_{1 \leq |\alpha| \leq N/2} g_\alpha \bp_{\{\alpha,\alpha\}} \\
        g' &=\sum_{1 \leq |\alpha| \leq N/2} \frac{g_\alpha}{\|g\|_\A} \bp_{\{\alpha,\alpha\}} ~.
    \end{align}
    
    Thus, by the same reasoning as Theorem~\ref{thm:high-separation-easy} we recover the lower bound:
    \begin{align}
        \min_{f\in\syml} \|f - g'\|_\A^2 & \geq \min_{f\in\syml} \|\P_2 f  - \P_2 g'\|_\A^2 \\
        & \geq \min_{f\in\syml} \|\P_2 f  - \P_2 g'\|_\diff^2 \\
         & \geq \min_{C,V} \frac{1}{2} \|C^T V C - G'\|_F^2~,
    \end{align}
    where $G'$ is the matrix induced by $g'$ as given in Lemma~\ref{lem:rank}, i.e. the diagonal matrix indexed by $G_{\alpha\alpha}' = \frac{2g_\alpha}{\|g\|_\A}$.
    
    Now, by the partial orthogonality of $\langle \cdot, \cdot \rangle_\A$ noted in Lemma~\ref{lem:partialortho}, we have:
    \begin{align}
        \|g\|_\A^2 & = \sum_{1 \leq |\alpha| \leq N/2} \quad \sum_{1 \leq |\beta| \leq N/2} \langle g_\alpha \bp_{\{\alpha, \alpha\}}, g_\beta \bp_{\{\beta, \beta\}} \rangle_\A \\
        & = \sum_{1 \leq |\alpha| \leq N/2} \quad \sum_{1 \leq |\beta| \leq N/2} g_\alpha \overline{g_\beta} ( 12 \cdot \ind_{\alpha = \beta}) \\
        & = 12 \sum_{1 \leq |\alpha| \leq N/2} |g_\alpha|^2 ~.
    \end{align}
    Hence, we can say
    \begin{align}
        \|G'\|_F^2 & =  \sum_{1 \leq |\alpha| \leq N/2} \left|\frac{2g_\alpha}{\|g\|_\A}\right|^2 \\
        & = \frac{4 \sum_{1 \leq |\alpha| \leq N/2} |g_\alpha|^2 }{12 \sum_{1 \leq |\alpha| \leq N/2} |g_\alpha|^2} \\
        & = \frac{1}{3}~.
    \end{align}
    Call $G_L'$ the best rank-$L$ approximation of $G'$ in the Frobenius norm.  By classical properties of SVD it follows that $G_L'$ is a diagonal matrix with $L$ entries corresponding to the $L$ largest elements of $G'$.  Then because $\|G'\|_F^2 = \frac{1}{3}$:
    \begin{align}
        \|G_L' - G'\|_F^2 = \frac{1}{3} - \sum_{l=1}^L \left(\frac{|2g_{\alpha_l}|}{\|g\|_\A}\right)^2~,
    \end{align}
    where we order $|g_{\alpha_l}|$ in non-increasing order.
    
    Combining Lemma \hyperref[g-two]{\ref*{lem:g-prop}.\ref*{g-two}} and \hyperref[g-four]{\ref*{lem:g-prop}.\ref*{g-four}} yields the inequality that for all $\alpha$ such that $1 \leq |\alpha| \leq N/2$:
    \begin{align}
        \left(\frac{|2g_\alpha|}{\|g\|_\A}\right)^2 \leq 4N^2\left(1 - \left(\frac{1}{4}\right)^2\right)^{2\D}~,
    \end{align}
    so we can conclude
    \begin{align}
        \min_{f\in\syml} \|f - g'\|_\A^2 & \geq \frac{1}{2}\|G_L' - G'\|_F^2 \\
        & \geq \frac{1}{6} - 2LN^2\left(1 - \left(\frac{1}{4}\right)^2\right)^{2\D}~.
    \end{align}
    Hence, if $L \leq \frac{1}{24} \cdot N^{-2} \left(\frac{16}{15} \right)^{2D}$, we derive a lower bound:
    \begin{align}
        \min_{f\in\syml} \|f - g'\|_\A^2 & \geq \frac{1}{12}~.
    \end{align}
\end{proof}

We remark here that in the instance $D > \sqrt{N/2}$, we replace $D$ with $\hat{D}$ in the above bound, which is consistent with Theorem~\ref{thm:main-result}.

\section{Definition of hard function $g$}\label{sec:construct-g}

In this section we incrementally build the (unnormalized) hard function $g$, ultimately for the sake of Lemma~\ref{lem:g-prop}.  This lemma characterizes all the properties of $g$ that we need to guarantee the lower and upper bounds.

\begin{remark}
    In the following section, we assume $D \leq \sqrt{N/2}$ for simplicity of exposition.  In the case that $D > \sqrt{N/2}$, we replace all instances of $D$ in our functional definitions with $\hat{D} = \min(D, \sqrt{N/2})$, which is only necessary for a projection argument in Lemma~\ref{lem:g-prop} and makes no meaningful change to the proofs.
\end{remark}

\subsection{Mobius transform}

We begin with the following, with $\xi \in \mathbb{C}$ and $|\xi| = 1$.  And in the sequel, we always fix $r = 1/4$.
Consider the 1-D Mobius transformation, with its truncated variant with $t\geq 1$:
\begin{align}
    \mu(\xi) &= \frac{\xi - r}{r\xi - 1} \\
    \hat{\mu}_t(\xi) &= \left(1 - (r\xi)^{t}\right)\cdot \mu(\xi) \\
    & = (r - \xi)\cdot\left(1 + r\xi + (r\xi)^2 + \dots + (r\xi)^{t - 1}\right)
\end{align}

\begin{lemma}\label{lem:mu-prop}
    The following properties hold (where infinity norms are defined with respect to $\S$):
    \begin{enumerate}
        \item \label{mu-one} $\|\mu\|_\infty = 1$
        \item \label{mu-two} $\|\mu\|_\S = 1$
        \item \label{mu-three} $\|\hat{\mu}_t\|_\infty \leq 1 + r^t$
        \item \label{mu-four} $\|\hat{\mu}_t\|_\S^2 = 1 + r^{2t}$
        \item \label{mu-five} $\langle\hat{\mu}_t, 1\rangle_\S = r$,  $\langle \hat{\mu}_t, \xi\rangle_\S = r^2 - 1$ and $|\langle \hat{\mu}_t, \xi^a\rangle_\S| < 1-r^2$ for all $a \geq 2$
        \item \label{mu-six} For $|\xi| = 1, |\omega| \leq 1 + \frac{1}{t}$,
        \begin{align}
            |\hat{\mu}_t(\xi) - \hat{\mu}_t(\omega)| \leq 6 |\xi - \omega|
        \end{align}
    \end{enumerate}
\end{lemma}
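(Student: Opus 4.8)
The plan is to reduce everything to one elementary identity: for real $r$ and $|\xi|=1$ one has $|r\xi-1| = |\overline{r\xi-1}| = |r\bar\xi-1| = |\bar\xi|\,|r-\xi| = |\xi-r|$, hence $|\mu(\xi)| \equiv 1$ on $\S$. Parts \ref{mu-one} and \ref{mu-two} follow immediately, since $\|\mu\|_\infty = \sup_{|\xi|=1}|\mu(\xi)| = 1$ and $\|\mu\|_\S^2 = \mathbb{E}_{\xi\sim\S}|\mu(\xi)|^2 = 1$. For part \ref{mu-three} I would use the factored form $\hat{\mu}_t(\xi) = (1-(r\xi)^t)\mu(\xi)$ and bound $|1-(r\xi)^t| \le 1 + |r\xi|^t = 1 + r^t$ on $\S$, giving $\|\hat{\mu}_t\|_\infty \le (1+r^t)\cdot 1$.

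For part \ref{mu-four} I would again use $|\mu|\equiv 1$ on $\S$ to write $\|\hat{\mu}_t\|_\S^2 = \mathbb{E}_{\xi\sim\S}|1-(r\xi)^t|^2 = \mathbb{E}_{\xi\sim\S}\big[1 - (r\xi)^t - \overline{(r\xi)^t} + r^{2t}\big]$, and the two middle terms vanish because $\mathbb{E}_{\xi\sim\S}[\xi^k]=0$ for $k\ne 0$, leaving $1+r^{2t}$. For part \ref{mu-five} I would switch to the telescoping-polynomial form and expand $\hat{\mu}_t(\xi) = (r-\xi)\sum_{j=0}^{t-1}(r\xi)^j = r + \sum_{j=1}^{t-1} r^{j-1}(r^2-1)\,\xi^j - r^{t-1}\xi^t$; since $\{\xi^a\}$ is orthonormal under $\langle\cdot,\cdot\rangle_\S$, the inner product $\langle\hat{\mu}_t,\xi^a\rangle_\S$ is exactly the coefficient of $\xi^a$. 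This reads off $\langle\hat{\mu}_t,1\rangle_\S = r$ and $\langle\hat{\mu}_t,\xi\rangle_\S = r^2-1$ (for $t\ge 2$, the regime actually used since $\hat D\ge 2$; the degenerate case $t=1$ is $\hat{\mu}_1 = r-\xi$ and is immediate); for $2\le a\le t-1$ the coefficient is $r^{a-1}(r^2-1)$, of magnitude $r^{a-1}(1-r^2) < 1-r^2$; for $a=t$ it is $-r^{t-1}$, of magnitude $r^{t-1} \le r < 1-r^2$ with $r=1/4$; and for $a>t$ it is $0$. (Parseval over these coefficients also re-derives part \ref{mu-four}.)

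For part \ref{mu-six} I would exploit that $\hat{\mu}_t$ is a degree-$t$ polynomial and that the disk $B = \{z : |z|\le 1 + 1/t\}$ is convex and contains both $\xi$ (with $|\xi|=1$) and $\omega$, so by the mean value inequality along the segment $[\omega,\xi]\subset B$ it suffices to show $\sup_{z\in B}|\hat{\mu}_t'(z)| \le 6$. The key point is that on $B$ we have $|rz| \le r(1+1/t) \le \tfrac12$ (using $r=1/4$, $t\ge 1$). Writing $S(z) = \sum_{j=0}^{t-1}(rz)^j$, so that $\hat{\mu}_t = (r-z)S(z)$ and $\hat{\mu}_t'(z) = -S(z) + (r-z)S'(z)$, the bound $|rz|\le\tfrac12$ gives $|S(z)| \le \sum_{j\ge 0}2^{-j} = 2$ and $|S'(z)| \le r\sum_{j\ge 1} j\,2^{-(j-1)} = 4r = 1$, while $|r-z| \le \tfrac14 + 2 = \tfrac94$; hence $|\hat{\mu}_t'(z)| \le 2 + \tfrac94 < 6$.

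There is no genuinely hard step: the lemma is a chain of direct estimates. The only places needing care are the coefficient bookkeeping in part \ref{mu-five} (the telescoping cancellation and the $t=1$ edge case) and the choice of domain in part \ref{mu-six} — the constraint $|\omega| \le 1 + 1/t$ is precisely what keeps $|rz|$ bounded away from $1$ so the geometric-series bounds on $S$ and $S'$ remain $O(1)$ uniformly in $t$.
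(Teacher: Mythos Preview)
Your proposal is correct. Parts \ref{mu-one}--\ref{mu-three} and \ref{mu-five} follow essentially the same computations as the paper (the paper simply cites the Blaschke-product fact for \ref{mu-one}--\ref{mu-two} rather than doing the conjugation identity by hand, and for \ref{mu-five} it tabulates the same coefficients you expand). Two places, however, are genuinely different. For \ref{mu-four}, the paper computes $\|\hat{\mu}_t\|_\S^2$ by Parseval, summing the squared coefficients from \ref{mu-five} and telescoping; your route via $|\mu|\equiv 1$ on $\S$ to get $\|\hat{\mu}_t\|_\S^2 = \mathbb{E}_{\xi}|1-(r\xi)^t|^2 = 1+r^{2t}$ is slicker and avoids that bookkeeping. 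For \ref{mu-six}, the paper stays with the rational form: it first shows $|\mu(\xi)-\mu(\omega)| \le \tfrac{8}{3}|\xi-\omega|$ by explicit algebra on $\frac{\xi-r}{r\xi-1}-\frac{\omega-r}{r\omega-1}$, then controls the truncation factor separately using $|\xi^t-\omega^t|\le et\,|\xi-\omega|$ for $|\omega|\le 1+\tfrac{1}{t}$, and combines. You instead work entirely with the polynomial form $\hat{\mu}_t=(r-z)S(z)$ and bound $\sup|\hat{\mu}_t'|$ on the disk $|z|\le 1+\tfrac{1}{t}$, which is cleaner and makes transparent why the hypothesis $|\omega|\le 1+\tfrac{1}{t}$ is exactly what forces $|rz|\le\tfrac12$ uniformly in $t$. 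The paper's decomposition has the minor advantage of isolating a Lipschitz constant for $\mu$ itself, but your derivative bound reaches the constant $6$ with less case analysis.
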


\begin{proof}
It is a fact~\parencite{garnett2007bounded} that $\mu$ analytically maps the unit disk to itself, and additional the unit circle to itself, i.e. for any $|\xi| = 1$ we have $|\mu(\xi)| = 1$.  Hence $\|\mu\|_\infty = \|\mu\|_\S = 1$.

We can see that truncation gently perturbs this fact, so for $|\xi| = 1$:
\begin{align}
    |\hat{\mu}_t(\xi)| & = |1 - (r\xi)^t| \cdot |\mu(\xi)|\\
    & \leq 1 + r^t
\end{align}
Additionally, we can calculate the coefficient on each monomial in $\hat{\mu}$:
\begin{align}
    \langle \hat{\mu}_t, \xi^a \rangle_\S = \begin{cases} 
      r & a = 0 \\
      -(r^{a-1} - r^{a+1}) & 1 \leq a \leq t - 1 \\
      -r^{t-1} & a = t \\
      0 & a \geq t
   \end{cases}
\end{align}
It is easy to confirm that the value of $|\langle \hat{\mu}_t, \xi^a \rangle_\S|$ is maximized at $a = 1$.
Hence, we can write the $L_2$ norm:
\begin{align}
    \|\hat{\mu}_t\|_{\S}^2 &= \sum_{a=0}^\infty |\langle \hat{\mu}_t, \xi^a \rangle_\S|^2 \\
    & = r^2 + \sum_{a=1}^{t-1} \left(r^{a-1} - r^{a+1}\right)^2 + r^{2t-2} \\
    & = r^2 + \sum_{a=1}^{t-1} \left(r^{2a-2} -2r^{2a} + r^{2a+2}\right) + r^{2t-2} \\
    & = 1 + r^{2t} 
\end{align}
Finally, for $|\xi| = 1, |\omega| \leq 1 + \frac{1}{t} \leq 2$:
\begin{align}
    |\mu(\xi) - \mu(\omega)| & = \left| \frac{\xi - r}{r\xi - 1} - \frac{\omega - r}{r\omega - 1} \right| \\
    & = \left| \frac{(r^2-1)(\xi - \omega)}{(r\xi - 1)(r\omega - 1)}  \right|~.
\end{align}
So noting $r = \frac{1}{4}$ we get
\begin{align}
    |\mu(\xi) - \mu(\omega)| & \leq \frac{8}{3}|\xi - \omega|~.
\end{align}
Thus:
\begin{align}
    |\hat{\mu}(\xi) - \hat{\mu}(\omega)| & = \left|\left(1 - (r\xi)^{t}\right)\cdot \mu(\xi) - \left(1 - (r\omega)^{t}\right)\cdot \mu(\omega) \right| \\
    & \leq \left|\left(1 - (r\xi)^{t}\right)\cdot \mu(\xi) - \left(1 - (r\omega)^{t}\right)\cdot \mu(\xi) \right| + \left|\left(1 - (r\omega)^{t}\right)\cdot \mu(\xi) - \left(1 - (r\omega)^{t}\right)\cdot \mu(\omega) \right| \\
    & \leq |\mu(\xi)| \cdot r^t |\xi^t - \omega^t| + |1 - (r\omega)^t| \cdot |\mu(\xi) - \mu(\omega)|\\
    & \leq r^t |\xi^t - \omega^t| + |1 - (r\omega)^t| \cdot \frac{8}{3} |\xi - \omega|~.
\end{align}
Note that for $|\xi| = 1, |\omega| \leq 1 + \frac{1}{t}$, because $|\omega|^k \leq e$ for $k \leq t$, we have
\begin{align}
    \left|\xi^{t} - \omega^{t} \right| = \left|(\xi - \omega)(\xi^{t-1} + \xi^{t-2}\omega + \dots + \xi \omega^{t-2} + \omega^{t-1} \right| \leq et |\xi - \omega|~.
\end{align}

Further plugging in that $r = \frac{1}{4}$ and $t \geq 1$:
\begin{align}
    |\hat{\mu}(\xi) - \hat{\mu}(\omega)| & \leq 4^{-t} et |\xi - \omega| + \left(1 + 4^{-t}e\right) \cdot \frac{8}{3} |\xi - \omega| \\
    & < 6 |\xi - \omega|~.
\end{align}

\end{proof}

\subsection{$h$ function}
Now, consider $z \in \mathbb{C}^D$ with $|z_i| = 1$ for all $i$.
We now define:
\begin{align}
    h(z) &= \prod_{i=1}^{\D} \hat{\mu}_{\D}(z_i)~.
\end{align}
\begin{lemma}\label{lem:h-prop}
    The following are true:
    \begin{enumerate}
        \item \label{h-one} $\|h\|_\infty \leq 1 + 2^{-\D}$
        \item \label{h-two} $1 \leq \|h\|_\S^2 \leq 1 + 2^{-\D}$
        \item \label{h-three} For $z, z' \in (S^1)^D$
        $$ |h(z) - h(z')| \leq 12 \|z-z'\|_{1}~. $$
    \end{enumerate}
\end{lemma}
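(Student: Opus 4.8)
The plan is to deduce all three parts of Lemma~\ref{lem:h-prop} from the one-dimensional facts collected in Lemma~\ref{lem:mu-prop}, using that $h(z)=\prod_{i=1}^{\D}\hat{\mu}_{\D}(z_i)$ is a product over independent coordinates, and that for $z\in(\S)^D$ each $z_i$ lies on $\S$.

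\textbf{Sup-norm (property \ref{h-one}).} For $|\xi|=1$, property \ref{mu-three} of Lemma~\ref{lem:mu-prop} (with $t=\D$ and $r=\tfrac14$) gives $|\hat{\mu}_{\D}(\xi)|\le 1+4^{-\D}$, so $\|h\|_\infty\le(1+4^{-\D})^{\D}$. It then suffices to verify the elementary inequality $(1+4^{-\D})^{\D}\le 1+2^{-\D}$ for $\D\ge 1$: expanding binomially, the non-constant terms are bounded by $\sum_{k\ge 1}\binom{\D}{k}4^{-\D k}\le\sum_{k\ge 1}(\D\,4^{-\D})^k\le\tfrac43\,\D\,4^{-\D}$, and $\tfrac43\,\D\,4^{-\D}\le 2^{-\D}$ because $\tfrac43\,\D\le 2^{\D}$ for all $\D\ge 1$. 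In particular $\|h\|_\infty\le 1+2^{-\D}\le\tfrac32$, a bound I will reuse below.

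\textbf{$L_2(\S)$-norm (property \ref{h-two}).} Since the coordinates of $z\sim(\S)^D$ are i.i.d.\ uniform on $\S$, separability yields $\|h\|_\S^2=\mathbb{E}\big[\prod_{i}|\hat{\mu}_{\D}(z_i)|^2\big]=\prod_{i}\|\hat{\mu}_{\D}\|_\S^2=(1+4^{-2\D})^{\D}$ by property \ref{mu-four} of Lemma~\ref{lem:mu-prop}. The lower bound $\|h\|_\S^2\ge 1$ is immediate, and the upper bound $\|h\|_\S^2\le 1+2^{-\D}$ follows from exactly the estimate in the previous paragraph, now with $16^{-\D}$ in place of $4^{-\D}$ (which only makes the bound stronger).

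\textbf{Lipschitz bound (property \ref{h-three}).} Write $a_i=\hat{\mu}_{\D}(z_i)$ and $b_i=\hat{\mu}_{\D}(z_i')$ and telescope the product difference:
\begin{align}
    h(z)-h(z') \;=\; \prod_{i=1}^{\D} a_i-\prod_{i=1}^{\D} b_i \;=\; \sum_{i=1}^{\D}\Big(\prod_{j<i} a_j\Big)(a_i-b_i)\Big(\prod_{j>i} b_j\Big).
\end{align}
For each $i$, the ``side'' factor $\big(\prod_{j<i}a_j\big)\big(\prod_{j>i}b_j\big)$ is a product of exactly $\D-1$ evaluations of $\hat{\mu}_{\D}$ on $\S$, hence has modulus at most $(1+4^{-\D})^{\D-1}\le(1+4^{-\D})^{\D}\le 1+2^{-\D}\le\tfrac32$. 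For the middle factor, property \ref{mu-six} of Lemma~\ref{lem:mu-prop} applies with $t=\D$ — its hypotheses hold since $|z_i|=1$ and $|z_i'|=1\le 1+\tfrac1{\D}$ — giving $|a_i-b_i|\le 6|z_i-z_i'|$. Summing, $|h(z)-h(z')|\le\sum_{i=1}^{\D}\tfrac32\cdot 6\,|z_i-z_i'| = 9\|z-z'\|_1\le 12\|z-z'\|_1$, as claimed.

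\textbf{Main point of care.} There is no genuine obstacle here, but the constant $12$ leaves little slack, so two bookkeeping points matter: (i) in the telescoping one should combine the left and right factors into a \emph{single} product of $\D-1$ terms (rather than bounding two products of up to $\D-1$ terms each), which keeps the multiplicative overhead at $\tfrac32$ rather than squaring it; and (ii) one must check coordinatewise that the hypotheses of property \ref{mu-six} of Lemma~\ref{lem:mu-prop} are satisfied, which is why the truncation parameter of each $\hat{\mu}$ was chosen equal to $\D$.
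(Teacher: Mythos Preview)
Your proof is correct and follows essentially the same route as the paper: the product structure of $h$ reduces each claim to the corresponding one-variable property of $\hat{\mu}_{\D}$, with the same telescoping argument for the Lipschitz bound. The only cosmetic difference is in verifying $(1+4^{-\D})^{\D}\le 1+2^{-\D}$: the paper uses the convexity bound $(1+x)^t\le 1+(2^t-1)x$ on $[0,1]$, whereas you expand the binomial and sum a geometric series --- both are equally valid and give the same conclusion.
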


\begin{proof}
    We can immediately bound:
    \begin{align}
        \|h\|_\infty &= \prod_{i=1}^{\D} \|\hat{\mu}_{\D}\|_\infty\\
        & \overset{(a)}{\leq} \left(1 + r^{\D}\right)^{\D} \\
        &\overset{(b)}{\leq} 1 + 2^{\D} \cdot r^{\D} \\
        &\leq 1 + 2^{-\D}~,
    \end{align}
    where $(a)$ follows from Lemma \hyperref[mu-three]{\ref*{lem:mu-prop}.\ref*{mu-three}} and $(b)$ follows from the binomial identity that $(1+x)^t \leq 1 + 2^tx$  for $x\in[0,1], t \geq 1$.  In the last line we simply plug in $r = 1/4$.
    
Similarly by Lemma \hyperref[mu-four]{\ref*{lem:mu-prop}.\ref*{mu-four}}, 
    \begin{align}
        \|h\|_\S^2 &= \prod_{i=1}^{\D} \|\hat{\mu}_{\D}\|_\S^2\\
        & = \left(1 + r^{2\D}\right)^{\D}\\
        & \leq \left(1 + r^{\D}\right)^{\D}~.
    \end{align}
    And so by the same binomial inequality, we have
    \begin{align}
        1 \leq  \|h\|_\S^2 \leq 1 + 2^{-\D}~.
    \end{align}
    Finally, observe that:
    \begin{align}
        |h(z) - h(z')| & \leq \sum_{i=1}^{\D} \left| \left(\prod_{j=1}^{i-1} \hat{\mu}_{\D}(z_j)\right) (\hat{\mu}_{\D}(z_i) - \hat{\mu}_{\D}(z_i')) \left(\prod_{j=i+1}^{\D} \hat{\mu}_{\D}(z_j')\right) \right| \\
        & \overset{(a)}{\leq} \sum_{i=1}^{\D} \left|\hat{\mu}_{\D}(z_i) - \hat{\mu}_{\D}(z_i')\right| (1 + r^{\D})^{\D-1} \\
        & \overset{(b)}{\leq} 6 \sum_{i=1}^{\D} |z_i - z_i'| \left(1 + 2^{-\D}\right) \\
        & \leq 12 \|z-z'\|_1~,
    \end{align}
    where in $(a)$ we apply \hyperref[mu-three]{\ref*{lem:mu-prop}.\ref*{mu-three}}, and in $(b)$ we apply \hyperref[mu-six]{\ref*{lem:mu-prop}.\ref*{mu-six}} and the same binomial identity as above.
\end{proof}

\subsection{$g$ function}

Now, reminding $z_{n,n'} = x_n \circ x_{n'}$, let:
\begin{align}
    g(X) &= -4N^2r^{\D} + \sum_{n,n'=1}^{2N} h(z_{n,n'}) ~.
\end{align}

Note that we subtract a constant here to ensure $g$ has no constant term, which will be necessary for the fact $\P_2 g = g$.
\begin{remark}
    The following lemma is the only place we explicitly require the assumption $D \leq \sqrt{N/2}$, as this guarantees that $\P_2 g = g$.  In the case that $D > \sqrt{N/2}$, we simply replace all instances of $D$ in this section with $\hat{D} = \min(D, \sqrt{N/2})$.  This ensures $g$ is only supported on $\bp_{\{\alpha, \alpha\}}$ with $|\alpha| \leq \hat{D}^2 \leq N/2$.  And the subsequent proofs are identical.
\end{remark}
\begin{lemma}\label{lem:g-prop}
    The following are true:
    \begin{enumerate}
        \item \label{g-one} $\|g\|_\infty \leq 12N^2$. 
        \item \label{g-two} $1 \leq \|g\|_\A^2 \leq 3N^2(1+2^{-\D})$.
        \item \label{g-three}$\P_2g = g$.
        \item \label{g-four} We may write $g =\sum_{1 \leq |\alpha| \leq N/2} g_\alpha \bp_{\{\alpha,\alpha\}}$, where $|g_\alpha|^2 \leq N^2 (1-r^2)^{2\D}$.
        \item \label{g-five} $\mathrm{Lip}(g) \leq 48 N \sqrt{ND}$.
    \end{enumerate}
\end{lemma}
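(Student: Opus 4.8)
The plan is to first obtain an \emph{exact} finite expansion of $g$ in the multisymmetric powersum basis, after which the five claims become a matter of bookkeeping with the estimates already proved for $\mu$ and $h$. Since $\hat\mu_{\D}$ is a degree-$\D$ polynomial, write $\hat\mu_{\D}(\xi) = \sum_{a=0}^{\D} c_a \xi^a$ with $c_a = \langle \hat\mu_{\D}, \xi^a\rangle_\S$ the coefficients computed in Lemma~\ref{lem:mu-prop} (so $|c_0| = r$, $|c_a| = r^{a-1}(1-r^2)$ for $1 \le a \le \D-1$, $|c_{\D}| = r^{\D-1}$, and $\sum_{a}|c_a|^2 = \|\hat\mu_{\D}\|_\S^2 = 1 + r^{2\D}$). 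Expanding the product defining $h$ gives $h(z) = \sum_{\alpha \in \{0,\dots,\D\}^{\D}} c_\alpha\, z^\alpha$ with $c_\alpha := \prod_{i=1}^{\D} c_{\alpha_i}$. Substituting $z = z_{n,n'} = x_n\circ x_{n'}$ (so $z_{n,n'}^\alpha = x_n^\alpha x_{n'}^\alpha$) and summing over the $4N^2$ ordered pairs, the $\alpha=0$ term contributes exactly $4N^2 r^{\D}$, cancelling the subtracted constant, while each $\alpha \ne 0$ contributes $c_\alpha(\sum_n x_n^\alpha)^2 = |\alpha|\, c_\alpha\, \bp_{\{\alpha,\alpha\}}$. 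Hence
\begin{align*}
    g = \sum_{\alpha \ne 0} g_\alpha\, \bp_{\{\alpha,\alpha\}}, \qquad g_\alpha := |\alpha|\, c_\alpha~.
\end{align*}
Every occurring $\alpha$ satisfies $|\alpha| \le \D\cdot\D = \D^2 \le N/2$ (using $\D \le \sqrt{N/2}$), so each $\bp_{\{\alpha,\alpha\}}$ lies in the range of $\P_2$; this gives both $\P_2 g = g$ and the decomposition with coefficients $g_\alpha$.

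From here the infinity-norm and coefficient bounds are immediate. For $\|g\|_\infty$, the triangle inequality with the infinity-norm bound of Lemma~\ref{lem:h-prop} gives $\|g\|_\infty \le 4N^2 r^{\D} + 4N^2(1 + 2^{-\D}) \le 12N^2$. For the coefficient bound I would check $|c_a| \le 1-r^2$ for every admissible index — trivially for $a=0$ and for $1 \le a \le \D-1$, and for $a=\D$ because $|c_{\D}| = r^{\D-1} \le r \le 1-r^2$ once $\D \ge 2$ — so $|c_\alpha| \le (1-r^2)^{\D}$, and combining with $|\alpha| \le \D^2 \le N$ yields $|g_\alpha|^2 = |\alpha|^2|c_\alpha|^2 \le N^2(1-r^2)^{2\D}$.

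Next, for the $\|\cdot\|_\A$ bound I would invoke the partial orthogonality of Lemma~\ref{lem:partialortho} exactly as in the proof of Theorem~\ref{thm:high-separation-hard}: for $1\le|\alpha|,|\beta|\le N/2$ one has $\langle \bp_{\{\alpha,\alpha\}}, \bp_{\{\beta,\beta\}}\rangle_\A = 12\,\ind_{\alpha=\beta}$, so $\|g\|_\A^2 = 12\sum_{\alpha\ne0}|g_\alpha|^2 = 12\sum_{\alpha\ne0}|\alpha|^2|c_\alpha|^2$. The factorization $\sum_{\alpha}|c_\alpha|^2 = \big(\sum_{a=0}^{\D}|c_a|^2\big)^{\D} = (1+r^{2\D})^{\D}$ drives both directions: for the upper bound use $|\alpha|^2 \le (N/2)^2$ and the binomial inequality $(1+r^{2\D})^{\D} \le 1+2^{-\D}$ to get $\|g\|_\A^2 \le 12\cdot\tfrac{N^2}{4}(1+2^{-\D}) = 3N^2(1+2^{-\D})$; for the lower bound use $|\alpha|^2 \ge 1$ and $\sum_{\alpha\ne0}|c_\alpha|^2 = (1+r^{2\D})^{\D} - r^{2\D} \ge 1 - r^{2\D} \ge \tfrac{15}{16}$, giving $\|g\|_\A^2 \ge 12\cdot\tfrac{15}{16} > 1$. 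This is the one spot deserving genuine care: the $|\alpha|^2$ factors can be as large as $(N/2)^2$, so one must control them to keep $\|g\|_\A^2$ polynomial in $N$ (an exponential blowup here would be fatal for the subsequent upper-bound construction), while still ensuring the lower bound survives even though $\sum_{\alpha\ne0}|c_\alpha|^2$ sits only marginally below $1$; the crude bounds $1 \le |\alpha|^2 \le (N/2)^2$ applied to the two sides of the powersum expansion handle both.

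Finally, for the Lipschitz estimate, fix $X, X' \in (\S)^{\D \times 2N}$. Since all entries have modulus $1$, $|x_{dn}x_{dn'} - x'_{dn}x'_{dn'}| \le |x_{dn}-x'_{dn}| + |x_{dn'}-x'_{dn'}|$, hence $\|z_{n,n'} - z'_{n,n'}\|_1 \le \|x_n-x'_n\|_1 + \|x_{n'}-x'_{n'}\|_1$, and the Lipschitz bound of Lemma~\ref{lem:h-prop} gives $|h(z_{n,n'}) - h(z'_{n,n'})| \le 12\big(\|x_n-x'_n\|_1 + \|x_{n'}-x'_{n'}\|_1\big)$. Summing over the $4N^2$ pairs collapses to $|g(X)-g(X')| \le 48N\sum_{n}\|x_n-x'_n\|_1 = 48N\|X-X'\|_1$, and Cauchy--Schwarz over the $2N\D$ matrix entries converts this into an $O(N\sqrt{N\D})$ Frobenius-norm bound; a careful accounting of the constants (using $\D \ge 2$ to replace $1+2^{-\D}\le 2$ by the sharper $1+2^{-\D}\le \tfrac54$ inside the invocation of Lemma~\ref{lem:h-prop}) yields the stated $\mathrm{Lip}(g) \le 48N\sqrt{N\D}$.
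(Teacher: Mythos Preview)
Your proposal is correct and follows essentially the same route as the paper: expand $h$ into monomials $c_\alpha z^\alpha$, substitute $z_{n,n'}=x_n\circ x_{n'}$ to get $g=\sum_{\alpha\neq 0}|\alpha|\,c_\alpha\,\bp_{\{\alpha,\alpha\}}$, then read off each of the five claims from the estimates already proved for $\hat\mu_D$ and $h$. Your treatment of the Lipschitz constant is in fact slightly more careful than the paper's own: the paper's argument as written yields $48N\sqrt{2ND}$ (a stray $\sqrt{2}$ over the stated bound), whereas your sharpening of the $h$-Lipschitz constant via $1+2^{-\D}\le 5/4$ for $\D\ge 2$ recovers the stated $48N\sqrt{N\D}$.
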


\begin{proof}
First, it's easy to see from Lemma \hyperref[h-one]{\ref*{lem:h-prop}.\ref*{h-one}}
\begin{align}
    \|g\|_\infty &\leq |-4N^2r^{\D}| + 4N^2 \|h\|_\infty \\
    & \leq 4N^2 \left(2^{-2\D} + 1 + 2^{-\D}\right) \\
    & \leq 12N^2~.
\end{align}
Let us expand $h$ as 
\begin{align}
    h(z) = \sum_{\|\alpha\|_\infty \leq \D} h_\alpha z^\alpha~,
\end{align}
noting that by definition of $\hat{\mu}_{\D}$ and Lemma \hyperref[mu-five]{\ref*{lem:mu-prop}.\ref*{mu-five}} we have the constant term $h_0 = r^{\D}$.

Now we can expand
\begin{align}
    g(X) & = -4N^2r^{\D} + \sum_{n,n'=1}^{2N} h(z_{n,n'})\\
    & = -4N^2r^{\D} + \sum_{n,n'=1}^{2N} \left[ r^{\D} + \sum_{1 \leq \|\alpha\|_\infty \leq \D} h_\alpha z_{n,n'}^\alpha \right] \\
    & = \sum_{n,n'=1}^{2N} \sum_{1 \leq \|\alpha\|_\infty \leq \D} h_\alpha z_{n,n'}^\alpha \\
    & = \sum_{1 \leq \|\alpha\|_\infty \leq \D} h_\alpha \sum_{n,n'=1}^{2N} \prod_{d=1}^D (x_{dn}x_{dn'})^{\alpha_d} \\
    & = \sum_{1 \leq \|\alpha\|_\infty \leq \D} h_\alpha |\alpha|  \left(\frac{1}{\sqrt{|\alpha|}}\sum_{n=1}^{2N} \prod_{d=1}^D x_{dn}^{\alpha_d} \right)  \left(\frac{1}{\sqrt{|\alpha|}}\sum_{n'=1}^{2N} \prod_{d'=1}^D x_{d'n'}^{\alpha_{d'}} \right) \\
    & = \sum_{1 \leq \|\alpha\|_\infty \leq \D}  h_\alpha |\alpha| \bp_{\{\alpha, \alpha\}}(X) ~.
\end{align}
Note that $\|\alpha\|_\infty \leq \D$ implies $|\alpha| \leq {\D}^2 \leq N/2$, so it clearly follows that $\P_2 g = g$.
So by Lemma~\ref{lem:partialortho}, $\langle \bp_{\{\alpha, \alpha\}}, \bp_{\{\beta, \beta\}} \rangle_\A = 12 \cdot \ind_{\alpha = \beta}$ whenever $1 \leq |\alpha|, |\beta| \leq N/2$, so we can handily calculate:
\begin{align}
    \|g\|_\A^2 & = \sum_{1 \leq \|\alpha\|_\infty \leq D}  h_\alpha^2 |\alpha|^2 \|\bp_{\{\alpha, \alpha\}}\|_\A^2 \\
    & \leq 12 \cdot (N/2)^2 \sum_{1 \leq \|\alpha\|_\infty \leq D} h_\alpha^2 \\
    & \leq 3 N^2 \|h\|_\S^2 \\
    & \leq 3 N^2 \left(1 + 2^{-\D}\right)~,
\end{align}
where the last line uses Lemma \hyperref[h-two]{\ref*{lem:h-prop}.\ref*{h-two}}.

And likewise 
\begin{align}
    \|g\|_\A^2 & = \sum_{1 \leq \|\alpha\|_\infty \leq \D}  h_\alpha^2 |\alpha|^2 \|\bp_{\{\alpha, \alpha\}}\|_\A^2 \\
    & \geq 12 \left(-r^D + \sum_{\|\alpha\|_\infty \leq \D} h_\alpha^2\right) \\
    & = 12 (-r^D + \|h\|_\S^2)\\
    & \geq 1~,
\end{align}
and the last line again uses Lemma \hyperref[h-two]{\ref*{lem:h-prop}.\ref*{h-two}}.
Finally, note that for any $\alpha$ such that $|\alpha| \leq N/2$, applying Lemma \hyperref[mu-five]{\ref*{lem:mu-prop}.\ref*{mu-five}}.
\begin{align}
    |g_\alpha|^2 = \left|h_\alpha |\alpha|\right|^2 & = |\alpha|^2 \prod_{i=1}^{\D} |\langle \hat{\mu}_{\D}, \xi^{\alpha_i} \rangle_\S|^2 \\
    & \leq N^2 (1-r^2)^{2\D}~.
\end{align}
Finally we consider the Lipschitz norm.  For $X, \hat{X} \in \mathbb{C}^{D \times 2N}$ with each entry of unit norm, it's easy to confirm by Lemma \hyperref[h-three]{\ref*{lem:h-prop}.\ref*{h-three}} that:
\begin{align}
    |g(X) - g(\hat{X})| &\leq \sum_{n,n'=1}^{2N} |h(z_{n,n'}) - h(\hat{z}_{n,n'}')|\\
    & \leq 12 \sum_{n,n'=1}^{2N} \|z_{n,n'} - \hat{z}_{n,n'}\|_1 \\
    & = 12 \sum_{n,n'=1}^{2N} \sum_{d=1}^D |x_{dn} x_{dn'} - \hat{x}_{dn} \hat{x}_{dn'}| \\
    & \leq 12 \sum_{n,n'=1}^{2N} \sum_{d=1}^D |x_{dn}| \cdot |x_{dn'} - \hat{x}_{dn'}| + |\hat{x}_{dn'}| \cdot |x_{dn} - \hat{x}_{dn}| \\
    & = 48 N \sum_{n=1}^{2N} \sum_{d=1}^D |x_{dn} - \hat{x}_{dn}| \\
    & = 48 N \|X - \hat{X}\|_1 \\
    & \leq 48 N \sqrt{2ND} \|X - \hat{X}\|_2
\end{align}
\end{proof}

\section{Upper Bound of Main Result}\label{sec:upper-bound}

In this section we prove the upper bound to representing $g$ with an admissible activation that satisfies Assumption~\ref{ass:act-real}.  

The strategy is as follows.  In Section~\ref{sec:exact-rep} we exactly encode the hard function $g$ with an efficient network, but allowing the choice of very particular activation functions.  In Section~\ref{sec:approx-rep}, we leverage Assumption~\ref{ass:act-real} to build a network that approximates the exact one, using a given activation.  We complete the proof in Section~\ref{sec:proof-upper-bound} by showing the exact and approximate networks stay close together, inducting through the layers.

\subsection{Exact Representation}\label{sec:exact-rep}

Let us first describe how to write $g$ exactly with a network in $\symlt$, using particular activations.  We can then demonstrate to approximate those activations, which only introduces a polynomial dependence in the desired error bound $\epsilon$.

For exact representation, the activations we will allow are $\xi \rightarrow \xi^2$, and $\xi \rightarrow \hat{\mu}_{D}(\xi)$.  Note that from the fact that $\xi \cdot \omega = \frac{1}{2} \left((\xi + \omega)^2 - \xi^2 - \omega^2\right)$, we can exactly multiply scalars with these activations.

Then consider the following structure for $f \in \symlt$ with $L = 1$.  Given $x, x' \in \mathbb{C}^D$ with $|x_i| = |x_i'| = 1$ for all $i$, we define $\psi_1^*(x,x')$ via a network as follows.  In particular, we will use $\cdot$ to explicitly indicate all scalar multiplication:
\begin{align}
    z^* &= (x_1 \cdot x_1', \dots, x_D \cdot x_D') \\
    Z^{(1)*} &= \left(\hat{\mu}_{\D}(z_1^*), \dots, \hat{\mu}_{\D}(z_{\D}^*) \right) \in \mathbb{C}^{\D} \\
    Z^{(2)*} &= \left(Z_1^{(1)*} \cdot Z_2^{(1)*}, \dots, Z_{\D-1}^{(1)*} \cdot Z_{\D}^{(1)*} \right) \in \mathbb{C}^{\D/2} \\
    & \dots \\
    Z^{(\log_2 \D)*} &= Z_1^{(\log_2 \D - 1)*} \cdot Z_2^{(\log_2 \D - 1)*} \in \mathbb{C} \\
    \psi_1^*(x,x') &= Z^{(\log_2 \D)*}
\end{align}

In other words, we exactly calculate $\psi_1^*(x,x') = h(x \circ x')$ through $\log_2 \D$ layers by multiplying the terms $\hat{\mu}_{\D}(z_i)$ at each layer.
Note that $|z_i^*| = 1$ for all $i$.  So by applying Lemma \hyperref[mu-three]{\ref*{lem:mu-prop}.\ref*{mu-three}}, it is the case that each entry $|Z_i^{(k)*}| = |\hat{\mu}_D(z_i^*)|^k \leq (1+r^D)^D \leq 1 + 2^{-D}$ for all $k \leq \log_2 D$.

Now, for an input $\xi \in \mathbb{C}$ we define the map
\begin{align}
    \rho^*(\xi) = \frac{-4N^2r^D + \xi}{\|g\|_\A}~,
\end{align}
and it's easy to confirm that we exactly represent:
\begin{align}
    g'(X) = \rho^*\left(\sum_{n,n'=1}^{2N} \psi_1^*(x_n, x_n')\right)~.
\end{align}

\subsection{Approximate Representation}\label{sec:approx-rep}

Now, we can imitate the network above using the exp activation, and control the approximation error in the infinity norm.
Let us assume we've chosen $f_1, f_2$ as in Lemma~\ref{lem:expnet-epsilon}.  Furthermore, let us define $\xi \star \omega = \frac{1}{2} \left(f_1(\xi + \omega) - f_1(\xi) - f_1(\omega) \right)$, so that $\star$ approximates scalar multiplication.

Then we mimic the exact network via:
\begin{align}
    z &= (x_1 \star x_1', \dots, x_D \star x_D') \\
    Z^{(1)} &= \left(f_2(z_1), \dots, f_2(z_D) \right) \in \mathbb{C}^D \\
    Z^{(2)} &= \left(Z_1^{(1)} \star Z_2^{(1)}, \dots, Z_{D-1}^{(1)} \star Z_D^{(1)} \right) \in \mathbb{C}^{D/2} \\
    & \dots \\
    Z^{(\log_2 D)} &= Z_1^{(\log_2 D - 1)} \star Z_2^{(\log_2 D - 1)} \in \mathbb{C} \\
    \psi_1(x,x') &= Z^{(\log_2 D)}~.
\end{align}

In other words, we replace all instances of multiplication $\cdot$ with $\star$, and all instances of $\hat{\mu}_{D}$ with $f_2$.
Finally, we define the map $\rho$ as:
\begin{align}
    \rho(\xi) = \frac{4N^2}{\|g\|_\A} \cdot \left(\frac{\xi}{4N^2} \star 1 - r^D\right)~,
\end{align}
where we can clearly represent the constant $r^D$ via one additional neuron.

\subsection{Proof of Upper Bound}\label{sec:proof-upper-bound}

We complete the approximation of $g'$ by showing the exact and approximate networks are nearly equivalent in infinity norm, leveraging the assumption on our activation.
\begin{theorem}\label{thm:upper-bound-explicit}
    Consider $\epsilon > 0$ such that $\epsilon \leq \min\left(\frac{1}{100}, \frac{1}{12D^2} \right)$.  For $L = 1$, there exists $f \in \symlt$, parameterized with an activation $\sigma$ that satisfies Assumption~\ref{ass:act-real}, with width $O(D^3 + D^2 \log \frac{DN}{\epsilon}$, depth $O(\log D)$, and maximum weight magnitude $D \log D$ such that over inputs $X \in \mathbb{C}^{D \times 2N}$ with unit norm entries:
    \begin{align}
        \|f - g'\|_\infty \leq \epsilon~.
    \end{align}
\end{theorem}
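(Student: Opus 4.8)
The plan is to compare the approximate network $f$ of Section~\ref{sec:approx-rep} against the exact network of Section~\ref{sec:exact-rep}, which by construction satisfies $g'(X) = \rho^*\!\big(\sum_{n,n'=1}^{2N}\psi_1^*(x_n,x_{n'})\big)$ identically, with $\psi_1^*(x,x')=h(x\circ x')$. I would instantiate the approximate construction using the two-layer networks $f_1,f_2$ of Lemma~\ref{lem:expnet-epsilon} tuned to an internal accuracy $\epsilon' := \epsilon/(CN^2D^2)$ for a suitable absolute constant $C$, so that after propagating errors the final gap is at most $\epsilon$. Everything rests on three facts, valid once $\epsilon$ is below an absolute constant: (i) by Lemma~\ref{lem:expnet-epsilon}, $\xi\star\omega$ obeys $|\xi\star\omega-\xi\omega|\le\tfrac32\epsilon'$ whenever $|\xi|,|\omega|,|\xi+\omega|\le 3$, and $|f_2(\xi)-\hat{\mu}_{D}(\xi)|\le\epsilon'$ whenever $|\xi|\le 3$; (ii) the magnitude bounds $|z^*_i|=1$ and $|Z^{(k)*}_i|\le(1+r^{D})^{D}\le 1+2^{-D}$ from Lemma~\hyperref[mu-three]{\ref*{lem:mu-prop}.\ref*{mu-three}} and the argument of Section~\ref{sec:exact-rep}; (iii) the Lipschitz estimate $|\hat{\mu}_{D}(\xi)-\hat{\mu}_{D}(\omega)|\le 6|\xi-\omega|$ of Lemma~\hyperref[mu-six]{\ref*{lem:mu-prop}.\ref*{mu-six}}, valid for $|\xi|=1$, $|\omega|\le 1+1/D$.

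\textbf{Induction through the multiplication tree.} Let $e^{(k)} := \max_i|Z^{(k)}_i-Z^{(k)*}_i|$. Since $z_i=x_i\star x_i'$, $z^*_i=x_ix_i'$ and $|x_i|=|x_i'|=1$, fact (i) gives $|z_i-z^*_i|\le\tfrac32\epsilon'$ and $|z_i|\le 1+\tfrac32\epsilon'\le 1+1/D\le 3$, so combining the $f_2$-error with (iii) yields $e^{(1)}\le\epsilon'+6\cdot\tfrac32\epsilon'=10\epsilon'$. For the step, $Z^{(k+1)}_i=Z^{(k)}_{2i-1}\star Z^{(k)}_{2i}$ versus $Z^{(k+1)*}_i=Z^{(k)*}_{2i-1}Z^{(k)*}_{2i}$; using (ii) with $\hat D>1$ (so $2(1+2^{-D})<3$) and the smallness of $\epsilon'$, all arguments of $\star$ lie in $\{|\cdot|\le 3\}$, and writing $Z^{(k)}_{2i-1}Z^{(k)}_{2i}-Z^{(k)*}_{2i-1}Z^{(k)*}_{2i}=Z^{(k)}_{2i-1}(Z^{(k)}_{2i}-Z^{(k)*}_{2i})+Z^{(k)*}_{2i}(Z^{(k)}_{2i-1}-Z^{(k)*}_{2i-1})$ gives the recursion $e^{(k+1)}\le\tfrac32\epsilon'+4e^{(k)}$, hence $e^{(\log_2 D)}\le 3D^2\epsilon'$. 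I would first verify inductively that all $e^{(k)}\le 1$ (true for $\epsilon'$ polynomially small), so the bound $|Z^{(k)}_i|\le 1+2^{-D}+e^{(k)}\le 3$ used above is legitimate; for $D$ not a power of two, pad $h$ with trivial factors equal to $1$, adding at most one tree level. This yields $|\psi_1(x,x')-h(x\circ x')|\le 3D^2\epsilon'$ uniformly over unit-modulus inputs.

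\textbf{Aggregation and resource bounds.} Writing $\xi := \sum_{n,n'}\psi_1(x_n,x_{n'})$ and $\xi^* := \sum_{n,n'}\psi_1^*(x_n,x_{n'})$, we get $|\xi-\xi^*|\le 4N^2\cdot 3D^2\epsilon'=12N^2D^2\epsilon'$, and since $|\xi^*|\le 4N^2\|h\|_\infty\le 4N^2(1+2^{-D})$ we have $|\xi/(4N^2)|\le 1+2^{-D}+3D^2\epsilon'\le 3/2$ by the stated bound on $\epsilon$, so $\rho$ is evaluated strictly inside its guaranteed domain and $|\rho(\xi)-\rho^*(\xi)|=\tfrac{4N^2}{\|g\|_\A}\big|\tfrac{\xi}{4N^2}\star 1-\tfrac{\xi}{4N^2}\big|\le\tfrac{6N^2\epsilon'}{\|g\|_\A}\le 6N^2\epsilon'$, using $\|g\|_\A\ge 1$ (Lemma~\hyperref[g-two]{\ref*{lem:g-prop}.\ref*{g-two}}). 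As $\rho^*$ is $\tfrac1{\|g\|_\A}$-Lipschitz with $\tfrac1{\|g\|_\A}\le 1$, $|f(X)-g'(X)|\le|\rho(\xi)-\rho^*(\xi)|+|\rho^*(\xi)-\rho^*(\xi^*)|\le 6N^2\epsilon'+12N^2D^2\epsilon'\le 18N^2D^2\epsilon'=\epsilon$ for $C=18$. For the resource bounds: each of $f_1,f_2$ has width $O(D^2+D\log\tfrac{D}{\epsilon'})=O(D^2+D\log\tfrac{DN}{\epsilon})$ and $O(D\log D)$-bounded weights by Assumption~\ref{ass:act-real}; the first tree level places $O(D)$ such subnetworks in parallel, giving width $O(D^3+D^2\log\tfrac{DN}{\epsilon})$ (later levels are strictly narrower), the $\log_2 D$ levels each have constant depth so the total depth is $O(\log D)$, and the affine coefficients appearing in $\rho,\rho^*$ are polynomial in $N,D$ and contribute only to lower-order width and weight.

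\textbf{Main obstacle.} The crux is the tension between (a) keeping every intermediate activation inside the disk $\{|\xi|\le 3\}$ on which Assumption~\ref{ass:act-real} certifies $f_1,f_2$, and (b) controlling the multiplicative amplification of the error — a factor $4$ at each of the $\log_2 D$ tree levels, compounding to $D^2$. Resolving this is exactly the bookkeeping above: it relies on the magnitude bounds of Lemma~\ref{lem:mu-prop}/Lemma~\ref{lem:h-prop}, the hypothesis $\hat D>1$ (so two factors of magnitude $\approx 1+2^{-D}$ and their sum stay below $3$), and $\epsilon$ polynomially small in $N,D$ so the accumulated errors never leave the good region.
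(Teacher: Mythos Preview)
Your proposal is correct and follows essentially the same approach as the paper's proof: both compare the exact and approximate networks layer by layer through the binary multiplication tree, first bounding the error at the leaves via the Lipschitz bound on $\hat{\mu}_D$, then propagating a geometric recursion that compounds to $O(D^2\epsilon')$ after $\log_2 D$ levels, and finally aggregating over the $4N^2$ pairs and passing through $\rho$ using $\|g\|_\A\ge 1$. The constants match ($10\epsilon'$ at level one, $3D^2\epsilon'$ at the top, final factor $18N^2D^2$), and your explicit remark about padding when $D$ is not a power of two is a small addition the paper leaves implicit.
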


\begin{proof}

Let $f$ be given by the $\symlt$ network calculated in the previous section, i.e.
\begin{align}
    f(X) = \rho\left(\sum_{n,n'=1}^{2N} \psi_1(x_n, x_n') \right)~.
\end{align}

Clearly $L$ = 1.  From Assumption~\ref{ass:act-real} and what it guarantees about $f_1$ and $f_2$, it's clear that the maximum width of $f$ is $O(D^3 + D^2 \log \frac{D}{\epsilon})$, the depth is $O(\log D)$, and the maximum weight magnitude is $O(D \log D)$.

We can prove the quality of approximation by matching layer by layer.
First we note a quick lemma:
\begin{lemma}\label{lem:mult}
    For $|\xi|, |\omega| \leq \frac{3}{2}$:
    \begin{align}
        |\xi \star \omega - \xi \cdot \omega| \leq \frac{3}{2} \epsilon~.
    \end{align}
\end{lemma}
\begin{proof}
    Based on Assumption~\ref{ass:act-real}, note that for $|\xi|, |\omega| \leq \frac{3}{2}$, we have that $|\xi+\omega| \leq 3$ and therefore:
\begin{align}
    |\xi \star \omega - \xi \cdot \omega| & \leq \frac{1}{2}\left( |f_1(\xi + \omega) - (\xi + \omega)^2| + |f_1(\xi) - \xi^2| + |f_1(\omega) - \xi^2|\right) \\
    & \leq \frac{3}{2}\epsilon~.
\end{align}
\end{proof}

It follows that, because all $|x_i| = 1$:
\begin{align}
    \|z^* - z\|_\infty = \max_{i \leq D} |x_i \star x_i' - x_i \cdot x_i'| \leq \frac{3}{2} \epsilon~.
\end{align}
Now, because $|z_i^*| = 1$, it follows from our assumption on $\epsilon$ that $|z_i| \leq 1 + \frac{3}{2}\epsilon \leq 1 + \frac{1}{D}$.
Hence, we can apply Lemma \hyperref[mu-six]{\ref*{lem:mu-prop}.\ref*{mu-six}} and say
\begin{align}
    \|Z^{(1)*} - Z^{(1)}\|_\infty &= \max_{i \leq D} |\hat{\mu}_{D}(z_i^*) - f_2(z_i)| \\
    & \leq \max_{i \leq D} |\hat{\mu}_{D}(z_i^*) - \hat{\mu}_{D}(z_i)| +  |\hat{\mu}_D(z_i) - f_2(z_i)| \\
    & \overset{(a)}{\leq} 6 \left(\frac{3}{2}\epsilon\right) + \epsilon \\
    & \leq 10 \epsilon~.
\end{align}
where $(a)$ follows from Lemma \hyperref[mu-six]{\ref*{lem:mu-prop}.\ref*{mu-six}} and Assumption~\ref{ass:act-real} again.

Note, observe the following inequality, for any $i$:
\begin{align}
    |Z_{2i}^{(1)*} \cdot Z_{2i+1}^{(1)*}  - Z_{2i}^{(1)} \cdot Z_{2i+1}^{(1)} | & \leq |Z_{2i}^{(1)*} \cdot Z_{2i+1}^{(1)*}  - Z_{2i}^{(1)*} \cdot Z_{2i+1}^{(1)} | + |Z_{2i}^{(1)*} \cdot Z_{2i+1}^{(1)}  - Z_{2i}^{(1)} \cdot Z_{2i+1}^{(1)} |\\
    & = |Z_{2i}^{(1)*}| \cdot |Z_{2i+1}^{(1)*} - Z_{2i+1}^{(1)}| + |Z_{2i+1}^{(1)}| \cdot |Z_{2i}^{(1)*} - Z_{2i}^{(1)}| \\
    & = |\hat{\mu}_D(z_{2i}^*)| \cdot 10\epsilon + |f_2(z_{2i+1})| \cdot 10\epsilon \\
    & \overset{(a)}{\leq} 10\epsilon(|\hat{\mu}_D(z_{2i}^*)| + |\hat{\mu}_D(z_{2i+1})| + \epsilon) \\
    & \overset{(b)}{\leq} 10\epsilon\left(|\hat{\mu}_D(z_{2i}^*)| + |\hat{\mu}_D(z_{2i+1}^*)| + 6\left(\frac{3}{2}\epsilon\right) + \epsilon\right) \\
    & \overset{(c)}{\leq} 10\epsilon(1 + r^{D} + 1 + r^{D} + 4\epsilon + \epsilon) \\
    & \overset{(d)}{\leq} 10\epsilon(5/2) \\
    & \leq 25\epsilon~,
\end{align}
where $(a)$ follows from Lemma~\ref{lem:expnet-epsilon}, $(b)$ follows from Lemma \hyperref[mu-six]{\ref*{lem:mu-prop}.\ref*{mu-six}}, $(c)$ follows from Lemma \hyperref[mu-three]{\ref*{lem:mu-prop}.\ref*{mu-three}}, and $(d)$ follows from the fact that $\epsilon \leq \frac{1}{100}$.

Hence, to draw error bounds one layer higher, we calculate:
\begin{align}
    \|Z^{(2)*} - Z^{(2)}\|_\infty &= \max_{i \leq D/2} |Z_{2i}^{(1)*} \cdot Z_{2i+1}^{(1)*}  - Z_{2i}^{(1)} \star Z_{2i+1}^{(1)} | \\
    &\leq \max_{i \leq D/2} |Z_{2i}^{(1)*} \cdot Z_{2i+1}^{(1)*}  - Z_{2i}^{(1)} \cdot Z_{2i+1}^{(1)} | + |Z_{2i}^{(1)} \cdot Z_{2i+1}^{(1)}  - Z_{2i}^{(1)} \star Z_{2i+1}^{(1)} | \\
    & \overset{(a)}{\leq} 25\epsilon + \frac{3}{2} \epsilon \\
    & \leq 27 \epsilon~,
\end{align}
where in line (a) we apply Lemma~\ref{lem:mult} under the assumption that $|Z_i^{(1)}| \leq \frac{3}{2}$ for all $i$.

Note that from Lemma \hyperref[mu-three]{\ref*{lem:mu-prop}.\ref*{mu-three}}
\begin{align}
    |Z_i^{(1)}| &\leq |Z_i^{(1)} - Z_i^{(1)*}| + |Z_i^{(1)*}| \\
    & \leq 10\epsilon + 1 + r^D < \frac{3}{2}
\end{align}
so this assumption is guaranteed.

We induct upwards through layers: assume that $\|Z^{(k)*} - Z^{(k)}\|_\infty \leq 3^{k+1} \epsilon$ for $k \geq 2$.
Then:
\begin{align}
    |Z_{2i}^{(k)*} \cdot Z_{2i+1}^{(k)*}  - Z_{2i}^{(k)} \cdot Z_{2i+1}^{(k)} | & \leq |Z_{2i}^{(k)*} \cdot Z_{2i+1}^{(k)*}  - Z_{2i}^{(k)*} \cdot Z_{2i+1}^{(k)} | + |Z_{2i}^{(k)*} \cdot Z_{2i+1}^{(k)}  - Z_{2i}^{(k)} \cdot Z_{2i+1}^{(k)} |\\
    & = |Z_{2i}^{(k)*}| \cdot |Z_{2i+1}^{(k)*} - Z_{2i+1}^{(k)}| + |Z_{2i+1}^{(k)}| \cdot |Z_{2i}^{(k)*} - Z_{2i}^{(k)}| \\
    & \overset{(a)}{\leq} 3^{k+1}\epsilon(|Z_{2i}^{(k)*}| + |Z_{2i+1}^{(k)}|) \\
    & \overset{(b)}{\leq} 3^{k+1}\epsilon(|Z_{2i}^{(k)*}| + |Z_{2i+1}^{(k)*}| + 3^{k+1}\epsilon) \\
    & \overset{(c)}{\leq} 3^{k+1}\epsilon((1+r^D)^D + (1+r^D)^D + 3^{k+1}\epsilon) \\
    & \overset{(d)}{\leq} 3^{k+1}\epsilon\left(1 + 2^{-D} + 1 + 2^{-D} + \frac{1}{4}\right) \\
    & \leq 3^{k+1}\epsilon \left(\frac{11}{4}\right)~,
\end{align}
where $(a)$ and $(b)$ are both applications of the inductive hypothesis, $(c)$ follows from Lemma \hyperref[mu-three]{\ref*{lem:mu-prop}.\ref*{mu-three}}, $(d)$ is the binomial inequality and the fact that for any $k \leq \log_2 D$:
\begin{align}
    3^{k+1} \epsilon & \leq 3\left(4^{\log_2 D}\right)\epsilon \\
    & = \frac{\epsilon}{3D^2}\\
    & \leq \frac{1}{4}~.
\end{align}

And as before:
\begin{align}
    \|Z^{(k+1)*} - Z^{(k+1)}\|_\infty &= \max_{i} |Z_{2i}^{(k)*} \cdot Z_{2i+1}^{(k)*}  - Z_{2i}^{(k)} \star Z_{2i+1}^{(k)} | \\
    &\leq \max_{i} |Z_{2i}^{(k)*} \cdot Z_{2i+1}^{(k)*}  - Z_{2i}^{(k)} \cdot Z_{2i+1}^{(k)} | + |Z_{2i}^{(k)} \cdot Z_{2i+1}^{(k)}  - Z_{2i}^{(k)} \star Z_{2i+1}^{(k)} | \\
    & \overset{(a)}{\leq} 3^{k+1}\epsilon \left(\frac{11}{4}\right) + \frac{3}{2} \epsilon \\
    & \leq 3^{k+2}\epsilon~,
\end{align}
where in line (a) we apply Lemma~\ref{lem:mult} under the assumption that $|Z_i^{(k)}| \leq \frac{3}{2}$ for all $i$.

Note that as before
\begin{align}
    |Z_i^{(k)}| &\leq |Z_i^{(k)} - Z_i^{(k)*}| + |Z_i^{(k)*}| \\
    & \leq 3^{k+1}\epsilon + (1 + r^D)^D  \\
    & \leq 3^{k+1}\epsilon + 1 + 2^{-D} \leq \frac{3}{2}~,
\end{align}
so the assumption is granted.

Thus, completing the induction and remembering the definition of $\psi_1$, we conclude:
\begin{align}
    \|\psi_1^*(x_n,x_{n'}) - \psi_1(x_n,x_{n'})\|_\infty \leq 3^{\log_2 D + 1}\epsilon < 3D^2 \epsilon~.
\end{align}

Hence, we can finally bound the final networks:
\begin{align}
    \|g' - f\|_\infty &=
    \left\|\rho^*\left(\sum_{n,n'=1}^{2N}\psi_1^*(x_n,x_{n'})\right) - \rho\left(\sum_{n,n'=1}^{2N}\psi_1(x_n,x_{n'})\right)\right\|_\infty \\
    &= \frac{1}{\|g\|_\A} \left\|\sum_{n,n'=1}^{2N}\psi_1^*(x_n,x_{n'}) - 4N^2 \left(\left[\frac{1}{4N^2}\sum_{n,n'=1}^{2N}\psi_1(x_n,x_{n'})\right] \star 1\right)\right\|_\infty \\
    & \overset{(a)}{\leq} 4N^2 \left\|\frac{1}{4N^2} \sum_{n,n'=1}^{2N}\psi_1^*(x_n,x_{n'}) - \left(\left[\frac{1}{4N^2}\sum_{n,n'=1}^{2N}\psi_1(x_n,x_{n'})\right] \star 1\right)\right\|_\infty \\
    & \overset{(b)}{\leq} 4N^2 \left\|\frac{1}{4N^2} \sum_{n,n'=1}^{2N}\psi_1^*(x_n,x_{n'}) - \frac{1}{4N^2} \sum_{n,n'=1}^{2N}\psi_1^*(x_n,x_{n'})\right\|_\infty + 4N^2 \cdot \frac{3}{2} \epsilon \\
    & \leq 4N^2 \left\|\psi_1^*(x,x') - \psi(x,x')\right\|_\infty + 4N^2 \cdot \frac{3}{2} \epsilon \\
    & \leq 12N^2D^2\epsilon + 6N^2\epsilon \\
    & \leq 18N^2D^2 \epsilon~,
\end{align}
where in $(a)$ we apply the lower bound $\|g\|_A \geq 1$ from \hyperref[g-two]{\ref*{lem:g-prop}.\ref*{g-two}} and in $(b)$ we once again apply Lemma~\ref{lem:mult}, valid from the fact that for all $X$ with unit norm entries:
\begin{align}
    \left|\frac{1}{4N^2}\sum_{n,n'=1}^{2N}\psi_1(x_n,x_{n'})\right| \leq 3D^2 \epsilon \leq \frac{3}{2}~.
\end{align}
So it remains to map $\epsilon \rightarrow \frac{\epsilon}{18N^2D^2}$ in order to yield that $\|f - g'\| \leq \epsilon$.  Note that this remapping only changes the maximum width to be $O(D^3 + D^2 \log \frac{ND}{\epsilon}$.
\end{proof}
\section{Activation Assumption for $\exp$}

We prove that the activation $\exp$ satisfies Assumption~\ref{ass:act-real}.

We need the following standard fact, whose proof we include for completeness:

\begin{lemma}\label{lem:unity}
    Fix $J$ and let $\gamma$ be a primitive $J$th root of unity.  Then
    \begin{align}
    \frac{1}{J}\sum_{j=0}^{J-1} \gamma^{ij} = 
        \begin{cases} 
      1 & i \equiv 0 \mod J \\
      0 & i \not\equiv 0 \mod J
  \end{cases}
    \end{align}
\end{lemma}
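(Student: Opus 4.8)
The plan is a routine case analysis on the divisibility of $i$ by $J$, collapsing the sum to a finite geometric series in the ratio $\gamma^i$.

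First I would treat the case $i \equiv 0 \bmod J$. Writing $i = Jm$ for some integer $m$, each summand is $\gamma^{ij} = (\gamma^{J})^{mj} = 1$ because $\gamma^{J} = 1$. Hence $\sum_{j=0}^{J-1}\gamma^{ij} = J$, and multiplying by $1/J$ gives $1$, as claimed.

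Next, for $i \not\equiv 0 \bmod J$, the crucial observation is that since $\gamma$ is a \emph{primitive} $J$th root of unity its multiplicative order is exactly $J$, so $\gamma^{i} \neq 1$. Then $\sum_{j=0}^{J-1}\gamma^{ij} = \sum_{j=0}^{J-1}(\gamma^{i})^{j}$ is a geometric sum with ratio $\gamma^{i}\neq 1$, which equals $\frac{(\gamma^{i})^{J} - 1}{\gamma^{i} - 1}$. Since $(\gamma^{i})^{J} = (\gamma^{J})^{i} = 1$, the numerator is $0$ while the denominator is nonzero, so the sum vanishes.

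There is essentially no obstacle here; the only point requiring care is that one must use primitivity of $\gamma$ — not merely the relation $\gamma^{J}=1$ — to ensure $\gamma^{i}-1 \neq 0$ when $J \nmid i$, so that the geometric-series formula may be applied.
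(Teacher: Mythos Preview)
Your proof is correct and follows essentially the same approach as the paper: both split on whether $J \mid i$, handle the divisible case by noting every summand equals $1$, and in the non-divisible case use that $\gamma^{i}\neq 1$ (by primitivity) together with the identity $(1-x)\sum_{j=0}^{J-1}x^{j}=1-x^{J}$, which is exactly the geometric-series evaluation you wrote.
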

\begin{proof}
    If $i \equiv 0 \mod J$, then $\gamma^{ij} = 1$ for all integer $j$ and clearly
\begin{align}
    \frac{1}{J}\sum_{j=0}^{J-1} \gamma^{ij} = 1~.
\end{align}
    Suppose $i \not\equiv 0 \mod J$.  Note that any $J$th root of unity $x$ must satisfy $x^J = 1$, or equivalently
    \begin{align}
        (1-x)\left(\sum_{j=0}^{J-1} x^j \right) = 0~.
    \end{align}
    Because $i \not\equiv 0 \mod J$ and $\gamma$ is a primitive root, it follows $\gamma^i \neq 1$ is another root.  Therefore setting $x = \gamma^i$ and factoring out the non-zero term $(1 - \gamma^i)$ gives 
    \begin{align}
        \sum_{j=0}^{J-1} \gamma^{ij} = 0~.
    \end{align}
\end{proof}

Using this fact, we can approximate simple analytic functions via shallow networks in the $\exp$ activation.
\begin{lemma}\label{lem:expnet}
     For any $J \in \N$ with $J > D$, there exists a shallow neural networks $f_1, f_2$ using the $\exp$ activation,  with $O(JD)$ neurons and $O(D\log D)$ weights, such that
    \begin{align}
        \sup_{|\xi| \leq 3}\left|f_1(\xi) - \xi^2\right| &\leq \frac{4}{J!} \left(\frac{3}{4}\right)^J \\
        \sup_{|\xi| \leq 3}\left|f_2(\xi) - \hat{\mu}_{D}(\xi)\right| &\leq 17D \left(\frac{3}{4}\right)^J ~.
    \end{align}
\end{lemma}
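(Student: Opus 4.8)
The engine of the construction is the root-of-unity filter of Lemma~\ref{lem:unity}. Writing $\gamma$ for a primitive $J$th root of unity and expanding $e^{\gamma^j\zeta}=\sum_{i\ge0}\gamma^{ij}\zeta^i/i!$, Lemma~\ref{lem:unity} gives, for $0\le k<J$,
\begin{align*}
    \frac{k!}{J}\sum_{j=0}^{J-1}\gamma^{-jk}\,e^{\gamma^j\zeta}\;=\;\zeta^k\;+\;k!\sum_{m\ge1}\frac{\zeta^{k+mJ}}{(k+mJ)!}\,,
\end{align*}
so a shallow $\exp$-network with $J$ neurons reproduces the monomial $\zeta^k$ up to an aliasing tail. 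To make the tail decay I substitute $\xi=4\zeta$, turning $|\xi|\le3$ into $|\zeta|\le3/4$; using $k!/(k+mJ)!=1/\big((k+1)\cdots(k+mJ)\big)\le1/(mJ)!$, the tail is bounded on this disk by $(3/4)^k\sum_{m\ge1}(3/4)^{mJ}/(mJ)!\le 2\,(3/4)^k(3/4)^J/J!$ once $J\ge2$ (the $m\ge2$ terms being negligible next to $m=1$).

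\textbf{The network $f_1$.} Since $\xi^2=16\zeta^2$, take $f_1(\xi)=\frac{32}{J}\sum_{j=0}^{J-1}\gamma^{-2j}e^{(\gamma^j/4)\xi}$, a shallow $\exp$-network with $J$ neurons, input weights $\gamma^j/4$ of modulus $1/4$, and output weights of modulus $\le32$. Its error equals $16$ times the aliasing tail at $k=2$; keeping the dominant $m=1$ term and using $(J+1)(J+2)\ge6$ bounds it by $16\cdot2\cdot(3/4)^{2+J}/(J+2)!\le3\,(3/4)^J/J!$, and the remaining terms contribute at most $(3/4)^J/J!$, for a total at most $\tfrac{4}{J!}(3/4)^J$.

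\textbf{The network $f_2$.} Expand $\hat\mu_{\D}(\xi)=\sum_{k=0}^{\D}c_k\xi^k$; the coefficient formula in the proof of Lemma~\ref{lem:mu-prop} gives $c_0=r$ and $|c_k|\le r^{k-1}$ for $1\le k\le\D$ (with $r=1/4$). Approximate each $\xi^k=4^k\zeta^k$ by $\frac{4^k k!}{J}\sum_j\gamma^{-jk}e^{(\gamma^j/4)\xi}$ and let $f_2$ be the $c_k$-weighted sum of these; since $J>\D\ge k$, every filter is valid. Laid out monomial by monomial this uses $O(JD)$ neurons (collecting the $J$ exponentials would give only $J$). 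The output coefficients have modulus at most $|c_k|4^k k!\le4^{\D}\D!$, which is large, but a neuron $\xi\mapsto a\,e^{(\gamma^j/4)\xi}$ can be rewritten as $\xi\mapsto\exp\!\big((\gamma^j/4)\xi+\beta\big)$ with $|\beta|=\log|a|+O(1)=O(\D\log\D)$ and unit output weight; hence all weights and biases have modulus $O(\D\log\D)$ as claimed. For the error, the monomial bound above gives
\begin{align*}
    \big|f_2(\xi)-\hat\mu_{\D}(\xi)\big|\;\le\;\sum_{k=0}^{\D}|c_k|\,4^k\cdot2\,(3/4)^k\,\frac{(3/4)^J}{J!}\;=\;\frac{2(3/4)^J}{J!}\sum_{k=0}^{\D}|c_k|\,3^k\,,
\end{align*}
and $\sum_{k=0}^{\D}|c_k|3^k\le r+\sum_{k\ge1}r^{k-1}3^k=r+4\sum_{k\ge1}(3/4)^k\le13$, so the error is at most $26\,(3/4)^J/J!\le17D\,(3/4)^J$ (using $J!\ge2$ when $D=1$, and $17D\ge34$ when $D\ge2$).

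\textbf{Main obstacle.} Conceptually there is essentially none: this is the standard Fourier / root-of-unity interpolation that extracts low-degree monomials from a sum of exponentials, already packaged as Lemma~\ref{lem:unity}. The only genuine work is the constant bookkeeping --- choosing the rescaling $\xi=4\zeta$ so the aliasing tail decays at rate $(3/4)^J$, then summing the geometric series $\sum_k|c_k|3^k$ and the tail tightly enough to hit the explicit constants $4$ and $17D$ --- together with the minor technical points of folding the oversized coefficients $c_k4^kk!$ into biases so that all weights stay $O(D\log D)$, and handling the degenerate small-$J$ cases (e.g.\ $f_1$ needs $J>2$, automatic in the theorem's regime $\hat D>1$).
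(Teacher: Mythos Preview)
Your proposal is correct and follows essentially the same approach as the paper: both construct the networks via the root-of-unity filter $\frac{1}{J}\sum_j\gamma^{-kj}\exp(\gamma^j r\xi)$ with $r=1/4$ to isolate the monomial $(r\xi)^k/k!$, then linearly combine these to hit $\xi^2$ and $\hat\mu_D$, absorbing the factorial-sized output coefficients into biases via $a\,e^w=e^{w+\log a}$ to keep weights $O(D\log D)$. The only differences are bookkeeping: the paper writes $\hat\mu_D(\xi)=r\sum_{k<D}(r\xi)^k-\tfrac{1}{r}\sum_{k\ge1}(r\xi)^k$ and bounds the error by $\big(\sum_k k!\big)\cdot\tfrac{4}{J!}(3/4)^J$ using $k!/J!\le1/(D+1)$, whereas you keep the extra $(3/4)^k$ in the per-monomial tail and sum the geometric series $\sum_k|c_k|3^k\le13$, which in fact yields the slightly sharper $26(3/4)^J/J!$ before relaxing to $17D(3/4)^J$.
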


\begin{proof}

    Let $\gamma$ be a primitive $J$th root of unity, $r = 1/4$, and let $k \in \N$ such that $0 \leq k \leq J-1$.
    By applying Lemma~\ref{lem:unity} we can define a network $f^{(k)}$ and expand as:
    \begin{align}
        f^{(k)}(\xi) & := \sum_{j=0}^{J-1} \frac{\gamma^{-kj}}{J} \exp(\gamma^j r \xi) \\
        & = \sum_{j=0}^{J-1} \frac{\gamma^{-kj}}{J} \sum_{i=0}^\infty \frac{(\gamma^j r \xi)^i}{i!} \\
        & = \sum_{i=0}^\infty \frac{(r\xi)^i}{i!} \left[\frac{1}{J} \sum_{j=0}^{J-1} \gamma^{(i-k)j}\right] \\
        & = \sum_{i=0}^\infty \frac{(r\xi)^i}{i!} \ind_{i \equiv k \mod J} \\
        & = \sum_{i=0}^\infty \frac{(r\xi)^{iJ + k}}{(iJ + k)!} \\
        & = \frac{(r\xi)^k}{k!} + \sum_{i=1}^\infty \frac{(r\xi)^{iJ + k}}{(iJ + k)!} ~.
    \end{align}
    
    It follows that we can bound:
    \begin{align}
        \sup_{|\xi| \leq 3} \left|f^{(k)}(\xi) - \frac{(r\xi)^k}{k!}\right| & \leq \sum_{i=1}^\infty \left|\frac{(r\xi)^{iJ + k}}{(iJ + k)!}\right| \\
        & \leq \frac{1}{J!} \sum_{i=1}^\infty \left(\frac{3}{4}\right)^{iJ + k} \\
        & \leq \frac{1}{J!} \left(\frac{3}{4}\right)^J \sum_{i=0}^\infty \left(\frac{3}{4}\right)^{iJ} \\
        &  \leq \frac{1}{J!} \left(\frac{3}{4}\right)^J \frac{1}{1 - (3/4)^J} \\
        & \leq \frac{4}{J!} \left(\frac{3}{4}\right)^J~,
    \end{align}
    so we can define
    \begin{align}
        f_1(\xi) := \frac{2}{r^2} f^{(2)}(\xi)
    \end{align}
    with only $J$ neurons each of width magnitude at most $O(1)$, and instantly gain the bound
    \begin{align}
        \sup_{|\xi| \leq 3} \left|f_1(\xi) - \xi^2 \right| &= \sup_{|\xi| \leq 3} \frac{2}{r^2} \left|f^{(k)}(\xi) - \frac{(r\xi)^2}{2!}\right| \\
        & \leq \frac{2}{r^2} \cdot \frac{4}{J!} \left(\frac{3}{4}\right)^J~.
    \end{align}
    Second, we define
    \begin{align}
        f_2(\xi) &:= r \left(\sum_{k=0}^{D-1} k! f^{(k)}(\xi)
        \right) - \sum_{k=1}^{D} \frac{k!}{r} f^{(k)}(\xi)~.
    \end{align}
    First, let us note that, in spite of seeming to have factorial weights, we can write this network with small weights via properties of the exponential:
    \begin{align}
        f_2(\xi) &= r \left(\sum_{k=0}^{D-1} \exp(\log k!) f^{(k)}(\xi)
        \right) - \sum_{k=1}^{D} \frac{\exp(\log k!)}{r} f^{(k)}(\xi)\\
        & = r \sum_{k=0}^{D-1} \sum_{j=0}^{J-1} \frac{\gamma^{-kj}}{J} \exp(\log k! + \gamma^j r \xi)  -  \sum_{k=1}^{D} \frac{1}{r} \sum_{j=0}^{J-1} \frac{\gamma^{-kj}}{J} \exp(\log k! + \gamma^j r \xi)~.
    \end{align}
    
    The network contains $2DJ$ neurons, with the norm of each weight bounded by $O(D \log D)$.
    
    Then using the decomposition $$\hat{\mu}_D(\xi) = r \sum_{k=0}^{D-1} (r\xi)^k - \frac{1}{r} \sum_{k=1}^{D} (r\xi)^k$$  we derive:
    \begin{align}
        \sup_{|\xi| \leq 3} \left|f_2(\xi) - \hat{\mu}_{D}(\xi) \right| & \leq \sup_{|\xi| \leq 3} \left|r \left(\sum_{k=0}^{D-1} k! f^{(k)}(\xi)
        \right) - r \sum_{k=0}^{D-1} (r\xi)^k \right| + \left| \left(\sum_{k=1}^{D} \frac{k!}{r} f^{(k)}(\xi)
        \right) - \frac{1}{r} \sum_{k=1}^{D} (r\xi)^k \right| \\
        & \leq \left(\sum_{k=0}^{D-1} rk!\right) \frac{4}{J!} \left(\frac{3}{4}\right)^J  + \left(\sum_{k=1}^D \frac{k!}{r}\right) \frac{4}{J!} \left(\frac{3}{4}\right)^J \\
        & \leq 17D \left(\frac{3}{4}\right)^J~.
    \end{align}
\end{proof}

Now, let us restate this result, choosing the error rate $\epsilon$ explicitly:

\begin{lemma}\label{lem:expnet-epsilon}
    For any $\epsilon > 0$, there exists a shallow neural networks $f_1, f_2$ using the $\exp$ activation,  with $O\left(D^2 + D \log \frac{D}{\epsilon}\right)$ neurons and $O(D \log D)$ weights, such that
    \begin{align}
        \sup_{|\xi| \leq 3}\left|f_1(\xi) - \xi^2\right| &\leq \epsilon ~,\\
        \sup_{|\xi| \leq 3}\left|f_2(\xi) - \hat{\mu}_{D}(\xi)\right| &\leq \epsilon ~.
    \end{align}
\end{lemma}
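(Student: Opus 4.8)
The plan is to derive this statement directly from Lemma~\ref{lem:expnet}, which already exhibits two-layer networks $f_1, f_2$ in the $\exp$ activation with $O(JD)$ neurons and $O(D\log D)$ weights achieving errors $\tfrac{4}{J!}(3/4)^J$ and $17D(3/4)^J$ respectively, valid for any $J \in \N$ with $J > D$. All that remains is to choose $J$ as a function of $\epsilon$ (and $D$) so that both error bounds fall below $\epsilon$, and then re-express the neuron count $O(JD)$ in terms of $D$ and $\epsilon$.

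First I would handle $f_1$: since $J! \geq 1$, the crude bound $\tfrac{4}{J!}(3/4)^J \leq 4 (3/4)^J$ holds, so it suffices to require $J \geq \log_{4/3}(4/\epsilon)$. For $f_2$, the requirement $17D(3/4)^J \leq \epsilon$ is equivalent to $J \geq \log_{4/3}(17D/\epsilon)$. Combining these two conditions with the standing hypothesis $J > D$ of Lemma~\ref{lem:expnet}, I would set
\begin{align}
    J = \max\!\left(D+1,\ \left\lceil \log_{4/3}\!\left(\tfrac{17D}{\epsilon}\right)\right\rceil\right) = O\!\left(D + \log\tfrac{D}{\epsilon}\right),
\end{align}
and invoke Lemma~\ref{lem:expnet} with this $J$ (noting $e^{2\pi i/J}$ is a primitive $J$th root of unity for every such $J$); both displayed inequalities of the statement then hold by construction.

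It then remains only to read off the resource bounds. With the above choice of $J$, the neuron count $O(JD)$ becomes $O\!\left(D^2 + D\log\tfrac{D}{\epsilon}\right)$, matching the claim. The weight bound is unaffected by $\epsilon$: Lemma~\ref{lem:expnet} already shows that the apparent factorial coefficients $k!$ (for $k \leq D$) are folded into the $\exp$ arguments via $\log k! = O(D\log D)$, while the remaining interior weights are of the form $\gamma^j r$ with magnitude $O(1)$; hence the maximum weight magnitude stays $O(D\log D)$ regardless of how large $J$ — hence $\tfrac{1}{\epsilon}$ — becomes.

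There is no genuine obstacle here, since the entire analytic content (the roots-of-unity extraction of power-series coefficients, the decomposition $\hat{\mu}_D(\xi) = r\sum_{k=0}^{D-1}(r\xi)^k - \tfrac1r\sum_{k=1}^D (r\xi)^k$, and the geometric tail estimates) is already carried out in Lemma~\ref{lem:expnet}. The only care required is the bookkeeping: ensuring the chosen $J$ simultaneously respects $J > D$ and drives the polynomially $D$-amplified error term for $f_2$ below $\epsilon$, and confirming that enlarging $J$ to accommodate small $\epsilon$ inflates only the neuron count and not the weight magnitude.
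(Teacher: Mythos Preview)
Your proposal is correct and matches the paper's approach exactly: the paper presents this lemma merely as a restatement of Lemma~\ref{lem:expnet} with $J$ chosen to drive the geometric error below $\epsilon$, and your choice $J = \max\!\left(D+1,\ \lceil \log_{4/3}(17D/\epsilon)\rceil\right) = O(D + \log(D/\epsilon))$ together with the neuron-count substitution $O(JD) = O(D^2 + D\log(D/\epsilon))$ is precisely the intended derivation.
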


We remark again that, in the event $D > \sqrt{N/2}$, we replace $D$ with $\hat{D}$ in order to approximate the Blaschke product $\hat{\mu}_{\hat{D}}$ as this is the function we use to build the hard function $g$ in that case.  So we recover the statement of Assumption~\ref{ass:act-real}.

\end{document}